\DeclareOldFontCommand{\rm}{\normalfont\rmfamily}{\mathrm}
\DeclareOldFontCommand{\sf}{\normalfont\sffamily}{\mathsf}
\DeclareOldFontCommand{\tt}{\normalfont\ttfamily}{\mathtt}
\DeclareOldFontCommand{\bf}{\normalfont\bfseries}{\mathbf}
\DeclareOldFontCommand{\it}{\normalfont\itshape}{\mathit}
\DeclareOldFontCommand{\sl}{\normalfont\slshape}{\@nomath\sl}
\DeclareOldFontCommand{\sc}{\normalfont\scshape}{\@nomath\sc}
\newcommand{\x}{\bold{x}}
\newcommand{\F}{{\mathcal{F}}}
\newcommand{\N}{\mathbb{N}}
\newcommand{\R}{\mathbb{R}}
\newcommand{\Rd}{\mathbb{R}^d}
\newcommand{\bi}{\boldsymbol{i}}
\newcommand{\bll}{\bm{\mathbf{\ell}}}
\newtheorem{theorem}{Theorem}
\newtheorem{lemma}{Lemma}
\newtheorem{definition}{Definition}
\theoremstyle{definition}
\newtheorem{remark}{Remark}
\renewcommand{\P}{{\mathcal{P}}}
\begin{document}

\begin{center}
{\LARGE \textbf{
Approximating smooth functions by deep neural networks with sigmoid activation function}}
\footnote{
Running title: {Approximation properties of deep neural networks}}
\vspace{0.5cm}

Sophie Langer\footnote{
Corresponding author. Tel: +49-6151-16-23371} 

{\textit{Fachbereich Mathematik, Technische Universit\"at Darmstadt, \\
Schlossgartenstr. 7, 64289 Darmstadt, Germany,
\\
email: langer@mathematik.tu-darmstadt.de}}

\end{center}
\vspace{0.5cm}

\begin{center}
October 8, 2018
%\today
\end{center}
\vspace{0.5cm}

\noindent
{\textbf{Abstract}}\\
We study the power of deep neural networks (DNNs) with sigmoid activation function. 
Recently, it was shown that DNNs approximate any $d$-dimensional, smooth function on a compact set
with a rate of order $W^{-p/d}$, where $W$ is the number of nonzero weights in the network and $p$ is the smoothness of the function. Unfortunately,  these rates only hold for a special class of sparsely connected DNNs. We ask ourselves if we can show the same approximation rate for a simpler and more general class, i.e., DNNs which are only defined by its width and depth. In this article we show that DNNs with fixed depth and a width of order $M^d$ achieve an approximation rate of $M^{-2p}$. As a conclusion we quantitatively characterize the approximation power of DNNs in terms of the overall weights $W_0$ in the network and show an approximation rate of $W_0^{-p/d}$. 
% In this article we quantitatively characterize the approximation power of DNNs in terms of width and depth of the network. In particular, we show that a network with fixed depth and a width of order $M^{d}$ achieves an approximation rate of $M^{-2p/d}$. 
 %As a corollary we can conclude an approximation rate of $W_0^{-p/d}$, where $W_0$ is the number of overall weights in the network. 
 This more general result finally helps us to understand which network topology guarantees a special target accuracy.

\vspace*{0.2cm}

\noindent{\textit{AMS classification:}} Primary 41A25, Secondary 82C32.

\vspace*{0.2cm}

\noindent{\textit{Key words and phrases:}}
deep learning,
full connectivity,
neural networks,
uniform approximation

\section{Introduction}
The outstanding performance of deep neural networks (DNNs) in the application on various tasks like
pattern recognition (\cite{LBH15, Sch15}), speech recognition (\cite{H12}) and game intelligence (\cite{SHMG16}) 
have led to an increasing interest in the literature in showing good theoretical properties of these networks. Till now there is 
still a lack of mathematical understanding of why neural networks with many hidden layers, also known as \textit{Deep Learning}, are so succesful in practice. Recently, there are different research topics to also prove the power of DNNs from a theoretical point of view. One key question we search an answer for is the approximation capacity of DNNs, i.e., we are interested in how multivariate functions can be approximated by DNNs. Several results already contributed to this: 
%(see, e.g., \cite{ES15, P16, ScTs98, T15, Y16, Y18, BK17, Sch17, YZ19, KL20, LS20}). 
The analysis of neural networks with one hidden layer resulted in the so--called universal approximation 
theorem, stating that any continuous function can be approximated arbitrarily well by 
single--hidden--layer networks provided the number of neurons is large enough (see, e.g., \cite{C89, HSW89}).
In case that the regarded function is differentiable, it could also been shown that the networks 
are able to approximate the first order derivative of the function (\cite{M13}). Nevertheless, the number of 
required neurons per layer have to be very large in some cases. An overview of approximation results by shallow neural networks is given in \cite{ScTs98}. \cite{MLP16} could show the same approximation result for deep neural networks as for shallow ones for a class of compositional functions, with the main advantage that in case of deep networks a lower number of training parameters is needed for the same degree of accuracy. \cite{ES15} showed, that three--layer networks with a small number of parameters are as efficient as really large two--layer networks. \cite{T15} stated examples of functions that cannot be efficiently represented by shallow neural networks but by deep ones. General approximation results concerning multilayer neural networks were presented in \cite{F89} for continuous functions and in \cite{N99} for functions and their derivatives. For smooth activation functions satisfying $\lim_{x \to - \infty} \sigma(x) = 0$ and $\lim_{x \to \infty} \sigma(x) =1$ and some further properties \cite{BK17} could show, that networks with two hidden layers are able to approximate any smooth function with an approximation error of size $W^{-p/d}$, where $W$ is the number of nonzero weights in the network (see Theorem 2 in \cite{BK17}). The functions under study are $(p,C)$--smooth, i.e., they fulfill the following definition:
\begin{definition}
\label{intde2} 
  Let $p=q+s$ for some $q \in \N_0$ and $0< s \leq 1$.
A function $m:\R^d \rightarrow \R$ is called
$(p,C)$-smooth, if for every $\bm{\alpha}=(\alpha_1, \dots, \alpha_d) \in
\N_0^d$
with $\sum_{j=1}^d \alpha_j = q$ the partial derivative
$\partial^q m/(\partial x_1^{\alpha_1}
\dots
\partial x_d^{\alpha_d}
)$
exists and satisfies
\[
\left|
\frac{
\partial^q m
}{
\partial x_1^{\alpha_1}
\dots
\partial x_d^{\alpha_d}
}
(\x)
-
\frac{
\partial^q m
}{
\partial x_1^{\alpha_1}
\dots
\partial x_d^{\alpha_d}
}
(\bold{z})
\right|
\leq
C
\| \x-\bold{z} \|^s
\]
for all $\x,\bold{z} \in \R^d$, where $\Vert\cdot\Vert$ denotes the Euclidean norm.
\end{definition}
A similar result for networks with ReLU activation functions was presented by \cite{Sch17} (see Theorem 5 in \cite{Sch17}). Unfortunately, both results only hold for a special class of sparsely connected DNNs, where the number of nonzero weights $W$ is much smaller than the number of overall weights $W_0$. A simpler class of DNNs, so-called fully connected DNNs, was analyzed by \cite{KL20} and \cite{YZ19}. Those networks are only defined by its width and depth and do not depend on a further sparsity constraint. \cite{KL20} divided their work in two different cases: 1) DNNs with varying width and logarithmic depth 2) DNNs with varying depth and fixed width. For the first case they derived an approximation rate of $W_0^{-p/d}$ and in their second case they even improved this rate to $W_0^{-2p/d}$. The second case was also shown in \cite{YZ19}.  \cite{LS20} went one step futher and presented a result where both width and depth are varied simultaneously. Beside \cite {BK17}, all the above mentioned results focus on the ReLU activation function. While \cite{BK17} already shows an approximation error of order $W^{-p/d}$ for sigmoidal networks, this result only holds for a special class of DNNs. In this article we define our DNNs only by its depth and width and show that for a fixed-depth DNN with bounded weights and width of order $M^d$ we can achieve an approximation rate of $M^{-2p}$. This in turn generalizes the result of \cite{BK17}, since this also proves an rate of $W_0^{-p/d}$ in terms of the overall number of weights $W_0$. The topology
of our networks is clearly defined, and can therefore been seen as a guideline of how 
the network architecture have to be chosen to receive 
good approximation results for different function classes. 
In the proofs we generalize the techniques from the proof of the approximation results in \cite{KL20} to smooth activation function. This enables us to derive the same
approximation results as in \cite{BK17} but with respect to the supremum norm on a cube. 
\\
\\
Our class of DNNs is defined as follows: As an activation function $\sigma: \mathbb{R} \to [0,1]$ we choose the sigmoid activation function
\begin{equation}
  \label{inteq4}
\sigma(x)=\frac{1}{1+\exp(-x)} \quad (x \in \R).
\end{equation} 
The network architecture $(L, \textbf{k})$ depends on a positive integer $L$ called the \textit{number of hidden layers} or \textit{depth} and a \textit{width vector} $\textbf{k} = (k_1, \dots, k_{L}) \in \mathbb{N}^{L}$ that describes the number of neurons in the first, second, $\dots$, $L$-th hidden layer. A feedforward DNN with network architecture $(L, \textbf{k})$ and sigmoid activation function $\sigma$ is a real-valued function defined on $\mathbb{R}^d$ of the form
\begin{equation}\label{inteq1}
f(\bold{x}) = \sum_{i=1}^{k_L} c_{1,i}^{(L)} f_i^{(L)}(x) + c_{1,0}^{(L)}, 
\end{equation}
for some $c_{1,0}^{(L)}, \dots, c_{1,k_L}^{(L)} \in \mathbb{R}$ and for $f_i^{(L)}$'s recursively defined by
\begin{equation}
  \label{inteq2}
f_i^{(s)}(\bold{x}) = \sigma\left(\sum_{j=1}^{k_{s-1}} c_{i,j}^{(s-1)}f_j^{(s-1)}(x) + c_{i,0}^{(s-1)} \right)
\end{equation}
for some $c_{i,0}^{(s-1)}, \dots, c_{i, k_{s-1}}^{(s-1)} \in \mathbb{R}$,
$s\in \{2, \dots, L\}$
and
\begin{equation}
  \label{inteq3}
f_i^{(1)}(\bold{x}) = \sigma \left(\sum_{j=1}^d c_{i,j}^{(0)} x^{(j)} + c_{i,0}^{(0)} \right)
\end{equation}
for some $c_{i,0}^{(0)}, \dots, c_{i,d}^{(0)} \in \mathbb{R}$. 
The space of DNNs with depth
$L$, width $r$ and all coefficients bounded by $\alpha$ is defined by
\begin{align}\label{F}
  \mathcal{F}(L, r, \alpha) = \{ &f \, : \,  \text{$f$ is of the form } \eqref{inteq1}
  \text{ with }
k_1=k_2=\dots=k_L=r \notag \\
& \text{and} \
|c_{i,j}^{(\ell)}| \leq \alpha \ \text{for all} \ i,j,\ell
\}.
\end{align}
Here it is easy to see that DNNs of the class $\mathcal{F}(L,r,\alpha)$ are not restricted by a further sparsity constraint and are only defined by the depth $L$, width $r$ and a bound $\alpha$ for the weights in the network.

\subsection{Notation}
Throughout the paper, the following notation is used:
The sets of natural numbers, natural numbers including $0$ and real numbers
are denoted by $\N$, $\N_0$ and $\R$, respectively. 
%For $z \in \R$, we denote
%the smallest integer greater than or equal to $z$ by
%$\lceil z \rceil$, and 
For $z \in \R$ we set $z_+=\max\{z,0\}$.
Vectors are denoted by bold letters, e.g., $\bold{x} = (x^{(1)}, \dots, x^{(d)})^T$. We define
$\bold{1}=(1, \dots, 1)^T$ and $\bold{0} = (0, \dots, 0)^T$. A $d$-dimensional multi-index is a $d$-dimensional vector 
$\bold{j} = (j^{(1)}, \dots, j^{(d)})^T \in \N_0^d$. As usual, we define
\begin{align*}
&\|\bold{j}\|_1 = j^{(1)} + \dots + j^{(d)}, \quad \bold{x}^{\bold{j}} = (x^{(1)})^{j^{(1)}} \cdots (x^{(d)})^{j^{(d)}}, \\ & \bold{j}! = j^{(1)}! \cdots j^{(d)}!,\, \quad \partial^{\bold{j}} =\frac{\partial^{j^{(1)}}}{\partial(x^{(1)})^{j^{(1)}}} \cdots \frac{\partial^{j^{(d)}}}{\partial(x^{(d)})^{j^{(d)}}}.
\end{align*}
Let $D \subseteq \R^d$ and let $f:\R^d \rightarrow \R$ be a real-valued
function defined on $\R^d$.
%We write $\bold{x} = \arg \min_{\bold{z} \in D} f(\bold{z})$ if
%$\min_{\bold{z} \in \D} f(\bold{z})$ exists and if
%$\x$ satisfies
%$\x \in D$ and $f(\x) = \min_{\bold{z} \in \D} f(\bold{z})$. 
The Euclidean, the supremum and the $L_1$ norms of $\x \in \Rd$ 
are denoted by $\|\x\|$, $\|\x\|_\infty$ and $\|\x\|_1$, respectively. 
For $f:\R^d \rightarrow \R$
\[
\|f\|_\infty = \sup_{\x \in \R^d} |f(\x)|
\]
is its supremum norm, and the supremum norm of $f$
on a set $A \subseteq \R^d$ is denoted by
\[
\|f\|_{\infty,A} = \sup_{\x \in A} |f(\x)|.
\]
Furthermore we define $\|\cdot\|_{C^q(A)}$ of the smooth function space $C^q(A)$ by
\[
\|f\|_{C^q(A)} := \max\left\{\|\partial^{\bold{j}} f\|_{\infty, A}: \|\bold{j}\|_1 \leq q, \bold{j} \in \N^d\right\}
\]
for any $f \in C^q(A)$.

\subsection{Outline}
The main body of this article consists of Section \ref{se2}, \ref{se3} and \ref{se4}.
Section \ref{se2} presents our main result on the approximation of 
$(p,C)$--smooth functions by DNNs. Section \ref{se3} deals with the approximation properties of our neural networks for simpler function classes.
%approximation of simpler function classes by fully connected deep neural networks. 
In Section \ref{se4} the proof of our main result is given.

\section{Main result}
\label{se2}
%Throughout the paper, we consider DNNs
%with sigmoidal activation function
%\begin{align*}
%\sigma(x) = \frac{1}{1+\exp(x)} \quad (x \in \R).
%\end{align*}
%In some of the approximation results the following definition on the activation function is sufficient. 
%%activation function fulfilling the following definition
%%
%\begin{definition}
%  \label{se2de1}
%  A function $\sigma : \mathbb{R} \to [0, 1]$ is called 2-admissible,
%  if it is nondecreasing and Lipschitz continous and if, in addition,
%  the following three conditions are satisfied:
%\begin{itemize}
%\item[\rm{(i)}] The function $\sigma$ is three times continously differentiable with bounded derivatives.
%\item[\rm{(ii)}] A point $t_{\sigma} \in \mathbb{R}$ exists, where all derivatives up to the order $2$ of $\sigma$ are different from zero.
%\item[\rm{(iii)}] If $y > 0$, the relation $|\sigma(y) - 1| \leq 1/y $ holds. If $y < 0$, the relation $|\sigma(y)| \leq 1/|y|$ holds. \label{3}
%\end{itemize}
%\end{definition}
%It is easy to see that the logistic squasher (\ref{inteq4})
%is $2$--admissible (cf., e.g., \cite{BK17}).
%\newline
%\newline
Our main result, concerning the approximation of $(p,C)$--smooth functions by DNNs with a fixed depth and a bound for all coefficients of the network, is the following: 

\begin{theorem}
  \label{th1}
  Let $1 \leq a < \infty$. Let $p=q+s$ for some $q \in \mathbb{N}_0$ and $s \in (0,1]$,
  let $C \geq 1$. Let $f: \mathbb{R}^d \to \mathbb{R}$ be a $(p, C)$-smooth function, which satisfies
  \begin{equation}\label{th2eq1}
  \|f\|_{C^{q}([-2a, 2a]^d)} \leq c_1
  \end{equation}
  for some constant $c_1 > 0$.
 Let $\sigma : \mathbb{R} \to [0, 1]$ be the sigmoid activation function \linebreak $\sigma(x) = 1/(1+\exp(-x))$. For any $M \in \mathbb{N}$ sufficiently large (independent of the size of $a$, but
     \begin{align*}
      M^{2p} \geq
     2c_{2} \left(\max\left\{a, \|f\|_{C^q([-a,a]^d)}\right\}\right)^{5q+3} \ \mbox{and} \ M^{2p} \geq  \max\left\{c_{3}, 2^d, 12d\right\} 
    \end{align*}
     must hold) we set
     \begin{itemize}
\item[\rm{(i)}] $L = 8+\lceil \log_2(\max\{d, q+1\})\rceil$
\item[\rm{(ii)}] $r = 2^d\left(\max\left\{\left(\binom{d+q}{d} + d\right)M^d (2+2d)+d, 4(q+1) \binom{d+q}{d}\right\} +M^d(2d+2)+12d\right)$
\item[\rm{(iii)}] $\alpha = c_4 \left(\max\left\{a, \|f\|_{C^q([-a,a]^d)}\right\}\right)^{12} e^{6 \times 2^{2(d+1)+1}ad} M^{10p+2d+10}.$
\end{itemize}
Then a neural network $f_{net}$ of the network class $\mathcal{F}\left(L, r, \alpha\right)$ exists such that
\begin{equation*}
\|f_{net}-f\|_{\infty, [-a,a]^d} \leq \frac{c_5\left(\max\left\{a, \|f\|_{C^q([-a,a]^d)}\right\}\right)^{5q+3}}{M^{2p}}
\end{equation*}
holds for a constant $c_5 >0$.
\end{theorem}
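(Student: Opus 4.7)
The plan is to split the approximation into three components: a piecewise Taylor expansion of $f$, a sigmoid realization of localized bump functions, and sigmoid gadgets for approximate multiplication and monomial evaluation. To begin, I partition $[-a,a]^d$ into $M^d$ congruent subcubes $C_\bi$ centered at $\mathbf{t}_\bi$ and replace $f$ on each $C_\bi$ by its Taylor polynomial $T_\bi$ of degree $q$ around $\mathbf{t}_\bi$; the $(p,C)$-smoothness from \autoref{intde2} gives $\|f - T_\bi\|_{\infty, C_\bi} = O((a/M)^p)$, and $T_\bi$ has $\binom{d+q}{d}$ monomials. The target network is essentially $f_{net}(\x) \approx \sum_\bi \varphi_\bi(\x)\,\tilde T_\bi(\x)$, where $\varphi_\bi$ is a sigmoid bump approximating $\mathds{1}_{C_\bi}$ (or, in the partition-of-unity version, a tensor product of $d$ one-dimensional hat functions, which accounts for the $2^d$ factor appearing in $r$: each point of $[-a,a]^d$ sits in the support of up to $2^d$ overlapping hats) and $\tilde T_\bi$ is a sigmoid realization of $T_\bi$.

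To build the gadgets, I would exploit that $\sigma(Rz)$ approximates the Heaviside step with exponentially small error in $R$, so each $\varphi_\bi$ is a product of $2d$ rescaled sigmoids, assembled through a balanced binary tree of approximate two-argument multipliers of depth $\lceil\log_2(2d)\rceil$; this is one source of the overall depth $L\sim\log\max\{d,q+1\}$. The multiplier itself is built from a constant number of sigmoid units by combining evaluations $\sigma(\lambda_j z)$ at several scales $\lambda_j$ and bias points: since $\sigma$ has non-vanishing second derivative away from $0$, a finite-difference scheme approximates $z\mapsto z^2$ on a bounded interval to any prescribed accuracy, and the polarization identity $xy=\tfrac14((x+y)^2-(x-y)^2)$ then yields multiplication. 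This is the sigmoid analogue of the polynomial gadget used by \cite{KL20}. With the multiplier in hand, each monomial $\x^\bj$ with $\|\bj\|_1\le q$ is obtained by a binary tree of depth $\lceil\log_2(q+1)\rceil$; the Taylor polynomial $\tilde T_\bi$ is then assembled by a single affine combination, and $M^d$ such blocks running in parallel give the stated width $r$.

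For the error analysis, the triangle inequality yields
\begin{equation*}
\|f_{net}-f\|_{\infty,[-a,a]^d}\leq \max_\bi\|f-T_\bi\|_{\infty,C_\bi}+\sum_\bi\bigl(\|\varphi_\bi-\mathds{1}_{C_\bi}\|_\infty\|T_\bi\|_\infty+\|\tilde T_\bi-T_\bi\|_{\infty,C_\bi}+E_{\text{mult}}\bigr),
\end{equation*}
and achieving the sharper $M^{-2p}$ rate rather than the naive $M^{-p}$ forces every localized contribution to be $O(M^{-2p})$ individually. That tightness is what drives $\alpha$ upwards: the bump sharpness $R$, the scales $\lambda_j$ in the squaring gadget, and the amplification factors compensating for the small sigmoid Taylor coefficients at the chosen base points must all grow polynomially in $M$ (which produces the factor $M^{10p+2d+10}$), while the exponential growth $e^{6\cdot 2^{2(d+1)+1}ad}$ reflects that bump transitions have to be narrower than the subcube side $\sim a/M$ uniformly over $[-a,a]^d$. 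The main obstacle will be carefully tracking how errors propagate through the $O(\log\max\{d,q+1\})$ layers of the multiplier trees and verifying that the width and weight bounds in items (ii) and (iii) really suffice to carry the $M^d$ parallel Taylor approximations together with the auxiliary bump and multiplier gadgets; once that accounting is done, the final bound collapses to $c_5(\max\{a,\|f\|_{C^q}\})^{5q+3}M^{-2p}$ by summing the pieces.
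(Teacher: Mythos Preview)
Your proposal has a genuine gap at the very first step, and it is the step that determines the rate. You partition $[-a,a]^d$ into $M^d$ cubes of side $\sim a/M$ and approximate $f$ on each by its degree-$q$ Taylor polynomial; you then correctly record that $(p,C)$-smoothness gives $\|f-T_{\bi}\|_{\infty,C_{\bi}}=O((a/M)^p)$. But that term sits as the first summand in your own triangle inequality, and it is of order $M^{-p}$, not $M^{-2p}$. No amount of accuracy in the bump functions or the multiplication gadgets can beat the Taylor error on the cubes you have chosen, so your construction as written yields at best $M^{-p}$ with width $\sim M^d$.

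The paper's proof hinges on a \emph{two-scale} localization that you are missing. One works with a coarse partition $\mathcal{P}_1$ into $M^d$ cubes \emph{and} a fine partition $\mathcal{P}_2$ into $M^{2d}$ cubes, and the Taylor expansion is taken around corners of the \emph{fine} cubes (side $\sim a/M^2$), which is what produces the $M^{-2p}$ Taylor error. The point of the coarse partition is to keep the width at $O(M^d)$: the network first determines which $\mathcal{P}_1$-cube contains $\x$ (requiring $M^d$ parallel indicator blocks), then, using the relative position inside that cube, determines which of the $M^d$ sub-cubes of $\mathcal{P}_2$ contains $\x$ (again $M^d$ parallel blocks). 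This recursive localization (Lemma~6 and Lemma~7 in the paper) is the essential idea that lets one reach accuracy $M^{-2p}$ while keeping width $\sim M^d$; a direct approach with $M^{2d}$ parallel Taylor blocks would need width $\sim M^{2d}$ and thus give only $W_0^{-p/(2d)}$.

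Your interpretation of the $2^d$ factor as coming from a tensor-product partition of unity is correct in spirit (the paper uses $2^d$ shifted copies of $\mathcal{P}_2$ and linear B-spline weights $w_v$ summing to one), but this device in the paper serves a different purpose than you suggest: the basic Taylor network $f_{\mathrm{net},\mathcal{P}_2}$ is only accurate \emph{away} from the boundaries of the fine cubes, and the B-spline weights, combined with an auxiliary network that forces the output to vanish near those boundaries (Lemmas~10--11), are what turn the interior estimate into a genuine sup-norm bound. Your sketch does not address this boundary issue either.
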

%Below we show our approximation result in terms of the overall weights $W_0$ of the network.
\begin{remark}
\autoref{th1} shows, that there exists a DNN with 
\begin{align*}
W_0 = (d+1)r+ (L-1)r(r+1)+(r+1) = c_6M^{2d}
\end{align*}
weights that achieves the supremum norm approximation rate $W^{-p/d} = M^{-2p}$.
\end{remark}
%\noindent
%\begin{remark}
%Note that the total number of parameters in the network is here bounded by $d + (L-1) \cdot r^2 + r = $
%\end{remark}
%In the proof of Theorem \ref{th2} we will need the following
%auxiliary results.
%\begin{remark}
%Note that the neural network in \autoref{th1} has $W_0=c_{6}M^{2d}$ parameters and achieves the supremum norm approximation rate $W_0^{-p/d} = M^{-2p}$. 
%\end{remark}
\section{Approximating different function classes by DNNs}
\label{se3}
\subsection{Some computing operations for DNNs}
In the following approximation results we often combine subnetworks for simpler tasks to construct networks for more complex tasks, i.e., the combination of networks approximating a multiplication finally leads to a network approximating monomials. To get a better understanding of how our networks are put together, we give a short overview of two different computing operations.\\
\\
\textit{Combined neural network:} Let $f \in \mathcal{F}(L_f, r_f, \alpha_f)$ and $g \in \mathcal{F}(L_g, r_g, \alpha_g)$ with $L_f, L_g, r_f, \linebreak r_g \in \N$ and $\alpha_f, \alpha_g \in \R$, then we call $f \circ g$ the \textit{combined network}, which is contained in the network class $\mathcal{F}(L_f+L_g, \max\{r_f, r_g\}, \max\{\alpha_f,1\}\max\{\alpha_g, 1\})$. Here we ``melt'' the output layer of $g$ with the input layer of $f$. This is why we multiply the bound of the weights of both networks in the combined network.\\
\textit{Parallelized neural network:} Let $f \in \mathcal{F}(L, r_f, \alpha_f)$ and $g \in \mathcal{F}(L, r_g, \alpha_g)$ be two neural networks with the same number of hidden layers $L \in \N$. Then we call $(f,g)$ the \textit{parallelized network}, which computes $f$ and $g$ parallel in a joint network. Thus $(f,g)$ is a network with depth $L$, width $r_f + r_g$ and a bound of the coefficients of size $\max\{\alpha_f, \alpha_g\}$. 
%In case that a network $f \in \mathcal{F}(L,r,\alpha)$ is a vector, we denote bei $f^{(j)}$ $(j \in \{1, \dots, d\})$ the $j$-th component of this network. 
%\\
%\\
%In the proofs of our results we use the following structure: 
%\begin{enumerate}
%\item Construction of the network
%\item Bound of the weights
%\item Approximation error of the network
%\item Bound of the network's value
%\end{enumerate}
%%------------------------------------------------Theorem 1-----------------------------------------------
%\subsection{Approximation of smooth functions by neural networks}
%\label{se4.1}
%The aim of this subsection is to prove the following
%result concerning the approximation of $(p,C)$--smooth functions by fully connected multilayer feedforward neural networks with a fixed number of neurons per layer and a bound for all coefficients of the network.
%
\subsection{Approximation of identity and multiplication}
%-----------------------------------------------------------------Lemma 1---------------------------------
The following lemma presents a neural network that approximates the identity function. 
\begin{lemma}
  \label{le1}
Let $\sigma:\R \rightarrow \R$ be a function, let $R \geq 1$ and $a>0$. Assume that $\sigma$ is two times continuously differentiable and
let
$t_{\sigma,id} \in \R$ be such that $\sigma^\prime(t_{\sigma,id}) \neq
0$.
Then there exists a neural network $f_{id} \in \F(1,1,c_{7} R)$
%\[
%f_{id}(x)
%=
%\frac{R}{\sigma^\prime(t_{\sigma,id})}
%\left(
%\sigma \left(
%\frac{x}{R} + t_{\sigma,id}
%\right)
%-
%\sigma(t_{\sigma,id})
%\right)
%\in \F(1,1,c_{7} R)
%\]
that satisfies for any $x \in [-a,a]$:
\[
| f_{id}(x)-x|
\leq
\frac{
\| \sigma^{\prime \prime}\|_{\infty}
}{
2 |\sigma^\prime(t_{\sigma,id})|
}
\frac{a^2}{R}.
\]
%\noindent
%{\bf b)}
%Assume that $\sigma$ is three times continuously differentiable and
%let
%$t_{\sigma,sq} \in \R$ be such that $\sigma^{\prime \prime}(t_{\sigma,sq}) \neq
%0$.
%Then
%\[
%f_{sq}(x)
%=
%\frac{R^2}{\sigma^{\prime \prime}(t_{\sigma,sq})}
%\cdot
%\left(
%\sigma \left(
%\frac{2x}{R} + t_{\sigma,sq}
%\right)
%-
%2 \cdot
%\sigma \left(
%\frac{x}{R} + t_{\sigma,sq}
%\right)
%+
%\sigma(t_{\sigma,sq})
%\right)
%\in \F(1,2,c_{11} \cdot R^2)
%\]
%satisfies for any $x \in [-a,a]$:
%\[
%| f_{sq}(x)-x^2|
%\leq
%\frac{5 \cdot 
%\| \sigma^{\prime \prime \prime}\|_{\infty} \cdot a^3
%}{3 \cdot |\sigma^{\prime \prime}(t_{\sigma,sq})|
%}
%\cdot
%\frac{1}{R}.
%\]
\end{lemma}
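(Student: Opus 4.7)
The plan is to build a one-neuron, one-hidden-layer network via a first-order Taylor expansion of $\sigma$ at the point $t_{\sigma,id}$, using the parameter $R$ to scale the input so small enough to make the Taylor remainder small. Concretely I would set
\[
f_{id}(x) = \frac{R}{\sigma'(t_{\sigma,id})}\,\sigma\!\left(\frac{x}{R} + t_{\sigma,id}\right) - \frac{R\,\sigma(t_{\sigma,id})}{\sigma'(t_{\sigma,id})},
\]
which by construction is of the form \eqref{inteq1}--\eqref{inteq3} with $L=1$, $k_1=1$, so it lies in $\mathcal{F}(1,1,\alpha)$ for a suitable $\alpha$. The four weights are $c^{(0)}_{1,1}=1/R$, $c^{(0)}_{1,0}=t_{\sigma,id}$, $c^{(1)}_{1,1}=R/\sigma'(t_{\sigma,id})$ and $c^{(1)}_{1,0}=-R\sigma(t_{\sigma,id})/\sigma'(t_{\sigma,id})$.

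For the error bound, I would apply Taylor's theorem with second-order remainder to $\sigma$ at $t_{\sigma,id}$: for every $x\in[-a,a]$ there exists $\xi$ between $t_{\sigma,id}$ and $t_{\sigma,id}+x/R$ such that
\[
\sigma\!\left(t_{\sigma,id} + \tfrac{x}{R}\right) = \sigma(t_{\sigma,id}) + \tfrac{x}{R}\,\sigma'(t_{\sigma,id}) + \tfrac{1}{2}\tfrac{x^2}{R^2}\,\sigma''(\xi).
\]
Substituting this into the definition of $f_{id}(x)$ yields
\[
f_{id}(x) - x = \frac{x^2\,\sigma''(\xi)}{2R\,\sigma'(t_{\sigma,id})},
\]
and taking absolute values, together with $|x|\leq a$ and $|\sigma''(\xi)|\leq\|\sigma''\|_\infty$, gives exactly the claimed inequality.

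It remains to verify that all four weights are bounded by $c_7 R$ for a suitable absolute constant $c_7$ (depending only on $\sigma$ and $t_{\sigma,id}$). Since $R\geq 1$, we have $|1/R|\leq 1\leq R$, and the quantities $|t_{\sigma,id}|$, $|\sigma(t_{\sigma,id})|/|\sigma'(t_{\sigma,id})|$ and $1/|\sigma'(t_{\sigma,id})|$ are positive constants, so choosing
\[
c_7 = \max\!\left\{1,\;|t_{\sigma,id}|,\;\tfrac{1}{|\sigma'(t_{\sigma,id})|},\;\tfrac{|\sigma(t_{\sigma,id})|}{|\sigma'(t_{\sigma,id})|}\right\}
\]
makes the network lie in $\mathcal{F}(1,1,c_7 R)$. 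There is no real obstacle here: both the weight bound and the error bound reduce to bookkeeping once the Taylor formula is in place. The only subtle point is that the constant $c_7$ truly depends on $\sigma$ through $t_{\sigma,id}$ (and the derivative values there), which is why the hypothesis $\sigma'(t_{\sigma,id})\neq 0$ is essential for the denominator to be controlled.
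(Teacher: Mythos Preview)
Your proposal is correct and is precisely the standard construction the paper is invoking when it cites Theorem~2 in \cite{ScTs98} and Lemma~1 in \cite{KL20a}: one neuron with input scaled by $1/R$, shifted to $t_{\sigma,id}$, and output rescaled by $R/\sigma'(t_{\sigma,id})$, with the error bound following from the second-order Taylor remainder. Both the network definition and the error computation match the referenced proof, and your bookkeeping for the weight bound $c_7 R$ is sound.
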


\begin{proof}[\rm{\textbf{Proof of \autoref{le1}.}}]
   The result follows in a straightforward way from the proof of
   Theorem 2 in \cite{ScTs98}, cf., e.g.,
   Lemma 1 in \cite{KL20a}.
   \end{proof}
   In the sequel we will use the abbreviations
\begin{align*}
f_{id}(\bold{z}) = \left(f_{id}\left(z^{(1)}\right), \dots, f_{id}\left(z^{(d)}\right)\right), \quad \bold{z} \in \Rd
\end{align*}
and
\begin{alignat*}{3}
f_{id}^0(\bold{z}) = \bold{z}, \quad f_{id}^{t+1}(\bold{z}) = f_{id}\left(f_{id}^t(\bold{z})\right) , \quad t \in \N_0, \bold{z} \in \R^d.
\end{alignat*}
%It is easy to see by induction that
%\begin{align*}
%\|f_{id}^{t}(x) - x\| \leq t \cdot \frac{2 \cdot 
%\| \sigma^{\prime \prime}\|_{\infty} \cdot a^2
%}{|\sigma^\prime(t_{\sigma,id})|
%}
%\cdot
%\frac{1}{R}
%\end{align*}
%for $x \in [-a,a]^d$, provided that $R \geq (t-1) \cdot (2 \cdot 
%\| \sigma^{\prime \prime}\|_{\infty} \cdot a^2)/(|\sigma^\prime(t_{\sigma,id})|)$.

%-------------------------------------------------------------Lemma 2--------------------------------------
The next lemma presents a network which returns approximately $xy$ given the input $x$ and $y$.
\begin{lemma}
\label{le2}
Let $\sigma: \mathbb{R} \to [0,1]$ be the sigmoid activation function $\sigma(x) = 1/(1+\exp(-x))$. Then for any $R \geq 1$ and any $a >0$ there exists a neural network $f_{mult} \in \F(1, 4, 3R^2)$ 
%\begin{eqnarray*}
%  f_{mult}(x,y) &=&
%\frac{R^2}{4 \sigma^{\prime \prime}(t_\sigma)}
%\Bigg(
%\sigma \left(
%\frac{2(x+y)}{R} + t_\sigma
%\right)
%-
%2
%\sigma \left(
%\frac{x+y}{R} + t_\sigma
%\right)
%%\\
%%&&
%%\hspace*{2cm}
%-
%\sigma \left(
%\frac{2 (x-y)}{R} + t_\sigma
%\right)
%+
%2 
%\sigma \left(
%\frac{x-y}{R} + t_\sigma
%\right)
%\Bigg)
%\end{eqnarray*}
that satisfies for any $x,y \in [-a,a]$:
\begin{equation*}
|f_{mult}(x,y) - xy| \leq
75 \|
\sigma^{\prime \prime \prime}
\|_{\infty}
\frac{a^3}{R}.
\end{equation*}
%and is contained in the network class $\F(1,4,c_{8} R^2)$.
\end{lemma}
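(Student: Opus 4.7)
\bigskip

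\noindent\textbf{Proof proposal for Lemma 2.}

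The plan is to realize multiplication via the polarization identity
\[
xy \;=\; \tfrac{1}{4}\bigl((x+y)^2 - (x-y)^2\bigr),
\]
and to approximate the squaring $z \mapsto z^2$ by a symmetric second difference of the sigmoid, exploiting the classical trick that for any smooth $\sigma$ and any point $t$ with $\sigma''(t)\neq 0$ one has, by Taylor's theorem,
\[
\sigma(t+hz) + \sigma(t-hz) - 2\sigma(t) \;=\; \sigma''(t)\,h^2 z^2 \;+\; R(h,z),
\]
where $R(h,z) = \tfrac{1}{6}h^3 z^3\bigl(\sigma'''(\xi_1) - \sigma'''(\xi_2)\bigr)$ for some intermediate points, so $|R(h,z)|\le \tfrac{1}{3}\|\sigma'''\|_\infty h^3 |z|^3$. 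Dividing by $\sigma''(t)h^2$ yields an approximation of $z^2$ whose error is controlled by $\|\sigma'''\|_\infty h |z|^3/(3|\sigma''(t)|)$.

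Concretely, I would pick a fixed $t \in \mathbb{R}$ (for instance $t=1$) with $\sigma''(t) \neq 0$ — recall $\sigma''(x) = \sigma'(x)(1-2\sigma(x))$, which is nonzero wherever $\sigma(x)\notin\{0,1/2,1\}$ — and choose the step size $h = 1/R$. Define the network
\[
f_{mult}(x,y) \;=\; \frac{1}{4\,\sigma''(t)\,h^2}\Bigl[\sigma\!\bigl(t+h(x+y)\bigr) + \sigma\!\bigl(t-h(x+y)\bigr) - \sigma\!\bigl(t+h(x-y)\bigr) - \sigma\!\bigl(t-h(x-y)\bigr)\Bigr].
\]
This is a one–hidden–layer sigmoidal network with exactly four hidden neurons (one per $\sigma$ term), so it fits in $\mathcal{F}(1,4,\alpha)$ for a suitable $\alpha$. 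The constant $2\sigma(t)$ terms cancel in pairs, so the output bias can be taken to be $0$.

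For the weight bound, the hidden‑layer weights are $\pm h = \pm 1/R$ with bias $t$, both bounded by a constant, while the output weights are $\pm 1/(4\sigma''(t)h^2) = \pm R^2/(4\sigma''(t))$; choosing $t$ so that $|1/(4\sigma''(t))|\le 3$ (any $t$ with $|\sigma''(t)|\ge 1/12$ works) guarantees that all coefficients are bounded by $3R^2$, matching the stated class $\mathcal{F}(1,4,3R^2)$. For the approximation error, splitting $xy = \tfrac{1}{4}((x+y)^2 - (x-y)^2)$ and applying the squaring bound to $z = x\pm y \in [-2a, 2a]$ gives
\[
|f_{mult}(x,y) - xy| \;\le\; \tfrac{1}{4}\cdot 2 \cdot \frac{\|\sigma'''\|_\infty \,(2a)^3}{3|\sigma''(t)|\,R} \;=\; \frac{4\,\|\sigma'''\|_\infty\,a^3}{3\,|\sigma''(t)|\,R},
\]
and with the same choice of $t$ this is bounded by $75\,\|\sigma'''\|_\infty\,a^3/R$.

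The only real obstacle is the bookkeeping to match the stated numerical constants ($4$ neurons, weight bound $3R^2$, error constant $75$). The step of choosing $t$ is doing double duty — it must keep $|\sigma''(t)|$ large enough to absorb both the weight factor $1/(4|\sigma''(t)|)\le 3$ and the error factor $4/(3|\sigma''(t)|)\le 75/\|\sigma'''\|_\infty \cdot \|\sigma'''\|_\infty$ after the polarization — so one has to verify that a single fixed universal $t$ achieves both. Everything else is a direct Taylor expansion.
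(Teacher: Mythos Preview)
Your argument is correct and is precisely the classical construction: polarization plus a symmetric second difference of $\sigma$ to approximate $z\mapsto z^2$, with $h=1/R$ and a fixed point $t$ where $\sigma''(t)\neq 0$. The paper does not give its own proof but simply cites Lemma~2 in \cite{KL20a}, and your construction is exactly the one that reference carries out; so there is no substantive difference in approach. Two small bookkeeping checks you gestured at are worth making explicit: for the sigmoid one computes $\max_{x}|\sigma''(x)|=\sqrt{3}/18\approx 0.096>1/12$, attained near $t\approx -1.32$ (and indeed your suggested $t=1$ already gives $|\sigma''(1)|\approx 0.091>1/12$), so the weight bound $R^2/(4|\sigma''(t)|)\le 3R^2$ and the error constant $4/(3|\sigma''(t)|)\le 16\le 75$ are both satisfied by the same $t$; and the hidden-layer bias $|t|<3\le 3R^2$ also fits the stated coefficient bound since $R\ge 1$.
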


\begin{proof}[\rm{\textbf{Proof of \autoref{le2}}}]
See Lemma 2 in \cite{KL20a}.
\end{proof}

%\begin{lemma}
%\label{le3}
%Let $\sigma: \mathbb{R} \to [0,1]$ be 2-admissible according to \autoref{se2de1}. Then for any $R>0$ and any $a >0$ a neural network
%\begin{equation*}
%f_{mult}(x,y) = \sum_{k=1}^6 \gamma_k \sigma(\beta_k x + \tilde{\beta}_k y + t_{\sigma})
%\end{equation*}
%exists such that
%\begin{equation*}
%|f_{mult}(x,y) - x y| \leq c_{8} \frac{a^3}{R}
%\end{equation*}
%holds for all $x, y \in [-a, a]$, and the coefficients of this network satisfy for $k \in \{1, \dots, 6\}$
%\[
%|\gamma_k| \leq c_{9} R^2
%\quad \mbox{and} \quad
%|\beta_k| = |\tilde{\beta}_k| \leq \frac{2}{R},
%\]
%where $c_{8}$ and $c_{9}$ depend not on $a$ and $R$.
%\end{lemma}
%
%
%%-----------------------------------------------------------Beweis 2-------------------------------------------------
%\begin{proof}[\rm{\textbf{Proof of \autoref{le3}}}]
%Let 
%\begin{equation*}
%f_{sq}(x) = \sum_{k=1}^3 \gamma_k \sigma(\beta_k x + t_{\sigma})
%\end{equation*}
%be the neural network from \autoref{le1} satisfying
%\begin{equation*}
%|f_{sq}(x) - x^2| \leq c_{10} \frac{a^3}{R}
%\end{equation*}
%for $x \in [-2a, 2a]$, and set
%\[
%f_{mult}(x,y)
%=
%\frac{1}{4} \left( f_{sq}(x+y)-f_{sq}(x-y) \right).
%\]
%Since 
%\begin{equation*}
%x y = \frac{1}{4} \left((x+y)^2 - (x-y)^2\right)
%\end{equation*}
%we have
%\begin{align*}
%\begin{split}
%|f_{mult}(x,y) - x y|
%&\leq \frac{1}{4} \left|f_{sq}(x+y) - (x+y)^2\right| + \frac{1}{4} \left|(x-y)^2 - f_{sq}(x-y)\right|\\
%&\leq c_{11} \frac{a^3}{R}
%\end{split}
%\end{align*}
%for $x,y \in [-a, a]$. Combining this result with the bounds on the
%weights in \autoref{le1}, we get the assertion.
%\end{proof}

\subsection{Approximation of multivariate polynomials}
Let $\mathcal{P}_N$ be the linear span of all monomials of the form
\begin{equation*}
\prod_{k=1}^d(x^{(k)})^{r_k}
\end{equation*}
for some $r_1,\dots ,r_d \in \N_0$, $r_1+\dots+r_d \leq N$. Then $\mathcal{P}_N$ is a linear vector space of functions
of dimension
\begin{equation*}
   \dim \mathcal{P}_N = \left|\left\{(r_1, \dots, r_d) \in \N_0^d: r_1+ \dots +r_d \leq N \right\}\right| = \binom{d+N}{d}.
\end{equation*}
Our next lemma presents a neural network that approximates multivariate polynomials multiplied by an additional factor. This modified form of multivariate polynomials is later needed in the construction of the network for the main result.

%---------------------------------------------Lemma 4------------------------------------------------
\begin{lemma}\label{le3}
  Let $N \in \mathbb{N}_0$ and let $p \in \mathcal{P}_N$. Let $m_1, \dots, m_{\binom{d+N}{d}}$ denote all monomials in $\mathcal{P}_N$.
  For $p \in \mathcal{P}_N$
  define $r_1, \dots, r_{\binom{d+N}{d}} \in \R$ by 
\begin{equation} \label{le4p}
p\left(\x, y_1, \dots, y_{\binom{d+N}{d}}\right) = \sum_{i=1}^{\binom{d+N}{d}} r_i y_i m_i(\x), \ \ \ \ \ \x \in [-a,a]^d, y_i \in [-a,a]
\end{equation}
and set $\bar{r}(p) = \max_{i \in \left\{1, \dots, \binom{d+N}{d}\right\}}|r_i|$. Let $\sigma: \R \to [0,1]$ be the sigmoid activation function $1/(1+\exp(-x))$ . Then for any
$a \geq 1$
 and any
 \begin{align}
 \label{le3eq1}
 R \geq
\max\left\{75\|
\sigma^{\prime \prime \prime}
\|_{\infty}4^{3(N+1)} a^{3(N+1)}, 1\right\}
 \end{align}
 a neural network 
%\begin{equation*}
$f_{p} \in \mathcal{F}(L,r, \alpha_p)$
%\end{equation*}
with $L = \lceil\log_2 (N+1) \rceil $, $r = 4(N+1) \binom{d+N}{d}$ and $ \alpha_p = 9\max\{\bar{r}(p),1\}R^4$ exists such that
\begin{equation*}
  \left|f_{p}\left(\x, y_1, \dots, y_{\binom{d+N}{d}}\right) - p\left(\x, y_1, \dots, y_{\binom{d+N}{d}}\right)\right| \leq c_{9} \bar{r}(p) \frac{a^{5N+3}}{R}
\end{equation*}
holds for all $\x \in [-a, a]^d$ and a constant
\begin{align*}
c_{9} = \binom{d+N}{d}150 \|
\sigma^{\prime \prime \prime}
\|_{\infty} N 4^{5N+3}.
\end{align*}
\end{lemma}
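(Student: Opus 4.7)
My plan is to approximate each of the $\binom{d+N}{d}$ scaled monomials $y_i m_i(\x)$ by a balanced binary product tree of depth $L = \lceil \log_2(N+1)\rceil$ whose internal nodes are copies of the multiplication network $f_{mult}$ from \autoref{le2}. I then parallelise these trees over the $\binom{d+N}{d}$ monomials and attach a final affine output layer that forms $\sum_i r_i \cdot (\text{tree output})_i$ to obtain $f_p$.

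For a fixed $m_i$ of degree $n_i \leq N$, the product $y_i m_i(\x)$ is a product of $n_i+1 \leq N+1$ scalar factors drawn (with multiplicity) from $\{y_i, x^{(1)}, \dots, x^{(d)}\}$, each bounded by $a$ in absolute value. I arrange these factors as the leaves of a binary tree of depth $L$ and evaluate each internal multiplication by an instance of $f_{mult}$ tuned to the input range at that level; whenever a branch is shallower than $L$ I extend it by instances of the identity network $f_{id}$ from \autoref{le1} to synchronise depths. Parallelising one such subnetwork per monomial keeps the depth at $L$, and the width is bounded by $4(N+1)\binom{d+N}{d}$ because at the most crowded layer each monomial contributes at most $\lceil(N+1)/2\rceil$ parallel multiplications of width $4$ each.

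The heart of the argument, and the main obstacle, is the error propagation through the tree, since the magnitudes of intermediate products at level $k$ can grow as $a^{2^{k-1}}$ while the error in \autoref{le2} scales cubically in its input range. Letting $b_k$ uniformly bound the magnitude of level-$k$ outputs and $\epsilon_k$ the corresponding error, the triangle inequality combined with \autoref{le2} yields a recursion of the form
\begin{equation*}
\epsilon_k \leq 2 b_{k-1}\epsilon_{k-1} + 75\|\sigma^{\prime \prime \prime}\|_{\infty} b_{k-1}^3/R,
\end{equation*}
starting from $\epsilon_0 = 0$ and $b_0 = a$. The hypothesis $R \geq 75\|\sigma^{\prime \prime \prime}\|_{\infty} 4^{3(N+1)} a^{3(N+1)}$ is precisely what is needed to keep $\epsilon_k$ small enough that the intermediate values stay within the declared range $[-b_k, b_k]$ at every level, and unrolling the recursion yields $\epsilon_L = O(a^{5N+3}/R)$. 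Summing over the $\binom{d+N}{d}$ monomials and multiplying by the outer coefficients $r_i$ absorbs the combinatorial factors into the constant $c_9$ and gives the claimed error bound. The weight bound $\alpha_p = 9\max\{\bar r(p),1\}R^4$ finally follows from the combined-network rule: at the interface between two consecutive $f_{mult}$ layers the interface weights are products of two numbers each at most $3R^2$, yielding $9R^4$, and the final affine layer inflates this by at most a factor $\bar r(p)$ coming from the coefficients $r_i$.
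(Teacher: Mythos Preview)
Your proposal is correct and follows essentially the same approach as the paper: both build a depth-$\lceil\log_2(N+1)\rceil$ binary tree of $f_{mult}$ networks for each monomial $y_i m_i(\x)$, parallelise across the $\binom{d+N}{d}$ monomials, attach a final linear combination with coefficients $r_i$, and run the same error recursion to obtain the $a^{5N+3}/R$ bound. The one noteworthy implementation difference is that where you extend shallow branches with $f_{id}$, the paper instead pads the list of factors with $1$'s up to exactly $2^L$ entries, so that \emph{every} internal node is an $f_{mult}$; this makes the weight accounting uniform (all layer interfaces are $f_{mult}$--$f_{mult}$, yielding $9R^4$ directly) and removes the extra $f_{id}$ error term you would otherwise need to track.
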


%-------------------------------------------Beweis 4-------------------------------------------------
\begin{proof}[\rm{\textbf{Proof of \autoref{le3}}}]
This proof follows in a straightforward modification from the proof of Lemma 5 in the Supplement of \cite{KL20}. A complete proof can be found in the Supplement. 
\end{proof}

\subsection{Approximation of the indicator function}
The following lemma presents a neural network that approximates the multidimensional indicator function and the multidimensional indicator function multiplied by a further value. 
\begin{lemma}
\label{le4}
Let $\sigma: \R \to [0,1]$ be the sigmoid activation function $\sigma(x) = 1/(1+\exp(-x))$, $\epsilon \in (0,1)$ and $\delta > 0$. Let $\bold{a}, \bold{b} \in \Rd$ with
\begin{align*}
b^{(i)} - a^{(i)} \geq 2 \delta \ \mbox{for all} \ i \in \{1, \dots, d\}, 
\end{align*}
let $K=[\bold{a}, \bold{b}]$
%\begin{align*}
%K=[\bold{a}, \bold{b}]
%\end{align*}
and 
\begin{align*}
K_{\delta} = \left\{\x \in K: x^{(i)} \notin [a^{(i)} - \delta, a^{(i)} + \delta] \cup [b^{(i)} - \delta, b^{(i)} + \delta] \ \mbox{for all} \ i \in \{1, \dots, d\}\right\}.
\end{align*}
Set $B_1 = (8/d) \ln((1/\epsilon) -1)$ and $B_2 = (1/\delta) \ln(3)$.
\\
a) Then the network
\begin{align*}
f_{ind, [\bold{a}, \bold{b})}(\x) &= \sigma\left(-B_1\left(\sum_{i=1}^d \bigg(\sigma\left(B_2 \left(a^{(i)} - x^{(i)}\right)\right) + \sigma\left(B_2 \left(x^{(i)}-b^{(i)}\right)\right)\bigg) - \frac{5}{8}d\right)\right)\\ 
&\in \mathcal{F}(2, 2d, \alpha_{ind, [\bold{a}, \bold{b})})
\end{align*}
with
\begin{align*}
\alpha_{ind, [\bold{a}, \bold{b})}= \max\left\{5\ln((1/\epsilon)-1), \max_{i \in \{1, \dots, d\}} \left|b^{(i)}\right| B_2 \right\}
\end{align*}
satisfies
\begin{align*}
\left|\mathds{1}_{[\bold{a}, \bold{b})}(\x) - f_{ind, [\bold{a}, \bold{b})}(\x)\right| \leq \epsilon 
\end{align*}
for $\x \in K_{\delta}$ and 
\begin{align*}
f_{ind, [\bold{a}, \bold{b})}(\x) \in [0,1]
\end{align*}
for $\x \in \Rd$.
\\
b) Let $R \in \N$ and  $|s| \leq R$. Let $f_{id}$ be the network of \autoref{le1} and $f_{mult}$ be the network of \autoref{le2}. Then the network
\begin{align*}
f_{test}(\x, \bold{a}, \bold{b}, s) &= f_{mult}(f_{id}^2(s), f_{ind, [\bold{a}, \bold{b})}(\x)) \in \mathcal{F}\left(3, 2+2d, c_{10} \max\left\{\frac{R^5}{\epsilon^2}, \frac{1}{\delta}\right\}\right)
\end{align*}
satisfies
\begin{align*}
  \left|f_{test}(\x, \bold{a}, \bold{b}, s)-
  s \mathds{1}_{[\bold{a}, \bold{b})}(\x)\right| \leq \epsilon
\end{align*}
for $\x \in K_{\delta}$ and 
\begin{align*}
  \left|f_{test}(\bold{x}, \bold{a}, \bold{b}, s) -
  s  \mathds{1}_{[\bold{a}, \bold{b})}(\x)\right| \leq |s|
\end{align*}
for $\x \in \Rd$.
\end{lemma}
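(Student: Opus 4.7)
For part (a), I plan to verify both conclusions by direct inspection of the explicit formula. Reading off the network, the first hidden layer computes the $2d$ inner sigmoids $\sigma(B_2(a^{(i)}-x^{(i)}))$ and $\sigma(B_2(x^{(i)}-b^{(i)}))$, and the second hidden layer applies the outer sigmoid to an affine combination of them; this places $f_{ind,[\bold{a},\bold{b})}$ in $\mathcal{F}(2, 2d, \alpha)$ with $\alpha$ the largest coefficient in absolute value, namely $\max\{B_1\cdot 5d/8,\, B_2\max_i|b^{(i)}|\} = \max\{5\ln((1/\epsilon)-1),\, B_2\max_i|b^{(i)}|\}$. The inclusion $f_{ind,[\bold{a},\bold{b})}(\x)\in[0,1]$ on $\Rd$ is immediate from $\sigma(\mathbb{R})\subseteq[0,1]$. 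For the $\epsilon$-approximation on $K_\delta$, note that $\x\in K_\delta\subseteq[\bold{a},\bold{b}]$ forces $a^{(i)}+\delta<x^{(i)}<b^{(i)}-\delta$ for every $i$, so $B_2(a^{(i)}-x^{(i)})<-\ln 3$ and $B_2(x^{(i)}-b^{(i)})<-\ln 3$; monotonicity of $\sigma$ then bounds each inner sigmoid by $\sigma(-\ln 3)=1/4$, their sum by $d/2$, and after subtracting $5d/8$ yields something strictly below $-d/8$. Multiplying by $-B_1=-(8/d)\ln((1/\epsilon)-1)$ produces a value at least $\ln((1/\epsilon)-1)$, and $\sigma(\ln((1/\epsilon)-1))=1-\epsilon$, so $f_{ind,[\bold{a},\bold{b})}(\x)>1-\epsilon$; since $\mathds{1}_{[\bold{a},\bold{b})}(\x)=1$ on $K_\delta$, the claim follows.

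For part (b), my plan is first to build the architecture by composing and parallelizing the pieces: $f_{id}^2$ has depth $2$ and width $1$; parallelized with $f_{ind,[\bold{a},\bold{b})}$ (depth $2$, width $2d$) it has depth $2$ and width $1+2d$; composing with $f_{mult}$ (depth $1$, width $4$) yields depth $3$ and width $\max\{1+2d,4\}\leq 2+2d$, and the stated weight bound $c_{10}\max\{R^5/\epsilon^2,\,1/\delta\}$ follows from the multiplicative combination rule applied to the bounds $c_7 R$, $3R^2$ and $\alpha_{ind,[\bold{a},\bold{b})}$. For the local error on $K_\delta$, I would split via the triangle inequality
\[
\bigl|f_{test}(\x,\bold{a},\bold{b},s) - s\mathds{1}_{[\bold{a},\bold{b})}(\x)\bigr| \leq \bigl|f_{mult}(f_{id}^2(s),f_{ind}(\x)) - f_{id}^2(s)\,f_{ind}(\x)\bigr| + |f_{id}^2(s)-s|\,|f_{ind}(\x)| + |s|\,|f_{ind}(\x)-1|
\]
and bound the three summands by \autoref{le2}, two iterations of \autoref{le1} (using $|f_{ind}(\x)|\leq 1$), and part (a) respectively, making each at most $\epsilon/3$ by taking the free parameter $R$ in \autoref{le1} and \autoref{le2} sufficiently large.

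For the global bound $|f_{test}-s\mathds{1}|\leq|s|$ on $\Rd$, the key observation will be that $f_{ind}(\x)\in[0,1]$ forces the exact product $f_{id}^2(s)\cdot f_{ind}(\x)$ to lie near the segment with endpoints $0$ and $s$, which also contains $s\mathds{1}_{[\bold{a},\bold{b})}(\x)\in\{0,s\}$; the modulus of their difference is therefore at most $|s|$ up to $O(1/R)$ multiplication and identity approximation errors, which are absorbed by taking $R$ large enough. I expect the main technical obstacle to be the bookkeeping: tracking how the $c_7 R$, $3R^2$ and $\alpha_{ind,[\bold{a},\bold{b})}$ weight bounds multiply through composition and parallelization to recover precisely the stated $c_{10}\max\{R^5/\epsilon^2,\,1/\delta\}$, and verifying that the residual errors in the global bound remain within the $|s|$ envelope.
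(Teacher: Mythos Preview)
Your proposal is correct and follows essentially the same approach as the paper: for part (a) you use monotonicity of $\sigma$ with the specific choices of $B_1,B_2$ to bound the inner sigmoids by $1/4$ and then the outer sigmoid by $1-\epsilon$, and for part (b) you use the same three-term triangle inequality splitting and budget each piece to $\epsilon/3$, exactly as the paper does. One small remark: the paper's proof of (a) also treats the case $\x\notin[a^{(1)}-\delta,b^{(1)}+\delta]\times\cdots\times[a^{(d)}-\delta,b^{(d)}+\delta]$ (showing $f_{ind}\leq\epsilon$ there), which is not required by the statement since $K_\delta\subset K$ forces the indicator to equal $1$, but is what is actually invoked in later lemmas when summing $f_{ind,C_{k,1}}$ over all cubes $C_{k,1}$; you may want to record that case as well.
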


\begin{proof}[\rm{\textbf{Proof of \autoref{le4}}}]
A similar result as \autoref{le4} a) can be found in Lemma 7 in \cite{KK20}. \autoref{le4} b) follows by combining the network of \autoref{le4} a) with $f_{id}$ of \autoref{le1} and multiply those two networks with $f_{mult}$ of \autoref{le2}. A complete proof can be found in the Supplement.
\end{proof}

\section{Proof of the main result}
\label{se4}
\subsection{Idea of the proof of Theorem 1} 
The proof of \autoref{th1} is based on the same idea as described in Theorem 2 a) in \cite{KL20}. In the proof
we will approximate a piecewise Taylor polynomial.
To define this piecewise Taylor polynomial, we partition  $[-a,a)^d$
  into $M^d$ and $M^{2d}$ half-open
  equivolume cubes of the form
  \[
[\bold{\tilde{a}},\bold{\tilde{b}})=[\tilde{a}^{(1)},\tilde{b}^{(1)}) \times \dots \times [\tilde{a}^{(d)},\tilde{b}^{(d)}), \quad \bold{\tilde{a}}, \bold{\tilde{b}} \in \Rd,
  \]
respectively.
Let 
\begin{align}
\label{partition}
\mathcal{P}_1=\{C_{k,1}\}_{k \in \{1, \dots, M^d\}} \ \mbox{and} \ \mathcal{P}_2=\{C_{j,2}\}_{j \in \{1, \dots, M^{2d}\}}
\end{align}
be the corresponding partitions. If $\P$ is a partition of $[-a,a)^d$
and $\x \in [-a,a)^d$, then we denote the cube $C \in \P$, 
which satisfies $\x \in C$, by $C_\P (\x)$. If $C$ is
a cube we denote the "bottom left" corner of $C$
by $C_{left}$. Therefore, each cube $C$
with side length $s$
(which is half-open as the cubes in $\P_1$ and $\P_2$)
can be written as a polytope defined by 
\begin{align*}
-x^{(j)} + C_{left}^{(j)} \leq 0 \ \mbox{and} \ x^{(j)} - C_{left}^{(j)}-s < 0 \quad (j \in \{1, \dots, d\}).
\end{align*}
Furthermore, we describe by $C_{\delta}^0 \subset C$ the cube, which contains all $\x \in C$ that lie with a distance of at least $\delta$ to the boundaries of $C$, i.e., $C_{\delta}^0$ is a polytope defined by
\begin{align*}
-x^{(j)} + C_{left}^{(j)} \leq - \delta \ \mbox{and} \ x^{(j)} - C_{left}^{(j)}-s < -\delta \quad (j \in \{1, \dots, d\}).
\end{align*}
For each $i \in \{1, \dots, M^d\}$ we denote 
those cubes of $\mathcal{P}_2$ that are contained in $C_{i,1}$
by $\tilde{C}_{1, i}, \dots, \tilde{C}_{M^d, i}$.
Here we order the cubes in such a way that
\begin{align}
\label{tildec}
(\tilde{C}_{k, i})_{left} = (C_{i,1})_{left} + \bold{v}_k,
%\left|(\tilde{C}_{k, \mathbf{i}})_{left} - (\tilde{C}_{k+1, \mathbf{i}})_{left}\right| = \frac{2a}{M^2}
\end{align}
holds for all $k \in \{1, \dots, M^d\},i \in \{1, \dots, M^d\}$ and for some vector $\bold{v}_k=(v_k^{(1)}, \dots, v_k^{(d)})$ with entries in $\{0, 2a/M^2, \dots, (M-1) 2a/M^2\}$. The vector $\bold{v}_k$
describes the position of
$(\tilde{C}_{k, i})_{left}$ relative to $(C_{i,1})_{left}$,
and we order the above cubes such that
this position is independent of $i$. 
\\
\\
The following lemma helps us to approximate our function by a local Taylor polynomial.
\begin{lemma}
\label{le5}
Let $p=q+s$ for some $q \in \N_0$ and $s \in (0,1]$, and let $C > 0$. Let $f: \Rd \to \R$ be a $(p,C)$-smooth function, let $\x_0 \in \Rd$ and let $T_{f,q,\x_0}$ be the Taylor polynomial of total degree $q$ around $\x_0$ defined by
  \begin{eqnarray*}
T_{f,q,\x_0}(\x) &=& \sum_{\bold{j} \in \N_0^d: \|\bold{j}\|_1 \leq q}  (\partial^{\bold{j}} f)(\bold{x}_0) \frac{\left(\x- \x_0\right)^{\bold{j}}}{\bold{j}!}.
  \end{eqnarray*}
Then for any $\x \in \Rd$ 
\begin{align*}
\left|f(\x) - T_{f,q,\x_0}(\x)\right| \leq c_{11} C \Vert \x - \x_0 \Vert^p
\end{align*}
holds for a constant $c_{11}=c_{11}(q,d)$ depending only on $q$ and $d$.
\end{lemma}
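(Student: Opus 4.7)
The plan is to reduce the multivariate Taylor remainder estimate to the classical one-dimensional Taylor theorem by introducing the auxiliary function $g(t) = f(\x_0 + t(\x - \x_0))$ for $t \in [0,1]$. Since $f$ is $(p,C)$-smooth with $p = q+s$, all partial derivatives of $f$ of order at most $q$ exist and are continuous; those of order exactly $q$ are $s$-H\"older continuous with constant $C$ by \autoref{intde2}. This in particular guarantees that $g$ is $q$ times continuously differentiable on $[0,1]$.

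First, I would compute the derivatives of $g$ via the multivariate chain rule and multinomial expansion, obtaining
\[
g^{(k)}(t) \;=\; \sum_{\bold{j} \in \N_0^d:\, \|\bold{j}\|_1 = k} \frac{k!}{\bold{j}!}\, (\partial^{\bold{j}} f)(\x_0 + t(\x - \x_0))\, (\x - \x_0)^{\bold{j}}
\]
for $k = 0, 1, \dots, q$. Evaluating at $t = 0$ and dividing by $k!$ reproduces exactly the degree-$k$ homogeneous part of $T_{f,q,\x_0}(\x)$, so the full Taylor polynomial equals $\sum_{k=0}^{q} g^{(k)}(0)/k!$.

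Second, I would apply the univariate Taylor theorem with Lagrange remainder at $t=1$, which is valid thanks to the continuity of $g^{(q)}$: there exists $\tau \in (0,1)$ with $g(1) = \sum_{k=0}^{q-1} g^{(k)}(0)/k! + g^{(q)}(\tau)/q!$. Adding and subtracting $g^{(q)}(0)/q!$ and inserting the chain-rule expansion, the error $f(\x) - T_{f,q,\x_0}(\x)$ collapses to
\[
\sum_{\bold{j}:\,\|\bold{j}\|_1 = q} \frac{(\x - \x_0)^{\bold{j}}}{\bold{j}!} \bigl[(\partial^{\bold{j}} f)(\x_0 + \tau(\x - \x_0)) - (\partial^{\bold{j}} f)(\x_0)\bigr].
\]

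Third, I would invoke the H\"older estimate from the $(p,C)$-smoothness hypothesis to bound each bracket by $C\|\tau(\x - \x_0)\|^s \le C\|\x - \x_0\|^s$, and use the elementary inequality $|(\x - \x_0)^{\bold{j}}| \le \|\x - \x_0\|^{\|\bold{j}\|_1} = \|\x - \x_0\|^q$ for $\|\bold{j}\|_1 = q$. Summing the resulting bound and invoking the multinomial identity $\sum_{\|\bold{j}\|_1 = q} q!/\bold{j}! = d^q$ yields $|f(\x) - T_{f,q,\x_0}(\x)| \le (d^q/q!)\, C\, \|\x - \x_0\|^{q+s}$, so the conclusion holds with $c_{11} = d^q/q!$. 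There is no genuine obstacle here; the only point needing care is justifying the Lagrange form on the interval $[0,1]$, which reduces to checking that $g^{(q)}$ is continuous, and this is immediate from the $s$-H\"older continuity of the order-$q$ partials of $f$.
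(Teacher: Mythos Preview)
Your argument is correct and entirely standard: reduce to the one-variable situation along the segment, apply Taylor with Lagrange remainder, and exploit the $s$-H\"older continuity of the top-order partials. The computation of $c_{11}=d^q/q!$ via the multinomial identity is clean and gives an explicit constant. One small point worth a sentence in the write-up: the Lagrange-remainder step as stated presumes $q\geq 1$; for $q=0$ the claim $|f(\x)-f(\x_0)|\le C\|\x-\x_0\|^s$ follows directly from \autoref{intde2} with $\bm{\alpha}=\mathbf{0}$, so just treat that case separately.

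As for comparison with the paper: the paper does not actually prove \autoref{le5} but simply cites Lemma~1 in \cite{Ko14}. Your self-contained derivation therefore supplies strictly more than the paper does here, and is exactly the kind of argument one would expect the cited reference to contain.
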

\begin{proof}[\rm{\textbf{Proof of \autoref{le5}}}]
See Lemma 1 in \cite{Ko14}.
\end{proof}

From \autoref{le5} we can conclude that the piecewise Taylor polynomial
\begin{align}
\label{taylor}
T_{f,q,(C_{\P_2}(\x))_{left}}(\x)
=
\sum_{k \in \{1, \dots, M^d\}, i \in \{1, \dots, M^d\} } T_{f,q,(\tilde{C}_{k,i})_{left}}(\x)\mathds{1}_{\tilde{C}_{k,\bi}}(\x) 
\end{align}
satisfies
\[
  \|
f - T_{f,q,(C_{\P_2}(\x))_{left}}
\|_{\infty, [-a,a)^d}
\leq
\frac{c_{11} (2ad)^{2p}C}{M^{2p}}.
\]
This means, that a network approximating the above defined piecewise Taylor polynomial with an error of size $M^{-2p}$ also approximates a $(p,C)$-smooth function $f$ with the same error. \\
\\
In the following we divide our proof in \textit{four} key steps:
\begin{enumerate}
\item[1.] Compute $T_{f,q,(C_{\P_2}(\bold{x}))_{left}}(\bold{x})$ by using recursively defined functions.
\item[2.] Approximate the recursive functions by neural networks. The resulting network is a good approximation for $f(\bold{x})$ in case that
\[\bold{x} \in \bigcup_{k \in \{1, \dots, M^{2d}\}} (C_{k,2})_{1/M^{2p+2}}^0.
\]
\item[3.] Construct a neural network to approximate $w_{\P_2}(\bold{x})f(\bold{x})$ in supremum norm, where
\begin{equation}
\label{wp2}
w_{\P_2}(\bold{x}) = \prod_{j=1}^d \left(1- \frac{M^2}{a}\left|(C_{\mathcal{P}_{2}}(\bold{x}))_{left}^{(j)} + \frac{a}{M^2} - x^{(j)}\right|\right)_+
\end{equation}
is a linear tensorproduct B-spline
which takes its maximum value at the center of $C_{\P_{2}}(\bold{x})$, which
is nonzero in the inner part of $C_{\P_{2}}(\bold{x})$ and which
vanishes
outside of $C_{\P_{2}}(\bold{x})$. 
\item[4.] Apply those networks to $2^d$ slightly shifted partitions of $\P_2$ to approximate $f(\bold{x})$ in supremum norm.
\end{enumerate}
\subsection{Key step 1: A recursive definition of $T_{f,q,(C_{\P_2}(\bold{x}))_{left}}(\bold{x})$}
In the proof of Theorem 1 we will use that we can compute 
the piecewise Taylor polynomial in \eqref{taylor} recursively as follows. Let $i \in \{1, \dots, M^d\}$ and $C_{\P_1}(\x)=C_{i,1}$. The recursion follows \textit{three} steps. In a first step we compute the value of $(C_{\P_1}(\x))_{left} = (C_{i,1})_{left}$ and the values of $(\partial^{\bm{\ell}} f)((\tilde{C}_{s,i})_{left})$ for $s \in \{1, \dots, M^d\}$ and $\bm{\ell} \in \N_0^d$ with $\|\bm{\ell}\|_1\leq q$. This can be done by computing the indicator function $\mathds{1}_{C_{k,1}}$ multiplied by $(C_{k,1})_{left}$ or $(\partial^{\bm{\ell}} f)((\tilde{C}_{s,k})_{left})$ for $k \in \{1, \dots, M^d\}$, respectively. Furthermore we need the value of the input $\x$ in the further recursive definition, such that we shift this value by applying the identity function. We set
\begin{align*}
\bm{\phi}_{1,1} = \left(\phi_{1,1}^{(1)}, \dots, \phi_{1,1}^{(d)}\right)= \x,
\end{align*}
\begin{align*}
\bm{\phi}_{2,1} = \left(\phi_{2,1}^{(1)}, \dots, \phi_{2,1}^{(d)}\right) &= \sum_{k \in \{1, \dots, M^d\}} (C_{k,1})_{left} \mathds{1}_{C_{k,1}}(\x)
\end{align*}
and
\begin{align*}
\phi_{3, 1}^{(\bm{\ell, s})} &= \sum_{k \in \{1, \dots, M\}^d} (\partial^{\bm{\bll}} f)\left((\tilde{C}_{s,k})_{left}\right)\mathds{1}_{C_{k,1}}(\x)
\end{align*}
for $s \in \{1, \dots, M^d\}$ and $\bm{\ell} \in \N_0^d$ with $\|\bll\|_1 \leq q$.
\\
\\ 
Let $i,j \in \{1, \dots, M^d\}$ and $(C_{\P_2}(\x))_{left} = (\tilde{C}_{j,i})_{left}$. In a second step of the recursion we compute the value of $(C_{\P_2}(\x))_{left} = (\tilde{C}_{j,i})_{left}$ and the values of $(\partial^{\bm{\ell}} f)((C_{\P_2}(\x))_{left})$ for $\bm{\ell} \in \N_0^d$ with $\|\bm{\ell}\|_1 \leq q$. It is easy to see that each cube $\tilde{C}_{s,i}$ can be defined by 
\begin{align}
\label{Aj}
\mathcal{A}^{(s)} = &\bigg\{\x \in \Rd: -x^{(k)} + \phi_{2,1}^{(k)} + v_s^{(k)} \leq 0 \ \mbox{and} \ x^{(k)} - \phi_{2,1}^{(k)} - v_s^{(k)} - \frac{2a}{M^2} < 0 \notag \\
& \hspace*{8.5cm} \mbox{for all} \ k \in \{1, \dots, d\}\bigg\}.
\end{align}
for $s \in \{1, \dots, M^d\}$. Thus in our recursion we compute for each $s \in \{1, \dots, M^d\}$ the indicator function $\mathds{1}_{\mathcal{A}^{(s)}}$ multiplied by $\bm{\phi}_{2,1} + \bm{v}_s$ or $\phi_{3,1}^{(\bm{\ell}, s)}$ for $\bm{\ell} \in \N_0^d$ with $\|\bm{\ell}\|_1 \leq q$. Again we shift the value of $\x$ by applying the indicator function. We set
\begin{align*}
\bm{\phi}_{1, 2} = \bm{\phi}_{1,1},
\end{align*}
\begin{align*}
\bm{\phi}_{2,2} = \sum_{s=1}^{M^d} (\bm{\phi}_{2,1}+\bold{v}_s) \mathds{1}_{\mathcal{A}^{(s)}} \left(\bm{\phi}_{1,1}\right)
\end{align*}
and
\begin{align*}
\phi_{3, 2}^{(\bll)} = \sum_{s=1}^{M^d} \phi_{3,1}^{(\bll, s)}\mathds{1}_{\mathcal{A}^{(s)}} \left(\bm{\phi}_{1,1}\right)
\end{align*}
for $\bll \in \N_0^d$ with $\|\bll\|_1 \leq q$. In a last step we compute the Taylor polynomial by 
\begin{align*}
\phi_{1,3} = &\sum_{\bm{\ell} \in \N_0^d: \|\bm{\ell}\|_1 \leq q} \phi_{3, 2}^{(\bm{\ell})}\frac{\left(\bm{\phi}_{1,2} - \bm{\phi}_{2,2}\right)^{\bm{\ell}}}{\bm{\ell}!}.
\end{align*}
Our next lemma shows that this recursion computes our piecewise Taylor polynomial.

%in this way we have computed our piecewise
%Taylor polynomial.

\begin{lemma}
\label{le6}
  Let $p=q+s$ for some $q \in \N_0$ and $s \in (0,1]$, let $C > 0$ and $\x \in [-a,a)^d$. Let $f: \Rd \to \R$ be a $(p,C)$-smooth function and let $T_{f,q,(C_{\mathcal{P}_2}(\x))_{left}}$ be the Taylor polynomial of total degree $q$ around $(C_{\mathcal{P}_2}(\x))_{left}$. Define $\phi_{1,3}$ recursively as above. Then we have
  \[
\phi_{1,3}=T_{f,q,(C_{\mathcal{P}_2}(\x))_{left}}(\x).
  \]
\end{lemma}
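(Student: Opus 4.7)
The plan is to verify the identity by unwrapping the three-stage recursion in order and using the fact that each of the sums defining $\bm{\phi}_{2,1}, \phi_{3,1}^{(\bll,s)}, \bm{\phi}_{2,2}, \phi_{3,2}^{(\bll)}$ has exactly one nonzero summand once $\x$ is fixed. Since the cubes of $\mathcal{P}_1$ partition $[-a,a)^d$, there is a unique $i \in \{1,\dots,M^d\}$ with $\x \in C_{i,1}$; I would begin by substituting this into Stage~1 to get $\bm{\phi}_{2,1} = (C_{i,1})_{left}$ and $\phi_{3,1}^{(\bll,s)} = (\partial^{\bll} f)((\tilde C_{s,i})_{left})$ for every $\bll$ and every $s$.

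The key step, and the only one with any real content, is identifying the sets $\mathcal{A}^{(s)}$ from \eqref{Aj} with the subcubes $\tilde C_{s,i}$. After the Stage~1 substitution, $\mathcal{A}^{(s)}$ becomes
\[
\bigl\{\,\bold y \in \R^d : (C_{i,1})_{left}^{(k)} + v_s^{(k)} \leq y^{(k)} < (C_{i,1})_{left}^{(k)} + v_s^{(k)} + 2a/M^2,\ k=1,\dots,d\,\bigr\},
\]
which, by the choice of ordering in \eqref{tildec} and the fact that the $\tilde C_{s,i}$ are half-open cubes of side length $2a/M^2$ with bottom-left corner $(C_{i,1})_{left}+\bm{v}_s$, is exactly $\tilde C_{s,i}$. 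Hence $\mathds{1}_{\mathcal{A}^{(s)}}(\bm{\phi}_{1,1}) = \mathds{1}_{\tilde C_{s,i}}(\x)$. Since the $\{\tilde C_{s,i}\}_{s=1}^{M^d}$ partition $C_{i,1}$, there is a unique $j$ with $\x \in \tilde C_{j,i}$, and this $j$ satisfies $(\tilde C_{j,i})_{left} = (C_{\mathcal{P}_2}(\x))_{left}$.

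Using this, Stage~2 collapses to the single terms
\[
\bm{\phi}_{2,2} = (C_{i,1})_{left} + \bm{v}_j = (\tilde C_{j,i})_{left} = (C_{\mathcal{P}_2}(\x))_{left}, \qquad \phi_{3,2}^{(\bll)} = (\partial^{\bll} f)\bigl((C_{\mathcal{P}_2}(\x))_{left}\bigr),
\]
while $\bm{\phi}_{1,2} = \bm{\phi}_{1,1} = \x$.

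Finally, substituting these values into the Stage~3 definition gives
\[
\phi_{1,3} = \sum_{\bll \in \N_0^d:\,\|\bll\|_1 \leq q} (\partial^{\bll} f)\bigl((C_{\mathcal{P}_2}(\x))_{left}\bigr)\, \frac{\bigl(\x - (C_{\mathcal{P}_2}(\x))_{left}\bigr)^{\bll}}{\bll!},
\]
which is the definition of $T_{f,q,(C_{\mathcal{P}_2}(\x))_{left}}(\x)$. The argument is purely combinatorial bookkeeping; the one place where care is needed is matching the half-open box $\mathcal{A}^{(s)}$ with $\tilde C_{s,i}$, and this is secured by the explicit ordering convention \eqref{tildec} introduced earlier.
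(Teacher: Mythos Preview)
Your proposal is correct and is exactly the direct verification one would carry out: the paper itself does not give an independent argument but merely refers to Lemma~2 in the Supplement of \cite{KL20}, and the content of that reference is precisely the step-by-step unwrapping of the three stages that you wrote out. The only substantive point is the identification $\mathcal{A}^{(s)}=\tilde C_{s,i}$ via \eqref{tildec}, which you handled correctly.
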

\begin{proof}[\rm{\textbf{Proof of \autoref{le6}}}]
See Lemma 2 in the Supplement of \cite{KL20}.
\end{proof}

\subsection{Key step 2: Approximating $\phi_{1,3}$ by neural networks}
In this step we define a composed neural network,
which approximately computes the functions in the definition
of $\bm{\phi}_{1,1}$, $\bm{\phi}_{2,1}$, $\phi_{3,1}^{(\bll, s)}$ ($s \in \{1, \dots, M^d\}$), 
 $\bm{\phi}_{1,2}$, $\bm{\phi}_{2,2}$, $\phi_{3,2}^{(\bll)}$, 
$\phi_{1,3}$ ($\bll \in \N_0^d$ with $\|\bll\|_1 \leq q$). We will show that this neural network
is a good approximation for $f(\x)$ in case that $\x$ does 
not lie close to the boundary of any cube of $\P_2$. In particular, we show that the network 
approximates $f(\x)$ for all $\x \in \bigcup_{k \in \{1, \dots, M^{2d}\}} (C_{k,2})_{1/M^{2p+2}}^0$.

% See 
%\autoref{fig1} for the illustration of this set in case of $d=2$ and $M=2$.
%\begin{figure}[h]
%\centering
%\begin{tikzpicture}[scale=1.5]
%\draw[step=0.5cm,color=gray, thick] (-1,-1) grid (1,1);
%\fill[blue, opacity=0.2] (-1, 1) rectangle (1, 0.9);
%\fill[blue, opacity=0.2] (-1, 0.6) rectangle (1, 0.5);
%\fill[blue, opacity=0.2] (-1, 0.5) rectangle (1, 0.4);
%\fill[blue, opacity=0.2] (-1, 0.1) rectangle (1, 0);
%\fill[blue, opacity=0.2] (-1, 0) rectangle (1, -0.1);
%\fill[blue, opacity=0.2] (-1, -0.4) rectangle (1, -0.5);
%\fill[blue, opacity=0.2] (-1, -0.5) rectangle (1, -0.6);
%\fill[blue, opacity=0.2] (-1, -0.9) rectangle (1, -1);
%\fill[blue, opacity=0.2] (-1, -1) rectangle (-0.9, 1);
%\fill[blue, opacity=0.2] (-0.5, -1) rectangle (-0.6, 1);
%\fill[blue, opacity=0.2] (-0.5, -1) rectangle (-0.4, 1);
%\fill[blue, opacity=0.2] (0, -1) rectangle (-0.1, 1);
%\fill[blue, opacity=0.2] (0, -1) rectangle (0.1, 1);
%\fill[blue, opacity=0.2] (0.5, -1) rectangle (0.4, 1);
%\fill[blue, opacity=0.2] (0.5, -1) rectangle (0.6, 1);
%\fill[blue, opacity=0.2] (1, -1) rectangle (0.9, 1);
%\end{tikzpicture}
%\caption{Illustration of $\bigcup_{i \in \{1, \dots, M^{2d}\}} (C_{i,2})_{1/M^{2p+2}}^0$ for $d=2$ and $M=2$}
%\label{fig1}
%\end{figure}

\begin{lemma}
\label{le7}
Let $\sigma:\R \to [0,1]$ be the sigmoid activation function $\sigma(x) = 1/(1+\exp(-x))$. Let $\mathcal{P}_2$ be defined as in \eqref{partition}. Let $p = q+s$ for some $q \in \N_0$ and $s \in (0,1]$, and let $C >0$. Let $f: \Rd \to \R$ be a $(p,C)$-smooth function.
    Let $1 \leq a < \infty$. Then there exists for $M \in \N$ sufficiently large (independent of the size of $a$, but 
    \begin{align*}
    M^{2p} \geq c_{2}\left(\max\left\{a, \|f\|_{C^{q}([-a,a]^d)}\right\}\right)^{5q+3} \ \mbox{and} \ M^{2p} \geq c_{11} (2ad)^{2p} C
    \end{align*}
%    and
%    \[
%M^{2p} \geq c_{40} (2ad)^{2p} C
%    \]
     must hold), a neural network
$f_{net, \P_2} \in \mathcal{F}(L,r, \alpha_{net, \P_2})$ with
\begin{itemize}
\item[(i)] $L= 5+ \lceil \log_2(q+1) \rceil $
\item[(ii)] $r= \max\left\{\left(\binom{d+q}{d} + d\right)M^d(2+2d)+d, 4(q+1)\binom{d+q}{d}\right\}$
\item[(iii)] $\alpha_{net, \P_2} = c_{12}\left(\max\left\{a, \|f\|_{C^q([-a,a]^d)}\right\}\right)^{6} M^{6p+2d+6}$
\end{itemize}
such that 
\begin{align*}
&|f_{net, \mathcal{P}_2}(\x) - f(\x)| \leq  \frac{c_{2} \left(\max\left\{a, \|f\|_{C^q([-a,a]^d)} \right\}\right)^{5q+3}}{M^{2p}}
\end{align*}
holds for all $\x \in \bigcup_{k \in \{1, \dots, M^{2d}\}} \left(C_{k,2}\right)_{1/M^{2p+2}}^0$. The network value is bounded by 
\begin{align*}
\|f_{net, \mathcal{P}_2}\|_{\infty, [-a,a]^d} \leq 1+ 2^{2(d+1)}e^{2^{2d+3}ad} \max\{\|f\|_{C^q([-a,a]^d)}, 1\}.
\end{align*}
\end{lemma}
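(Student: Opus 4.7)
The plan is to realize the three-step recursion of Key step 1 by stacking subnetworks built from Lemmas 1--4, in such a way that the output reproduces $\phi_{1,3}$ up to the prescribed accuracy whenever $\x$ lies in the interior region $\bigcup_k (C_{k,2})_{1/M^{2p+2}}^0$. Since \autoref{le6} identifies $\phi_{1,3}$ with the piecewise Taylor polynomial and \autoref{le5} controls its deviation from $f$ on $[-a,a)^d$ by $c_{11}(2ad)^{2p}C/M^{2p}$, any network approximating $\phi_{1,3}$ to within that order will approximate $f$ to the same order; the task reduces to tracking the errors and the network complexity through the recursion.

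First I would build stage 1. In parallel I route $\x$ through $d$ copies of the identity network $f_{id}$ of \autoref{le1} to obtain $\bm{\phi}_{1,1}^{net}$, and for every $k\in\{1,\dots,M^d\}$ I apply the product-with-indicator network $f_{test}$ of \autoref{le4}(b) with cube $C_{k,1}$ and score $(C_{k,1})_{left}^{(j)}$ (one per coordinate $j$), as well as with score $(\partial^{\bll} f)((\tilde C_{s,k})_{left})$ for each $\bll$ with $\|\bll\|_1\le q$ and each $s\in\{1,\dots,M^d\}$. Summing over $k$ in the linear layer yields $\bm{\phi}_{2,1}^{net}$ and $\phi_{3,1}^{(\bll,s),net}$. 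This stage uses $M^d$ copies of $f_{test}$ for each of the $\binom{d+q}{d}+d$ scalar quantities, of width $2+2d$ each, plus $d$ identity channels, matching the first term in the stated $r$. Stage 2 stacks a second layer of $f_{test}$: since \eqref{Aj} identifies $\mathcal{A}^{(s)}$ as a translate of a fixed box anchored at $\bm{\phi}_{2,1}$, feeding $\bm{\phi}_{1,1}^{net}-\bm{\phi}_{2,1}^{net}-\bm{v}_s$ together with the score $\bm{\phi}_{2,1}^{net}+\bm{v}_s$ (resp.\ $\phi_{3,1}^{(\bll,s),net}$) into $f_{test}$ produces $\bm{\phi}_{2,2}^{net}$ and $\phi_{3,2}^{(\bll),net}$. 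Stage 3 passes $(\bm{\phi}_{1,2}^{net}-\bm{\phi}_{2,2}^{net},\phi_{3,2}^{(\bll),net})$ into the polynomial network $f_p$ of \autoref{le3} with $N=q$ and coefficients $1/\bll!$, adding $\lceil\log_2(q+1)\rceil$ layers and width $4(q+1)\binom{d+q}{d}$ (the second term of $r$). Absorbing the final multiplication layer of the second $f_{test}$ into the first layer of $f_p$ brings the total depth to $5+\lceil\log_2(q+1)\rceil$.

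For the approximation bound, I would take $\delta=1/M^{2p+2}$ in \autoref{le4}, together with $R_{id},R_{mult},R$ of \autoref{le1}--\autoref{le3} polynomial in $M$, $a$ and $\|f\|_{C^q([-a,a]^d)}$, so that every primitive errs by at most $c\bigl(\max\{a,\|f\|_{C^q}\}\bigr)^{O(1)}/M^{2p+2}$. On the interior set $\x$ is $\delta$-separated from the boundary of both $C_{\P_1}(\x)$ and the relevant $\mathcal{A}^{(s)}$, so \autoref{le4}(a) yields correct indicators in \emph{both} stages (the stage-1 perturbation of $\bm{\phi}_{2,1}$ is much smaller than $\delta/2$, keeping $\x$ well inside the perturbed polytope). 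A triangle-inequality walk through the recursion then shows that the error at $\phi_{1,3}^{net}$ is dominated by the polynomial error $c_{9}\bar r(p)a^{5q+3}/R$ from \autoref{le3}, which is $O((\max\{a,\|f\|_{C^q}\})^{5q+3}/M^{2p})$ as soon as $R\sim M^{2p}$. Tracking the $R$'s together with $B_1\sim\log(M)$ and $B_2\sim M^{2p+2}$ through the construction reproduces the weight bound $\alpha_{net,\P_2}=O((\max\{a,\|f\|_{C^q}\})^{6}M^{6p+2d+6})$.

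The main obstacle will be the uniform $L^\infty$ bound on $f_{net,\P_2}$ \emph{outside} the interior region, where several indicator outputs may be simultaneously nonzero and need not sum to one. The key observation is that each $f_{ind}$ lies in $[0,1]$, so $\bm{\phi}_{2,2}^{net}$ is, up to $O(M^{-2p})$ leakage, a sub-convex combination of points in $[-a,a]^d$ and hence bounded, and similarly $|\phi_{3,2}^{(\bll),net}|\le\|f\|_{C^q([-a,a]^d)}+O(M^{-2p})$. Plugging these uniform bounds into the polynomial $p$ of degree $q$ with arguments of size $O(a)$ and coefficients $1/\bll!$, and using that the weights of $f_p$ scale at most as $R^4$ while its depth is $\lceil\log_2(q+1)\rceil$, one obtains after absorbing constants the claimed double-exponential estimate $1+2^{2(d+1)}e^{2^{2d+3}ad}\max\{\|f\|_{C^q([-a,a]^d)},1\}$.
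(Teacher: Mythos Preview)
Your overall plan is the paper's plan: realize the recursion of Key step~1 by stacking $f_{id}$, $f_{ind}/f_{test}$, and $f_p$ from Lemmas~1--4, and then appeal to Lemmas~5 and~6. The depth and width bookkeeping you outline also tracks the paper's.

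One small discrepancy: in stage~1 the paper does \emph{not} use $f_{test}$. Since the scores $(C_{k,1})_{left}^{(j)}$ and $(\partial^{\bll}f)((\tilde C_{s,k})_{left})$ are fixed constants, the paper simply multiplies the output of the two-layer network $f_{ind,C_{k,1}}$ by these constants in the linear output, so stage~1 has depth~$2$, not~$3$. With your choice of $f_{test}$ in stage~1 the total depth would be $6+\lceil\log_2(q+1)\rceil$ unless the absorption trick you mention actually works at both stages; it is cleaner just to use $f_{ind}$ with constant multipliers here.

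The genuine gap is the argument for the uniform bound on $[-a,a]^d$. You assert that because each $f_{ind}\in[0,1]$, the quantity $\bm\phi_{2,2}^{net}$ is ``up to leakage a sub-convex combination of points in $[-a,a]^d$''. That is false: near a corner of the partition as many as $2^d$ of the indicator networks can be simultaneously close to~$1$, so the coefficients can sum to about $2^d$, not at most~$1$; and using only $f_{ind}\in[0,1]$ over $M^d$ summands would give $|\hat\phi_{2,1}^{(t)}|\le aM^d$, which blows up. The paper's argument is different and essential: for at most $2^d$ cubes of $\P_1$ can $\x$ lie within $1/M^{2p+2}$ of the boundary, and for all remaining cubes Lemma~4(a) forces $f_{ind,C_{k,1}}(\x)\le 1/B_{M,\epsilon}$. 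Hence the $M^d-2^d$ ``far'' summands contribute at most $aM^d/B_{M,\epsilon}\le a$, the at most $2^d$ ``near'' summands contribute at most $2^d a$, and one obtains $|\hat\phi_{2,1}^{(t)}|\le 2^{d+1}a$ and $|\hat\phi_{3,1}^{(\bll,s)}|\le 2^{d+1}\max\{\|f\|_{C^q},1\}$. Repeating the same counting at stage~2 squares the factor to $2^{2(d+1)}$, and only then does inserting $|z^{(j)}|\le 2^{2(d+1)+1}a$ and $|y_v|\le 2^{2(d+1)}\max\{\|f\|_{C^q},1\}$ into $\sum_{\bll}|y_\bll|\,|z|^{\bll}/\bll!\le 2^{2(d+1)}\max\{\|f\|_{C^q},1\}\bigl(\sum_{k\ge 0}(2^{2(d+1)+1}a)^k/k!\bigr)^d$ produce the $e^{2^{2d+3}ad}$ in the stated bound. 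Without this $2^d$-counting step your sketch does not close.
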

\begin{proof}[\rm{\textbf{Proof of \autoref{le7}}}]
The proof is divided into \textit{three} steps. \\
\textit{Step 1: Network architecture:} The recursively defined function $\phi_{1,3}$ of \autoref{le6} can be approximated by neural networks. In the construction we will use the network $f_{id} \in \mathcal{F}(1,1,c_{13} B_{M, id})$
 from \autoref{le1} satisfying 
 \begin{align}
\label{le7eq1}
\left|f_{id}^t(\x) - \x\right| &\leq \sum_{k=1}^t\left|f_{id}^t(\x) - f_{id}^{t-1}(\x)\right| \leq \sum_{k=1}^t\left|f_{id}(f_{id}^{t-1}(\x)) - f_{id}^{t-1}(\x)\right| \leq \frac{t}{B_{M,id}} 
\end{align}
for $\x \in [-2a,2a]^d$, $B_{M, id} \in \N_0$ and $t \in \N$ (here we choose 
\begin{align*}
R=(t-1)B_{M,id}\Vert \sigma''\Vert_{\infty} 2a^2/ (|\sigma'(t_{\sigma,id})|)
\end{align*}
in \autoref{le1}). Furthermore we use the network $f_{ind, [\bold{a}, \bold{b})} \in \mathcal{F}(2, 2d, \alpha_{ind, [\bold{a}, \bold{b})})$
of \autoref{le4} for some $\bold{a}, \bold{b} \in \Rd$, $B_{M, \epsilon}, B_M \in \N$ with
\begin{align*}
b^{(i)} - a^{(i)} \geq \frac{2}{B_M} \ \mbox{for all} \ i \in \{1, \dots, d\}
\end{align*} and
\begin{align*}
\alpha_{ind, [\bold{a}, \bold{b})} = \max\left\{5 \ln\left(B_{M, \epsilon}-1\right), \max_{i \in \{1, \dots, d\}} |b^{(i)}| B_M\ln(3)\right\}, 
\end{align*}
which satisfies
\begin{align}
\label{le7eq2}
\left|f_{ind, [\bold{a}, \bold{b})}(\x) - \mathds{1}_{[\bold{a}, \bold{b})}(\x)\right| \leq \frac{1}{B_{M, \epsilon}}
\end{align}
for
\begin{align*}
x^{(i)} \notin \Big[a^{(i)}-\frac{1}{B_M}, a^{(i)} + \frac{1}{B_{M}}\Big] \cup \Big[b^{(i)} - \frac{1}{B_{M}}, b^{(i)}+\frac{1}{B_M} \Big] \ \mbox{for all} \ i \in \{1, \dots, d\}
\end{align*}
and the network $f_{test} \in \mathcal{F}\left(3, 2d+2, \alpha_{test} \right)$
with
\begin{align*}
\alpha_{test} = c_{14} \max\left\{\left(\max\{a, \|f\|_{C^q([-a,a]^d)}\}\right)^5B_{M, \epsilon}^2, B_M\right\}
\end{align*}
and 
\begin{align*}
|s| \leq 2\max\left\{a, \|f\|_{C^q([-a,a]^d)}\right\}, 
\end{align*}
which satisfies
\begin{align}
\label{le7eq15}
\left|f_{test}(\x, \bold{a}, \bold{b}, s) - s\mathds{1}_{[\bold{a}, \bold{b})}(\x)\right| \leq \frac{1}{B_{M, \epsilon}}
\end{align}
for
\begin{align*}
x^{(i)} \notin \Big[a^{(i)}-\frac{1}{B_M}, a^{(i)} + \frac{1}{B_{M}}\Big] \cup \Big[b^{(i)} - \frac{1}{B_{M}}, b^{(i)}+\frac{1}{B_M} \Big] \ \mbox{for all} \ i \in \{1, \dots, d\}.
\end{align*}
Here we treat $1/B_M$ as $\delta$, $1/B_{M, \epsilon}$ as $\epsilon$ and  $2\max\{a, \|f\|_{C^q([-a,a]^d)}\}$ as $R$ in \autoref{le4}. For some vector $\bold{v} =(v^{(1)}, \dots, v^{(d)}) \in \R^d$ it follows
\begin{align*}
&\bold{v}f_{ind, [\bold{a}, \bold{b})}(\x) = \left(v^{(1)}f_{ind,[\bold{a}, \bold{b})}(\x), \dots, v^{(d)}f_{ind, [\bold{a}, \bold{b})}(\x)\right).
\end{align*} To compute the final Taylor polynomial we use the network 
\begin{align*}
f_{p} \in \mathcal{F}\left(\lceil \log_2(q+1) \rceil, 4(q+1) \binom{d+q}{d}, \alpha_p\right)
\end{align*}
with
\begin{align*}
\alpha_p = c_{15}\max\{\bar{r}(p),1\} B_{M,p}^2
\end{align*}
from \autoref{le3}, satisfying
\begin{align}
\label{fpeq}
&\left|f_{p}\left(\bold{z}, y_1, \dots, y_{\binom{d+q}{q}}\right) - p\left(\bold{z}, y_1, \dots, y_{\binom{d+q}{q}}\right)\right| \leq c_{18} \bar{r}(p) \frac{\left(\max\left\{2a, \|f\|_{C^q([-a,a]^d)}\right\}\right)^{5q+3}}{B_{M,p}}
\end{align}
for all $z^{(1)}, \dots, z^{(d)}, y_1, \dots, y_{\binom{d+q}{d}}$ contained in
\begin{align*}
\left[-2^{2(d+1)} \max\left\{2a, \|f\|_{C^q([-a,a]^d)}\right\}, 2^{2(d+1)}\max\left\{2a, \|f\|_{C^q([-a,a]^d)}\right\}\right],
\end{align*}
where $B_{M,p} \in \N$ satisfies
\begin{align*}
B_{M,p} =M^{2p} \geq c_{19}
 \left( \max\left\{2a, \|f\|_{C^q([-a,a]^d)}\right\}\right)^{3(q+1)}.
\end{align*}
 Here we treat $B_{M,p}$ as $R$ in \autoref{le3}.\\
\\
In the following each function of the recursion of $\phi_{1,3}$ is computed by a neural network: To compute the values of $\bm{\phi}_{1,1}$, $\bm{\phi}_{2,1}$ and $\phi_{3, 1}^{(\bll, s)}$ we use for $s \in \{1, \dots, M^d\}$ and $\bll \in \N_0^d$ with $\|\bll\|_1 \leq q$ the networks
\begin{align*}
\bm{\hat{\phi}}_{1,1}=(\hat{\phi}_{1,1}^{(1)}, \dots, \hat{\phi}_{1,1}^{(d)}) = f_{id}^2 (\x), \quad \bm{\hat{\phi}}_{2,1} = (\hat{\phi}_{2,1}^{(1)}, \dots, \hat{\phi}_{2,1}^{(d)})= \sum_{k \in \{1, \dots, M^d\}} (C_{k,1})_{left}  f_{ind, {C_{k,1}}}(\x)
\end{align*}
and 
\[
 \hat{\phi}_{3, 1}^{(\bll, s)} = \sum_{k \in \{1, \dots, M^d\}} (\partial^{\bll} f) \left((\tilde{C}_{s,k})_{left}\right) f_{ind,{C_{k,1}}}(\x).
\]
To compute $\bm{\phi}_{1,2}$, $\bm{\phi}_{2,2}$ and $\phi_{3,2}^{(\bll)}$
we use the networks
\[
\bm{\hat{\phi}}_{1,2}= (\hat{\phi}_{1,2}^{(1)},\dots, \hat{\phi}_{1,2}^{(d)})= f_{id}^{3} (\bm{\hat{\phi}}_{1,1}),
\]
\begin{align*}
&\hat{\phi}_{2,2}^{(t)}= \sum_{s=1}^{M^d} f_{test}\left(\bm{\hat{\phi}}_{1,1}, \bm{\hat{\phi}}_{2,1} + \bold{v}_s, \bm{\hat{\phi}}_{2,1} + \bold{v}_s+\frac{2a}{M^2}\mathbf{1}, \hat{\phi}_{2,1}^{(t)} + v_s^{(t)}\right) \ \mbox{and} \ \bm{\hat{\phi}}_{2,2}= (\hat{\phi}_{2,2}^{(1)}, \dots, \hat{\phi}_{2,2}^{(d)})
\end{align*}
for $t \in \{1, \dots, d\}$ and
\begin{align}
\label{neur32}
&\hat{\phi}_{3,2}^{(\bll)} = \sum_{s=1}^{M^d} f_{test}\left(\bm{\hat{\phi}}_{1,1}, \bm{\hat{\phi}}_{2,1} + \bold{v}_s, \bm{\hat{\phi}}_{2,1} + \bold{v}_s+\frac{2a}{M^2}\mathbf{1}, \hat{\phi}_{3, 1}^{(\bll, s)}\right).
\end{align}
%
%Each of the networks $neur_{3,\bll}^{(2)} $ needs $M^d \cdot 2 \cdot (2+2d)$ neurons, which leads to $\binom{d+q}{d} \cdot M^d \cdot 2 \cdot (2+2d)$ neurons in total. Each network $\left(neur_{2}^{(s)}\right)^{(2)}$ also needs $M^d \cdot 2 \cdot (2+2d)$, which leads to $d \cdot M^d \cdot 2 \cdot (2+2d)$ for $neur_{2}^{(2)}$. Thus our network needs at least
%\begin{align*}
%r&=\binom{d+q}{d} \cdot M^d \cdot 2 \cdot (2+2d) + d \cdot M^d \cdot 2 \cdot (2+2d)\\
%&= \left(\binom{d+q}{d} + d\right) \cdot M^d \cdot 2 \cdot (2+2d)
%\end{align*}
%neurons per layer and \textit{four} hidden layers. 
Choose $\bll_1, \dots, \bll_{\binom{d+q}{d}}$ such that
\begin{align*}
\left\{\bll_1, \dots, \bll_{\binom{d+q}{d}}\right\} = \left\{(s_1, \dots, s_d) \in \N_0^d: s_1+\dots+s_d \leq q \right\} \in \N_0^d
\end{align*}
holds. 
The value of $\phi_{1,3}$ can then be computed by 
\begin{align}
\label{fp}
%f_{net, \mathcal{P}_2}(x) = 
\hat{\phi}_{1,3}= f_p\left(\bold{z}, y_1, \dots, y_{\binom{d+q}{d}}\right),
\end{align}
where 
\begin{align*}
\bold{z}= \bm{\hat{\phi}}_{1,2}- \bm{\hat{\phi}}_{2,2} \ \mbox{and} \ y_v = \hat{\phi}_{3,2}^{(\bll_v)} 
\end{align*}
for $v \in \left\{1, \dots, \binom{d+q}{d}\right\}$. The coefficients $r_1, \dots, r_{\binom{d+q}{d}}$ in \autoref{le3} are chosen as 
\begin{align*}
r_i = \frac{1}{\ell_i^{(1)}! \cdots \ell_i^{(d)}!}, \quad i \in \left\{1, \dots, \binom{d+q}{d}\right\}.
\end{align*}
%Since the networks $neur_1^{(1)}$, $neur_2^{(1)}$, \dots, $neur_{2+M^d}^{(1)}$ are computed in parallel, the parallelized network lies in the class
%\begin{align*}
%\mathcal{F}(2, 2d+2d \cdot M^d+2d),
%\end{align*}

It is easy to see, that the network $\hat{\phi}_{1,3}$ forms a composed network. Here
\begin{align*}
\left(\bm{\hat{\phi}}_{1,1}, \bm{\hat{\phi}}_{2,1}, \hat{\phi}_{3,1}^{(\bll_v, s)}\right) \quad \left(v \in \left\{1, \dots, \binom{d+q}{d}\right\}, s \in \{1, \dots, M^d\}\right)
\end{align*}
is a parallelized network with depth $2$, width
\begin{align*}
r_1 = \left(\binom{d+q}{d} + d\right)M^d 2d+d 
\end{align*}
and all weights bounded by 
\begin{align*}
\alpha_1= \max\left\{\alpha_{id}, \max\{a, \|f\|_{C^q([-a,a]^d)}\}\max_{i \in \{1, \dots, M^d\}} \alpha_{ind, C_{i,1}}\right\}.
\end{align*}
%The main trick we used in the construction of our network is, that fully connected networks with width $c_{16} M^d$ have $c_{17} M^{2d}$ connections between two consecutive layers. Since we have $M^{2d}$ different cubes in $\P_2$, we are then able to compute every derivative of $f$ for every cube of $\P_2$ by choosing the derivatives as the weights in our network. \\
%\\
Following we compute the parallelized network 
\begin{align*}
(\bm{\hat{\phi}}_{1,2}, \bm{\hat{\phi}}_{2,2}, \hat{\phi}_{3,2}^{(\bll_v)}) \quad \left(v \in \left\{1, \dots, \binom{d+q}{d}\right\}\right)
\end{align*}
with depth $5$, width
\begin{align*}
r_2 = \left(\binom{d+q}{d} + d\right) M^d (2+2d)+d 
\end{align*}
and all components bounded by $\alpha_2=\max\{\alpha_1,1\} \max\{\alpha_{test},1\}$.
%
%$(neur_1^{(1)}, neur_2^{(1)}, neur_{2+j, \bll_v})$ $(v \in \left\{1, \dots, \binom{d+q}{d}\right\}, j \in \{1, \dots, M^d\})$ and $(neur_1^{(2)}, neur_2^{(2)}, neur_{3, \bll_v}^{(2)})$ $(v \in \{1, \dots, \binom{d+q}{d}\})$ are parallelized networks, respectively.
 Thus we can conclude, that $\hat{\phi}_{1,3}$ lies in the class
\begin{align*}
\mathcal{F}\left(5+ \lceil \log_2(q+1) \rceil, r, \alpha\right)
\end{align*}
with 
\begin{align*}
r=\max\left\{r_2, 4(q+1)\binom{d+q}{d}\right\}
\end{align*}
and all weights bounded by 
\begin{align*}
\alpha = \max\{\alpha_2, \max\{\alpha_{id}, \alpha_{test},1\} \alpha_p\}.
\end{align*}
Here we have used, that
\begin{align*}
\mathcal{F}(L, r', \alpha') \subseteq \mathcal{F}(L, r, \alpha)
\end{align*}
for $r' \leq r$ and $\alpha' \leq \alpha$. Finally we set
\begin{align*}
f_{net, \mathcal{P}_2}(\x) = \hat{\phi}_{1,3}.
\end{align*}
\textit{Step 2: Approximation error of the network:} We analyze the error of the network $f_{net, \mathcal{P}_2}$ in case that 
\begin{align*}
&B_{M} = 4M^{2p+2}, \quad B_{M,id} = 12M^{2p+2}, \quad B_{M,\epsilon} = 4\max\left\{a, \|f\|_{C^q([-a,a]^d)} \right\}M^{2p+2+d}, \\
&B_{M,p} = M^{2p}
\end{align*}
and 
\begin{align}
\label{eqx}
\x \in \bigcup_{k \in \{1, \dots, M^{2d}\}} \left(C_{k,2}\right)_{1/M^{2p+2}}^0.
\end{align}
In the following we successively compute the approximation errors of each network of $\hat{\phi}_{1,3}$ to finally achieve an overall error bound for $f_{net, \P_2}$. From \eqref{le7eq1} we can conclude that 
\begin{align}
\label{le7eq11}
|\hat{\phi}_{1,1}^{(s)} -x^{(s)}| =|f_{id}^{2}(x^{(s)}) -x^{(s)}| \leq \frac{2}{12M^{2p+2}} = \frac{1}{6M^{2p+2}}
\end{align}
for all $s \in \{1, \dots, d\}$.
Since the components of $\bm{\hat{\phi}}_{1,1}$ are bounded by 
\begin{align}
\label{le7eq4}
|\hat{\phi}_{1,1}^{(s)}| \leq |\hat{\phi}_{1,1}^{(s)} -x^{(s)}| + |x^{(s)}| \leq 2a
\end{align}
for all $s \in \{1, \dots, d\}$
this leads to
\begin{align}
\label{le7eq17}
|\hat{\phi}_{1,2}^{(s)} -x^{(s)}| \leq |f_{id}^{3}(\hat{\phi}_{1,1}^{(s)}) - \hat{\phi}_{1,1}^{(s)}| + |\hat{\phi}_{1,1}^{(s)} -x^{(s)}| \leq \frac{3}{12M^{2p+2}} + \frac{1}{6M^{2p+2}} \leq \frac{1}{2M^{2p+2}}.
\end{align}
It is easy to see that inequalities \eqref{le7eq2} and \eqref{le7eq15} hold (due to the choice of $B_M$) for \eqref{eqx}. Thus we can conclude that
\begin{align}
\label{le7eq5}
\left|\hat{\phi}_{2,1}^{(s)} - \phi_{2,1}^{(s)}\right| \leq\sum_{k \in \{1, \dots, M^d\}} (C_{k,1})^{(s)}_{left} \left|f_{ind, C_{k,1}}(\x) - \mathds{1}_{C_{k,1}}(\x)\right| \leq \frac{1}{4M^{2p+2}}
\end{align}
for $s \in \{1, \dots, d\}$. Furthermore we have for $s \in \{1, \dots, M^d\}$ and $v \in \left\{1, \dots, \binom{d+q}{d}\right\}$
\begin{align*}
\left|\hat{\phi}_{3, 1}^{(\bll_v, s)} - \phi_{3, 1}^{(\bll_v, s)}\right| \leq \sum_{k \in \{1, \dots, M^d\}} \left|(\partial^{\bll_v} f)((\tilde{C}_{s,k})_{left})\right|\left|f_{ind, C_{k,1}}(\x) - \mathds{1}_{C_{k,1}}(\x)\right| \leq \frac{1}{4M^{2p+2}}.
\end{align*}
Since 
\begin{align*}
\x \in \bigcup_{k \in \{1, \dots, M^{2d}\}} \left(C_{k,2}\right)_{1/M^{2p+2}}^0
\end{align*}
by assumption, we have
\begin{align*}
|\phi_{2,1}^{(s)} - x^{(s)}| \geq \frac{1}{M^{2p+2}} \ \mbox{and} \ \left|\phi_{2,1}^{(s)} + \frac{2a}{M^2} - x^{(s)}\right| \geq \frac{1}{M^{2p+2}}
\end{align*}
for $s \in \{1, \dots, d\}$. Together with \eqref{le7eq11} and \eqref{le7eq5} we can conclude that
\begin{align}
\label{le7eq10}
|\hat{\phi}_{1,1}^{(s)}- \hat{\phi}_{2,1}^{(s)}| \geq \frac{1}{2M^{2p+2}} \ \mbox{and} \ \left|\hat{\phi}_{1,1}^{(s)} - \hat{\phi}_{2,1}^{(s)}- \frac{2a}{M}\right| \geq \frac{1}{2M^{2p+2}}.
\end{align}
In a next step we analyze the error of $\bm{\hat{\phi}}_{2,2}$ and $\hat{\phi}_{3,2}^{(\bll_v)}$. Let
\begin{align*}
\bar{\mathcal{A}}^{(s)} = &\bigg\{\x \in \R^d: -x^{(t)} +\hat{\phi}_{2,1}^{(t)} +  v_s^{(t)} \leq 0 \ \mbox{and} \ x^{(t)} - \hat{\phi}_{2,1}^{(t)} - v_s^{(t)} - \frac{2a}{M^2} < 0\\
& \hspace*{8.5cm} \mbox{for all} \ t \in \{1, \dots, d\}\bigg\}.
\end{align*}
Then we have for $t \in \{1, \dots, d\}$
\begin{align}
\left|\hat{\phi}_{2,2}^{(t)} - \phi_{2,2}^{(t)}\right| 
&\leq \sum_{s=1}^{M^d} \left|f_{test}(\bm{\hat{\phi}}_{1,1}, \bm{\hat{\phi}}_{2,1} + \bold{v}_s, \bm{\hat{\phi}}_{2,1} + \bold{v}_s + \frac{2a}{M} \mathbf{1}, \hat{\phi}_{2,1}^{(t)}+ v_s^{(t)})\right. \notag\\
& \hspace*{7cm} \left. - (\phi_{2,1}^{(t)}+ v_s^{(t)}) \mathds{1}_{\mathcal{A}^{(s)}} (\bm{\phi}_{1,1})\right| \notag\\
& \leq \sum_{s=1}^{M^d} \left|f_{test}(\bm{\hat{\phi}}_{1,1}, \bm{\hat{\phi}}_{2,1} + \bold{v}_s, \bm{\hat{\phi}}_{2,1} + \bold{v}_s + \frac{2a}{M} \mathbf{1}, \hat{\phi}_{2,1}^{(t)} + v_s^{(t)})\right. \notag\\
& \hspace*{7cm} \left. -  (\hat{\phi}_{2,1}^{(t)} + v_s^{(t)}) \mathds{1}_{\bar{\mathcal{A}}^{(s)}}(\bm{\hat{\phi}}_{1,1})\right|
\label{le7eq7}
\\
&  \quad + \sum_{s=1}^{M^d} \left|(\hat{\phi}_{2,1}^{(t)} + v_s^{(t)}) \mathds{1}_{\bar{\mathcal{A}}^{(s)}}(\bm{\hat{\phi}}_{1,1})- (\phi_{2,1}^{(t)} + v_s^{(t)})\mathds{1}_{\bar{\mathcal{A}}^{(s)}}(\bm{\hat{\phi}}_{1,1})\right|
\label{le7eq8}
\\
& \quad + \sum_{s=1}^{M^d} \left|(\phi_{2,1}^{(t)} + v_s^{(t)}) \mathds{1}_{\bar{\mathcal{A}}^{(s)}}(\bm{\hat{\phi}}_{1,1}) - (\phi_{2,1}^{(t)} + v_s^{(t)}) \mathds{1}_{\mathcal{A}^{(s)}}(\bm{\phi}_{1,1})\right|.
\label{le7eq9}
\end{align}
By \eqref{le7eq5} we can conclude that
\begin{align*}
|\hat{\phi}_{2,1}^{(t)}+v_s^{(t)}| \leq \frac{1}{4M^{2p+2}}+|\phi_{2,1}^{(t)}+v_s^{(t)}| \leq \frac{1}{4M^{2p+2}} + a \leq 2a.
\end{align*}
By \eqref{le7eq10} it follows that $\hat{\phi}_{1,1}^{(t)}$ is not contained in 
\begin{align*}
 &\left(\hat{\phi}_{2,1}^{(t)}+v_s^{(t)}-\frac{1}{B_M}, \hat{\phi}_{2,1}^{(t)} + v_s^{(t)} + \frac{1}{B_M}\right) \\ 
 &\cup  \left(\hat{\phi}_{2,1}^{(t)} + v_s^{(t)} + \frac{2a}{M} - \frac{1}{B_M}, \hat{\phi}_{2,1}^{(t)} + v_s^{(t)} + \frac{2a}{M}+\frac{1}{B_M}\right)
\end{align*}
for all $t \in \{1, \dots, d\}$. Thus we can conclude by \eqref{le7eq15} and with the choice of $B_{M, \epsilon}$ that inequality \eqref{le7eq7} is bounded by 
\begin{align*}
&\sum_{s=1}^{M^d} \left|f_{test}(\bm{\hat{\phi}}_{1,1}, \bm{\hat{\phi}}_{2,1}+ \bold{v}_s, \bm{\hat{\phi}}_{2,1} + \bold{v}_s + \frac{2a}{M}\mathbf{1}, \hat{\phi}_{2,1}^{(t)} + v_s^{(t)}) -  (\hat{\phi}_{2,1}^{(t)} + v_s^{(t)}) \mathds{1}_{\bar{\mathcal{A}}^{(s)}}(\bm{\hat{\phi}}_{1,1})\right|\leq \frac{1}{4M^{2p+2}}.
\end{align*}
Using \eqref{le7eq5} we get a bound for inequality \eqref{le7eq8}:
\begin{align*}
\sum_{s=1}^{M^d} \left|(\hat{\phi}_{2,1}^{(t)}+ v_s^{(t)}) \mathds{1}_{\bar{\mathcal{A}}^{(s)}}(\bold{\bm{\hat{\phi}}}_{1,1}) - (\phi_{2,1}^{(t)} + v_s^{(t)}) \mathds{1}_{\bar{\mathcal{A}}^{(s)}}(\bm{\hat{\phi}}_{1,1})\right| \leq \frac{1}{4M^{2p+2}}.
\end{align*}
Replacing $\mathcal{A}^{(s)}$ (see \eqref{Aj}) by $\bar{\mathcal{A}}^{(s)}$ means replacing $\bm{\phi}_{2,1}$  by $\bm{\hat{\phi}}_{2,1}$. This leads to a change of the boundaries of the cubes of at most  $1/4M^{2p+2}$. Furthermore the computation of $\bm{\phi}_{1,1}$ by $\bm{\hat{\phi}}_{1,1}$ induces an error of at most $1/4M^{2p+2}$.  Combining this with the fact that 
\begin{align*}
\x \in \bigcup_{k \in \{1, \dots, M^{2d}\}} \left(C_{k,2}\right)_{1/M^{2p+2}}^0
\end{align*}
leads to
\begin{align*}
 \mathds{1}_{\bar{\mathcal{A}}^{(s)}}(\bm{\hat{\phi}}_{1,1}) =  \mathds{1}_{\mathcal{A}^{(s)}}(\x), 
\end{align*}
for each $s \in \{1, \dots, M^d\}$, such that \eqref{le7eq9} is zero. Summarizing the above results we can conclude that
\begin{align}
\label{le7eq16}
\left|\hat{\phi}_{2,2}^{(t)} -\phi_{2,2}^{(t)}\right| \leq \frac{1}{4M^{2p+2}} + \frac{1}{4M^{2p+2}} + 0 = \frac{1}{2M^{2p+2}}.
\end{align}
With the same argumentation as above we can conclude that
\begin{align*}
\left|\hat{\phi}_{3, 2}^{(\bll_v)} - \phi_{3, 2}^{(\bll_v)}\right| \leq \frac{1}{2M^{2p+2}}
\end{align*}
for $v \in \left\{1, \dots, \binom{d+q}{d}\right\}$. Then it follows that 
\begin{align}
\label{fs1}
\left|\hat{\phi}_{1,2}^{(t)} - \hat{\phi}_{2,2}^{(t)}\right| &\leq \left|\hat{\phi}_{1,2}^{(t)} -x^{(t)}\right| + \left|x^{(t)}-\phi_{2,2}^{(t)}\right| + \left|\phi_{2,2}^{(t)} - \hat{\phi}_{2,2}^{(t)}\right| \leq \frac{1}{2M^{2p+2}} + 2a + \frac{1}{2M^{2p+2}} \leq 3a.
\end{align}
and 
\begin{align*}
\left|\hat{\phi}_{3, 2}^{(\bll_v)}\right| \leq \left|\phi_{3, 2}^{(\bll_v)}-\hat{\phi}_{3, 2}^{(\bll_v)}\right| + \left|\phi_{3, 2}^{(\bll_v)}\right| \leq \frac{1}{2M^{2p+2}} + \|f\|_{C^q([-a,a]^d)} \leq 2\|f\|_{C^q([-a,a]^d)}.
\end{align*}
Therefore the input of $f_p$ in \eqref{fp} is contained in the interval where \eqref{fpeq} holds. Since $B_{M,p} =M^{2p}$
we get
\begin{align*}
  &
  \left|f_{net, \mathcal{P}_2}(\x) - T_{f,q,(C_{\mathcal{P}_2}(\x))_{left}}(\x)\right|
  =
  \left|\hat{\phi}_{1,3} - \phi_{1,3} \right| \leq \frac{c_{2} \left(\max\left\{a, \|f\|_{C^q([-a,a]^d)}\right\}\right)^{5q+3} }{M^{2p}}.
\end{align*}
This together with \autoref{le5} implies the first assertion of the lemma. The value of the network is then bounded by
\begin{align*}
\left|f_{net, \mathcal{P}_2}(\x)\right| \leq &\left|f_{net, \mathcal{P}_2}(\x) - T_{f,q,(C_{\mathcal{P}_2}(\x))_{left}}(\x)\right| + \left|T_{f,q,(C_{\mathcal{P}_2}(\x))_{left}}(\x) - f(\x) \right|+ \left|f(\x)\right| \\
\leq & 3 \max\left\{\|f\|_{\infty, [-a,a]^d},1\right\},
\end{align*}
where we have used that
\begin{align*}
M^{2p} \geq c_{2}\left(\max\left\{a, \|f\|_{C^q([-a,a]^d)}\right\}\right)^{5q+3}
\end{align*}
and $M^{2p} \geq c_{11} (2ad)^{2p} C$.
\\
\\
\textit{Step 3: Bound of the network's value:} We analyze the bound of $f_{net, \mathcal{P}_2}$ in case that
\begin{align*}
\x \in \bigcup_{k \in \{1, \dots, M^{2d}\}} C_{k,2} \textbackslash (C_{k,2})_{1/M^{2p+2}}^0.
\end{align*}
In particular we assume that $\x \in \bigcap_{k \in I} C_{k,2} \textbackslash (C_{k,2})_{1/M^{2p+2}}^0$, where $I \subset \{1, \dots, M^{2d}\}$ and $|I| \leq 2^d$. Note that for at most $2^d$ cubes of the partition $\mathcal{P}_1$ or $\mathcal{P}_2$ $\x$ can be close to the boundaries. For those cubes, the networks $f_{test}$ and $f_{ind, C_{k,1}}$ $(k \in \{1, \dots, M^d\})$ are not a good approximation (see \autoref{le4}). Particulary, this implies
\begin{align*}
\left|\hat{\phi}_{2,1}^{(t)}\right| \leq \sum_{k \in \{1, \dots, M^d\}} \left|(C_{k,1})_{left}^{(t)} f_{ind, C_{k,1}}(\x)\right| \leq a(M^d-2^d) \frac{1}{B_{M, \epsilon}} +2^da \leq 2^{d+1}a
\end{align*}
for $t \in \{1, \dots, d\}$, where we have used that for $k \in \{1, \dots, M^{d}\}\textbackslash I$ 
\begin{align*}
\mathds{1}_{C_{k,1}}(\x) =0 \ \mbox{and} \ |f_{ind, C_{k,1}}(\x)| \leq \frac{1}{B_{M, \epsilon}}
\end{align*}
and that $f_{ind, C_{k,1}}(\x) \in [0,1]$ for $\x \in \Rd$.
With the same argumentation we can bound
\begin{align*}
\left|\hat{\phi}_{3, 1}^{(\bll, s)}\right| \leq 2^{d+1}\max\{\|f\|_{C^q([-a,a]^d)},1\}
\end{align*}
for $s \in \{1, \dots, M^d\}$ and $\bll \in \N_0^d$ with $\|\bll\|_1\leq q$.
Since $f_{test}$ also produces for at most $2^d$ summands in \eqref{neur32} not a good approximation (and is of size $c_{20}/B_{M, \epsilon}$ in all other cases), this leads to 
\begin{align*}
\left|\hat{\phi}_{3, 2}^{(\bll)}\right| \leq 2^{2(d+1)} \max\{\|f\|_{C^q([-a,a]^d)}, 1\} \ \mbox{and} \ \left|\hat{\phi}_{2,2}^{(t)}\right| \leq 2^{2(d+1)} a
\end{align*}
%and 
%\begin{align}
%\label{le7eq18}
%\left|\hat{\phi}_{2,2}^{(s)}\right| \leq 2^{2(d+1)} a
%\end{align}
for $t \in \{1, \dots, d\}$.
We finally conclude that
\begin{eqnarray*}
\left|f_{net, \mathcal{P}_2}(\x)\right| &\leq &\left|f_p\left(\bold{z},y_1, \dots, y_{\binom{d+q}{d}}\right) - p\left(\bold{z},y_1, \dots, y_{\binom{d+q}{d}}\right)\right|+ \left|p\left(\bold{z},y_1, \dots, y_{\binom{d+q}{d}}\right)\right|\\
&\leq & 1 + \sum_{\bll \in \N_0^d: \|\bll\|_1 \leq q} \frac{2^{2(d+1)}}{\bll!}\max\{\|f\|_{C^q([-a,a]^d)},1\} \left(2^{2(d+1)+1}a\right)^{\|\bll\|_1}\\
&
\leq&
1 +
2^{2(d+1)}\max\{\|f\|_{C^q([-a,a]^d)}, 1\}
\left(
\sum_{k=0}^\infty \frac{(2^{2(d+1)+1}a)^{k}}{k!}
\right)^d
\\
&
=&  1+ 2^{2(d+1)}e^{2^{2(d+1)+1}ad} \max\{\|f\|_{C^q([-a,a]^d)}, 1\}.
\end{eqnarray*}
\end{proof}

%
%will approximate $f(x)$ well on
%all points which do not lie close to the border of any of the sets
%in $\P_2$.

\subsection{Key step 3: A network for the approximation of $w_{\mathcal{P}_2}(\x) f(\x)$}
In order to approximate $f(\x)$ in supremum norm, we will further use the
neural network of \autoref{le7} to construct a network which approximates $w_{\P_2}(\x)f(\x)$,
where
\begin{equation}
  \label{w_vb}
w_{\P_2}(\x) = \prod_{j=1}^d \left(1- \frac{M^2}{a}\left|(C_{\mathcal{P}_{2}}(x))_{left}^{(j)} + \frac{a}{M^2} - x^{(j)}\right|\right)_+
\end{equation}
is a linear tensorproduct B-spline
which takes its maximum value at the center of $C_{\P_{2}}(\x)$, which
is nonzero in the inner part of $C_{\P_{2}}(\x)$ and which
vanishes
outside of $C_{\P_{2}}(\x)$.
It is easy to see that
$w_{\P_2}(\x)$
is less than or equal to $1/M^{2p}$ in case that $\x$ is contained in
\[
\bigcup_{k \in \{1, \dots, M^{2d}\}}
C_{k, 2} \setminus
(C_{k, 2})_{1/M^{2p+2}}^0
.
\]
Since $w_{\P_2}(\x)$ is close to zero close to the boundary
of $C_{\P_2}(\x)$ it will be possible to construct this neural network
such that it approximates $w_{\P_2}(\x)f(\x)$ in supremum norm.
\\
\\
To construct this DNN, we need to approximate $w_{\P_2}(\x)$ in a first step. According to \eqref{w_vb} the function $w_{\P_2}(\x)$ forms a product of functions $g(z)=z_+=\max\{z,0\}$ $(z \in \R)$. The following lemma helps us to approximate $g(z)$ by a DNN.
%To approximate $w_{\P_2}(\x)$ we firstly need a network that approximates $x_+$ ($x \in \R$). 
%
%
%According to \eqref{wp2} 
%
% we firstly need a network that approximates the weight $w_{\mathcal{P}_2}(\x)$. To do this 
%%The next lemma shows that each weight $w_{\mathcal{P}_2}(\x)$ can be approximated by a neural network. 
%we use a neural network, that approximates the ReLU activation function. 

\begin{lemma}\label{le8}
  Let $\sigma: \mathbb{R} \to [0,1]$ be the sigmoid activation function $\sigma(x) = 1/(1+\exp(-x))$. Let $f_{mult}$ be the neural network of \autoref{le2}
    and let $f_{id}$ be the network of \autoref{le1}.
Assume
\begin{equation}
\label{le8eq1}
a \geq 1 \quad \mbox{and} \quad
R
\geq
\max \left(
2\| \sigma^{\prime \prime}\|_{\infty} a
, 1
\right).
\end{equation}
Then the neural network $f_{ReLU} \in \F(2,4,c_{17}R^2)$ 
%\begin{eqnarray*}
%  f_{ReLU}(x) &=&
%f_{mult} \left(f_{id}(x),\sigma \left(Rx \right) \right)
%  \\
%  &=&
%  \sum_{k=1}^4 d_k\sigma \left(\sum_{i=1}^2 b_{k,i}\sigma(a_ix + t_{\sigma})
%  +
%  b_{k,3}\sigma (a_3x)
%  +  t_{\sigma} \right)
%\end{eqnarray*}
satisfies
\begin{equation*}
|f_{ReLU}(x) - \max\{x,0\}| \leq208
\max \left\{
\| \sigma^{\prime \prime}\|_{\infty},
\| \sigma^{\prime \prime \prime}\|_{\infty},1
\right\}\frac{a^3}{R}
\end{equation*}
for all $x \in [-a, a]$. 
%Here the weights
%$d_k$, $b_{k,i}$ and $t_\sigma$ of this neural network are bounded in
%absolute value by
%\[
%\alpha_{ReLU}=c_{17}R^2,
%\]
%and consequently this network is contained in $\F(2,4,c_{17}R^2)$.
\end{lemma}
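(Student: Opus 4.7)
The plan is to exploit the elementary fact that $x\cdot\sigma(Bx)$ is a smooth approximation of $\max\{x,0\}$: for $x\ge 0$ one has $x\sigma(Bx)-x=-x\sigma(-Bx)$, and for $x<0$ one has $x\sigma(Bx)=-|x|\sigma(-B|x|)$; using $\sigma(-t)\le e^{-t}$ and maximizing $te^{-Bt}$ at $t=1/B$ gives the clean bound $|x\sigma(Bx)-\max\{x,0\}|\le 1/(eB)$ for every $x\in\mathbb{R}$. I would therefore \emph{define}
\[
f_{ReLU}(x) \;=\; f_{mult}\bigl(f_{id}(x),\,\sigma(Bx)\bigr)
\]
for a suitably chosen $B$ depending on $R$ and $a$, with $f_{id}$ from \autoref{le1} and $f_{mult}$ from \autoref{le2}.

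The architecture check is the first step: the first hidden layer contains one neuron realising $f_{id}(x)$ (\autoref{le1}) in parallel with one neuron computing $\sigma(Bx)$, i.e.\ $2$ neurons; the second hidden layer consists of the four sigmoidal neurons making up the hidden layer of $f_{mult}$, fed by the two outputs of layer~1; the last step is the linear readout of $f_{mult}$. This fits into $\F(2,4,\alpha)$. The coefficient bound $\alpha = c_{17}R^2$ is obtained by collecting the individual bounds $c_{7}R$ for the weights of $f_{id}$, $3R^2$ for those of $f_{mult}$, and $B$ for the sigmoid gate; I would pick $B$ of order $R/a^3$ (which is $\le R\le R^2$ under \eqref{le8eq1}) so that $B\le c_{17}R^2$.

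For the error I would apply the triangle inequality in three stages:
\begin{align*}
|f_{ReLU}(x) - \max\{x,0\}|
\;\le\; & \bigl|f_{mult}\bigl(f_{id}(x),\sigma(Bx)\bigr) - f_{id}(x)\sigma(Bx)\bigr| \\
& + \bigl|f_{id}(x)-x\bigr|\cdot|\sigma(Bx)| + \bigl|x\sigma(Bx)-\max\{x,0\}\bigr|.
\end{align*}
Under \eqref{le8eq1} the identity error from \autoref{le1} is $|f_{id}(x)-x|\le \|\sigma''\|_\infty a^2/(2|\sigma'(t_{\sigma,id})|R)\le a$, so $|f_{id}(x)|\le 2a$ and the hypothesis of \autoref{le2} applies with input bound $2a$, contributing $75\|\sigma'''\|_\infty(2a)^3/R$. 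The identity term is controlled by $|\sigma(Bx)|\le 1$ and \autoref{le1}. The smoothing term $1/(eB)$ becomes $O(a^3/R)$ once $B$ is chosen proportional to $R/a^3$.

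The main obstacle is bookkeeping: one must choose $B$ large enough that $1/(eB)\lesssim a^3/R$ but small enough to stay under the weight budget $c_{17}R^2$, and then carefully sum the three error contributions so that the final constant does not exceed $208\max\{\|\sigma''\|_\infty,\|\sigma'''\|_\infty,1\}$. No conceptually difficult step is involved beyond the observation $|x\sigma(Bx)-\max\{x,0\}|\le 1/(eB)$ and the composition lemma for combined/parallelised networks introduced in Section~\ref{se3}.
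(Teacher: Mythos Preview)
Your construction $f_{ReLU}(x)=f_{mult}\bigl(f_{id}(x),\sigma(Bx)\bigr)$ is exactly the right idea and is the approach behind the result the paper cites from \cite{KL20a}; the paper itself gives no proof here. The key inequality $|x\sigma(Bx)-\max\{x,0\}|\le 1/(eB)$ is correct, the architecture (two neurons in layer~1, four in layer~2) fits into $\F(2,4,\cdot)$, and the three-term triangle inequality yields the claimed rate $a^3/R$.

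Two of your bookkeeping claims need sharpening, though. First, the weight bound: simply ``collecting'' the bounds $c_7R$, $3R^2$, $B$ and taking their maximum is not how composed networks work. By the melting rule of Section~\ref{se3} the combined bound is the \emph{product} of the outer and inner weight bounds, which naively gives $c_7R\cdot 3R^2=O(R^3)$, not $O(R^2)$. To land in $\F(2,4,c_{17}R^2)$ one must exploit the internal structure of $f_{mult}$: its input-to-hidden weights are actually $O(1/R)$ (only the output weights are $O(R^2)$), so the melted layer-1-to-layer-2 weights become $O(R)\cdot O(1/R)=O(1)$. Your argument does not make this explicit. Second, the constant $208$: applying \autoref{le2} with input range $[-2a,2a]$ already contributes $75\cdot(2a)^3/R=600\,\|\sigma'''\|_\infty a^3/R$ in the multiplication term alone, exceeding $208$. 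To recover the stated constant one has to rescale the internal parameter of $f_{mult}$ or track the asymmetry $|\sigma(Bx)|\le 1$ inside the proof of \autoref{le2}; as written your estimate delivers the correct order but not the constant $208$ you promise at the end.
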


\begin{proof}[\rm{\textbf{Proof of \autoref{le8}}}]
See Lemma 3 in \cite{KL20a}.
\end{proof}

\begin{lemma}
\label{le9}
Let $\sigma: \R \to [0,1]$ be the sigmoid activation function $\sigma(x) = 1/(1+\exp(-x))$. Let $1 \leq a < \infty$ and $M \geq c_{28} (d-1) 4^{5d-2} 2^{5d-2}$. Let $\mathcal{P}_{2}$
be the partition defined in \eqref{partition} and let $w_{\P_2}$ be the corresponding weight defined by \eqref{w_vb}. Then there exists a neural network
\begin{align*}
f_{w_{\P_2}} \in \mathcal{F}\left(7+\lceil\log_2(d)\rceil, \max\left\{12d, 2d+M^dd(2+2d)\right\}, \alpha_{w_{\P_2}}\right)
\end{align*}
with
\begin{align*}
\alpha_{w_{\P_2}} = c_{3}\left(\max\left\{a, \|f\|_{C^q([-a,a]^d)} \right\}\right)^7 M^{6p+6+2d}
\end{align*}
such that
\begin{align*}
\left|f_{w_{\P_2}}(\x) - w_{\P_2}(\x)\right| \leq  \frac{ \max\left\{c_{3}, 2^d, 12d\right\}}{M^{2p}}
\end{align*}
for $\x \in \bigcup_{k \in \{1, \dots, M^{2d}\}} (C_{k,2})_{1/M^{2p+2}}^0$ and 
\begin{align*}
|f_{w_{\P_2}}(\x)| \leq 2^{d+1}
\end{align*}
for $\x \in [-a,a)^d$.
\end{lemma}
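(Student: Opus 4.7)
The plan is to exploit that the linear tensor-product B-spline factorizes as $w_{\P_2}(\x) = \prod_{j=1}^d h(z_j(\x))$, where $z_j(\x) = (C_{\P_2}(\x))_{left}^{(j)} + a/M^2 - x^{(j)}$ and $h(z) = (1 - (M^2/a)|z|)_+$ is a one-dimensional hat function. The crucial observation is the exact affine--ReLU identity
\[
h(z) \;=\; \tfrac{M^2}{a}\bigl[(z+a/M^2)_+ \;-\; 2\,z_+ \;+\; (z-a/M^2)_+\bigr],
\]
which replaces an absolute-value computation by three shifted applications of $(\cdot)_+$ and lets me invoke \autoref{le8} directly without ever forming $|z|$. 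This fixes the architecture into three stacked blocks: a localization block (depth $5$) that reproduces the cube-index and input, a hat block (depth $2$) for the univariate factors, and a product tree (depth $\lceil\log_2 d\rceil$), summing exactly to the claimed $7 + \lceil\log_2 d\rceil$.

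For the localization block I recycle the first five layers of the sub-network $\bm{\hat{\phi}}_{2,2}$ constructed in the proof of \autoref{le7}, running a parallel pass-through $f_{id}^5(\x)$ alongside it via \autoref{le1}. This block outputs $\hat{\phi}_{2,2}^{(j)}$ and $\hat{x}^{(j)}$ with $|\hat{\phi}_{2,2}^{(j)} - (C_{\P_2}(\x))_{left}^{(j)}| \leq 1/(2M^{2p+2})$ and $|\hat{x}^{(j)} - x^{(j)}| \leq 5/(12 M^{2p+2})$ on the good set $\bigcup_k (C_{k,2})_{1/M^{2p+2}}^0$, at width $2d + M^d d(2+2d)$ and weight bound dominated by $\alpha_{net,\P_2}$ of \autoref{le7}. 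The hat block feeds $\hat{z}^{(j)} := \hat{\phi}_{2,2}^{(j)} + a/M^2 - \hat{x}^{(j)}$ into three parallel copies of $f_{ReLU}$ per coordinate (\autoref{le8}) and forms $\hat{h}_j$ by the affine combination above; because $(\cdot)_+$ is $1$-Lipschitz and the prefactor is $M^2/a$, the Block 1 input error $O(M^{-2p-2})$ inflates to $O(M^{-2p})$ per coordinate, and choosing $R$ in \autoref{le8} of order $M^{2p+2}$ keeps the intrinsic ReLU error at the same scale, for a total width contribution of $3\cdot 4 \cdot d = 12d$. Finally, the product tree multiplies $\hat{h}_1,\ldots,\hat{h}_d$ by a balanced binary tree of $f_{mult}$ networks (\autoref{le2}) of depth $\lceil\log_2 d\rceil$ and width at most $2d$; choosing $R$ in $f_{mult}$ of order $d\,M^{2p}$ keeps the cumulative multiplicative error $O(M^{-2p})$, since every multiplicand lies in $[0, 1 + o(1)]$ on the good set.

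Stacking the three blocks gives exactly depth $7+\lceil\log_2 d\rceil$, width $\max\{2d + M^d d(2+2d), 12d\}$, and a maximum weight governed by the localization block's $M^{6p+2d+6}$ factor, with only an extra multiplicative $\max\{a,\|f\|_{C^q([-a,a]^d)}\}$ coming from the outer coefficients $M^2/a$ in the hat decomposition. For the global sup-bound on $[-a,a)^d$, the analysis at the end of the proof of \autoref{le7} yields $|\hat{\phi}_{2,2}^{(j)}| \leq 2^{d+1} a$ and hence $|\hat{z}^{(j)}|$ is controlled by a constant multiple of $2^d a$; crude bounds on $f_{ReLU}$ and on $f_{mult}$ then propagate through the product tree to give $|f_{w_{\P_2}}(\x)| \leq 2^{d+1}$. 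The main technical obstacle is the error bookkeeping in the hat block: the prefactor $M^2/a$ amplifies both the localization error and the intrinsic $f_{ReLU}$ error by $M^2$, so the $R$ parameters in \autoref{le8} and \autoref{le2} must be tuned finely enough that the total deviation fits into $\max\{c_{3}, 2^d, 12d\}/M^{2p}$ without pushing $\alpha_{w_{\P_2}}$ past $c_{3}(\max\{a,\|f\|_{C^q([-a,a]^d)}\})^7 M^{6p+6+2d}$.
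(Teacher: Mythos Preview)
Your proposal is correct and follows essentially the same route as the paper: the paper also reuses the first five layers $(\bm{\hat{\phi}}_{1,2},\bm{\hat{\phi}}_{2,2})$ from the proof of \autoref{le7} as the localization block, applies the identical three-term ReLU decomposition of the hat function via $f_{ReLU}$ from \autoref{le8} (giving the width $12d$), and then multiplies the $d$ factors by a balanced $f_{mult}$-tree of depth $\lceil\log_2 d\rceil$ (the paper phrases this as the monomial network $f_m$ from Step~1 of the proof of \autoref{le3}). The only cosmetic differences are your centering of $z_j$ at the cube midpoint rather than the left corner and a slight undercount of the product-tree width (it is $4\cdot 2^{\lceil\log_2 d\rceil-1}\le 4d$, not $2d$, though this is still dominated by $12d$); the error and weight bookkeeping is otherwise the same.
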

\begin{proof}[\rm{\textbf{Proof of \autoref{le9}}}]
Using \autoref{le8} this proof follows as a straightforward modification from the proof of Lemma 10 in the Supplement of \cite{KL20}. A complete proof can be found in the Supplement. 
\end{proof}
Since $f_{w_{\P_2}}$ is only a good approximation of $w_{\P_2}$ for $\x \in \bigcup_{k \in \{1, \dots, M^{2d}\}} (C_{k,2})_{1/M^{2p+2}}^0$, we use a special construction of a DNN to approximate $w_{\P_2}(\x) f(\x)$ in supremum norm. In particular, we construct a network, which is a good approximation in case that 
\begin{align*}
\x \in \bigcup_{k \in \{1, \dots, M^{2d}\}} (C_{k,2})_{2/M^{2p+2}}^0
\end{align*}
 and which approximately vanishes for 
 \begin{align*}
 \x \in \bigcup_{k \in \{1, \dots, M^{2d}\}} C_{k,2} \setminus (C_{k,2})_{1/M^{2p+2}}^0.
 \end{align*}
Therefore we approximate the function
\begin{align}
\label{eq1000}
&\left(f_{net, \P_2}(\x) - B_{true} \mathds{1}_{
    \bigcup_{k \in \{1, \dots, M^{2d}\}}
    C_{k,2} \setminus (C_{k,2})_{1/M^{2p+2}}^0
}(\x)\right)_+ \notag\\
&- \left(-f_{net, \P_2}(\x) - B_{true} \mathds{1}_{
    \bigcup_{k \in \{1, \dots, M^{2d}\}}
    C_{k,2} \setminus (C_{k,2})_{1/M^{2p+2}}^0
}(\x)\right)_+
\end{align}
by a DNN. Here $B_{true}$ is the bound of the network $f_{net, \P_2}$ given in \autoref{le7}. Now it is easy to see: In case that $\x$ is close to the boundaries of a cube of $\P_2$ the value of the indicator function gets $1$, which in turn means that \eqref{eq1000} gets zero. Otherwise, in case that $\x$ lies in the inner part of one of the cubes of $\P_2$,  the indicator function is $0$ and  the value of $f(\x)$ is approximated by $f_{net, \P_2}$ as described in \autoref{le7}. 
\\
\\
The next lemma helps us to approximate $\mathds{1}_{
    \bigcup_{k \in \{1, \dots, M^{2d}\}}
    C_{k,2} \setminus (C_{k,2})_{1/M^{2p+2}}^0
}(\x)$ by a DNN. This network is approximately $1$ for $\x \in \bigcup_{k \in \{1, \dots, M^{2d}\}} C_{k,2} \setminus \left(C_{k,2}\right)_{1/M^{2p+2}}^0$ and approximately $0$ for $\x \in \bigcup_{k \in \{1, \dots, M^{2d}\}} (C_{k,2})_{2/M^{2p+2}}^0$.
\begin{lemma}
\label{le10}
Let $\sigma: \R \to [0,1]$ be the sigmoid activation function $\sigma(x) =1/(1+\exp(-x))$. Let $1 \leq a < \infty$ and $\epsilon \in (0,1)$. Let
$\mathcal{P}_{1}$ and $\mathcal{P}_{2}$
be the partitions defined in \eqref{partition} and let $M \in \N$. Then there exists a neural network 
\begin{align*}
f_{check, \mathcal{P}_{2}} \in \mathcal{F}\left(6, (2d+2)dM^d+d, \alpha_{check, \P_2}\right)
\end{align*}
with
\begin{align*}
\alpha_{check, \P_2} = c_{32} \max\left\{a, \|f\|_{C^q([-a,a]^d)} \right\}^2 M^{4(p+1+d)}
\end{align*}
satisfying
\begin{align*}
  \left|f_{check, \mathcal{P}_{2}}(\x) -\mathds{1}_{
    \bigcup_{k \in \{1, \dots, M^{2d}\}}
    C_{k,2} \setminus (C_{k,2})_{1/M^{2p+2}}^0
}(\x)\right| \leq \frac{1}{M^{2p+2}}
\end{align*}
for $\x \notin \bigcup_{k \in \{1, \dots, M^{2d}\}} (C_{k,2})_{1/M^{2p+2}}^0 \textbackslash (C_{k,2})_{2/M^{2p+2}}^0$ and 
\begin{align*}
f_{check, \mathcal{P}_{2}}(\x) \in [0,1]
\end{align*}
for $\x \in [-a,a)^d$. 
\end{lemma}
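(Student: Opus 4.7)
The plan is to mimic the first two stages of the recursion from the proof of \autoref{le7} to build a network that detects whether $\x$ lies in the deep interior of some sub-cube of $\mathcal{P}_2$, and then to output the complement of this detector. Since the shrunken sub-cubes $(C_{k,2})_{1/M^{2p+2}}^0$ are pairwise disjoint, the indicator of their union is the sum of individual indicators, so I look for a representation
\[
f_{check,\mathcal{P}_2}(\x) \;=\; 1 \;-\; \sum_{s=1}^{M^d} f_{ind, s}(\x),
\]
where $f_{ind,s}$ approximates the indicator of a single shrunken sub-cube, indexed (as in \autoref{le7}) by an offset $\bm{v}_s$ relative to the containing $\mathcal{P}_1$-cube.

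The construction proceeds in three stages. First, following Step~1 of the proof of \autoref{le7}, parallelize the $M^d$ indicator networks $f_{ind, C_{k,1}}$ from \autoref{le4}(a) together with $f_{id}$ from \autoref{le1} to compute $\hat{\bm{\phi}}_{2,1} \approx (C_{\mathcal{P}_1}(\x))_{left}$ and $\hat{\bm{\phi}}_{1,1} \approx \x$, with accuracy controlled by taking $B_M$ of order $M^{4p+4}$ and $B_{M,\epsilon}$ of order $M^{2p+2+d}$. Second, for each offset $\bm{v}_s$ build an $f_{test}$-style subnetwork that approximates the indicator of the shrunken sub-cube
\[
\bigl[\hat{\bm{\phi}}_{2,1}+\bm{v}_s+\tfrac{1}{M^{2p+2}}\mathbf{1},\; \hat{\bm{\phi}}_{2,1}+\bm{v}_s+\bigl(\tfrac{2a}{M^2}-\tfrac{1}{M^{2p+2}}\bigr)\mathbf{1}\bigr),
\]
so that the per-indicator error on its good region is at most $c/M^{2p+2+d}$. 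Third, form the linear combination above and, if required, compose with one more sigmoid rescaling to force $f_{check,\mathcal{P}_2}\in[0,1]$.

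The error analysis splits on the location of $\x$. If $\x\in(C_{k,2})_{2/M^{2p+2}}^0$, the bound $|\hat{\phi}_{2,1}^{(j)}-\phi_{2,1}^{(j)}|\leq 1/(4M^{2p+2})$ (derived as in the estimate (\ref{le7eq5})) still leaves $\x$ inside exactly one shrunken sub-cube with margin at least $1/(2M^{2p+2})$, so precisely one $f_{ind,s}\approx 1$ and the others are $\approx 0$; if $\x\in C_{k,2}\setminus(C_{k,2})_{1/M^{2p+2}}^0$, then $\x$ lies outside every shrunken sub-cube by the same margin, so all $f_{ind,s}\approx 0$. In both cases, summing $M^d$ per-indicator errors of size $c/M^{2p+2+d}$ yields the required uniform bound $1/M^{2p+2}$. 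The depth budget of $6$ decomposes as $2$ layers for the $\mathcal{P}_1$-localization, $3$ layers for the $f_{test}$-style inner indicators, and $1$ layer for the linear combination, while the width $(2d+2)dM^d+d$ is realized by $d$ identity channels carrying $\x$ together with $M^d$ parallel copies of the inner-indicator subnetwork, each of width $(2d+2)d$ to accommodate the $d$ dynamic per-coordinate thresholds coupled through the multiplication unit of $f_{test}$.

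The main technical obstacle is that the comparison thresholds used inside $f_{ind,s}$ are themselves noisy outputs of the previous sigmoid layer (the shifted $\hat{\bm{\phi}}_{2,1}+\bm{v}_s$), so the effective slack that the inner sigmoids must resolve is only a fraction of the nominal $1/M^{2p+2}$. This forces their sharpness parameter to scale like $M^{4p+4}$ rather than $M^{2p+2}$, which in turn drives $\alpha_{check,\mathcal{P}_2}$ to the declared order $M^{4(p+1+d)}$. Matching these scalings consistently with the declared weight bound, while verifying the $M^{-(2p+2)}$ uniform error outside the dead annulus $\bigcup_k (C_{k,2})_{1/M^{2p+2}}^0 \setminus (C_{k,2})_{2/M^{2p+2}}^0$, is the delicate part of the argument.
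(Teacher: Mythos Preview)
Your overall two-scale strategy---first localize the containing $\mathcal{P}_1$-cube via $\hat{\bm{\phi}}_{2,1}$, then test membership in the shrunken $\mathcal{P}_2$-sub-cubes indexed by offsets $\bm{v}_s$---is exactly the scaffolding the paper uses. But there is a genuine gap in the coverage of your error analysis.

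You invoke the estimate $|\hat{\phi}_{2,1}^{(j)}-\phi_{2,1}^{(j)}|\le 1/(4M^{2p+2})$ from \eqref{le7eq5} as if it held for every $\x$ outside the dead annulus. It does not: that bound requires $\x$ to be in the interior $\bigcup_k (C_{k,1})^0_{1/M^{2p+2}}$ of some \emph{coarse} cube, because the networks $f_{ind,C_{k,1}}$ used to build $\hat{\bm{\phi}}_{2,1}$ are only accurate there. When $\x$ lies near a $\mathcal{P}_1$-boundary (which is a subset of the region $\bigcup_k C_{k,2}\setminus (C_{k,2})^0_{1/M^{2p+2}}$ where you must still output $\approx 1$), the estimate $\hat{\bm{\phi}}_{2,1}$ can be off by order $a$, the thresholds fed to your inner $f_{test}$-style indicators are garbage, and $1-\sum_s f_{ind,s}(\x)$ can land anywhere in $[1-M^d,1]$. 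Neither the approximation bound nor the $[0,1]$ range is secured in this case, and a final ``sigmoid rescaling'' alone cannot fix it because you have no signal telling the network that $\hat{\bm{\phi}}_{2,1}$ is unreliable.

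The paper closes this hole with an extra parallel subnetwork $\hat f_1(\x)=1-\sum_k f_{ind,(C_{k,1})^0_{5/(4M^{2p+2})}}(\x)$ that directly detects proximity to a coarse boundary, and then forms
\[
f_{check,\mathcal{P}_2}(\x)=\sigma\!\Bigl(-B_1\bigl(\tfrac12-\hat f_2(\x)-B_2\,f_{id}^3(\hat f_1(\x))\bigr)\Bigr),\qquad B_2=M^d.
\]
The large weight $B_2$ on $\hat f_1$ overrides the potentially corrupted $\hat f_2$ whenever $\x$ is near a coarse boundary, forcing the output to $\approx 1$ there; the outer sigmoid simultaneously enforces $[0,1]$. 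This extra coarse-grid check (and the resulting three-case analysis rather than your two-case split) is the missing ingredient in your proposal.
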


%\begin{proof}[\rm{\textbf{Sketch of the proof of \autoref{le10}}}]
%Let $i \in \{1, \dots, M^d\}$ such that $C_{\P_1}(\x) = C_{i,1}$. In the construction of the network we again use a two scale approximation. In a first step we compute the value of $(C_{\P_1}(\x))_{left}$ as described in $\bm{\hat{\phi}}_{2,1}$ in \autoref{le7}. In a second step we then only need to approximate
%\begin{align}
%\label{checkin}
%\mathds{1}_{\bigcup_{j \in \{1, \dots, M^{d}\}} \tilde{C}_{j,i} \setminus (\tilde{C}_{j,i})_{1/M^{2p+2}}^0}(\x), 
%\end{align}
%which is in turn equal to
%\begin{align*}
%\mathds{1}_{
%    \bigcup_{k \in \{1, \dots, M^{2d}\}}
%    C_{k,2} \setminus (C_{k,2})_{1/M^{2p+2}}^0
%}(\x).
%\end{align*}
%and forms the network of the lemma. 
%\end{proof}

\begin{proof}[\rm{\textbf{Proof of \autoref{le10}}}] Throughout the proof we assume that $i \in \{1, \dots, M^d\}$ satisfies $C_{\P_1}(\x)=C_{i,1}$. The proof is divided into \textit{two} steps.\\
\textit{Step 1: Network architecture:} We again use a two scale approximation. In a first step we compute the position of $(C_{\P_1}(\x))_{left}$. In a second step, it is then enough to approximate the function 
\begin{align*}
\mathds{1}_{\bigcup_{j \in \{1, \dots, M^{d}\}} \tilde{C}_{j,i} \setminus (\tilde{C}_{j,i})_{1/M^{2p+2}}^0}(\x), 
\end{align*}
which is in turn equal to
\begin{align*}
\mathds{1}_{
    \bigcup_{k \in \{1, \dots, M^{2d}\}}
    C_{k,2} \setminus (C_{k,2})_{1/M^{2p+2}}^0
}(\x).
\end{align*}
The value of $(C_{\P_1}(\x))_{left}$ is computed by a network $\bm{\hat{\phi}}_{2,1}$ defined as in in the proof of \autoref{le7}. Since $\bm{\hat{\phi}}_{2,1}$ is only a good approximation in case that
\begin{align*}
\x \in \bigcup_{k \in \{1, \dots, M^d\}} (C_{k,1})_{1/M^{2p+2}}^0
\end{align*}
we further need to check by 
\begin{align}
\label{le10eq1}
f_1(\x)=\mathds{1}_{\bigcup_{k \in \{1, \dots, M^{d}\}} C_{k,1} \setminus (C_{k,1})_{1/M^{2p+2}}^0}(\x) =1-\sum_{k \in \{1, \dots, M^d\}} \mathds{1}_{(C_{k,1})_{1/M^{2p+2}}^0}(\x)
\end{align}
if $\x$ is close to the boundaries of the coarse grid $\P_1$. 
This can be done with the networks $f_{ind,  (C_{k,1})_{5/4M^{2p+2}}^0}$ of \autoref{le4}, satisfying
\begin{align*}
\left|f_{ind, (C_{k,1})_{5/4M^{2p+2}}^0}(\x) - \mathds{1}_{(C_{k,1})_{1/M^{2p+2}}^0}(\x)\right| \leq \frac{1}{B_{M, \epsilon}}
\end{align*}
for 
\begin{align*}
\x \notin & \left[((C_{k,1})_{5/4M^{2p+2}}^0)_{left} - \frac{1}{4M^{2p+2}} \bold{1}, ((C_{k,1})_{5/4M^{2p+2}}^0)_{left} + \frac{1}{4M^{2p+2}} \bold{1}\right]\\ &\cup \left[((C_{k,1})_{5/4M^{2p+2}}^0)_{left} +\frac{2a}{M}\bold{1} - \frac{1}{4M^{2p+2}} \bold{1}, ((C_{k,1})_{5/4M^{2p+2}}^0)_{left} +\frac{2a}{M}\bold{1} + \frac{1}{4M^{2p+2}} \bold{1}\right], 
\end{align*}
where we choose $\delta = 1/B_M = 1/(4M^{2p+2})$ and \begin{align*}
B_{M,\epsilon} = 4\max\left\{a, \|f\|_{C^q([-a,a]^d)}\right\}M^{2p+2+2d}.
\end{align*} Thus we set
\begin{align*}
\hat{f}_1(\x) = 1-\sum_{i \in \{1, \dots, M^d\}} f_{ind, (C_{i,1})_{5/4M^{2p+2}}^0}(\x). 
\end{align*}
To shift the value of $\x$ in the next hidden layers we use the network $\bm{\hat{\phi}}_{1,1}$ of \autoref{le7}. The describes parallelized network $(\bm{\hat{\phi}}_{1,1}, \bm{\hat{\phi}}_{2,1}, \hat{f}_1)$ needs $2$ hidden layers and $(2d+2)dM^d+d$ neurons per layer ($d$ neurons for $\bm{\hat{\phi}}_{1,1}$, $2d^2M^d$ neurons for $\bm{\hat{\phi}}_{2,1}$ and $2dM^d$ neurons for $\hat{f}_1(\x)$). The weights of the network are bounded by 
\begin{align*}
\alpha_{ind}=\max\{5\ln(B_{M, \epsilon}-1), a4M^{2p+2}\ln(3)\}.
\end{align*}
 In the third, forth and fifth hidden layer the network then approximates the function
\[
f_2(\x)=\mathds{1}_{
    \bigcup_{j \in \{1, \dots, M^{d}\}}
    \tilde{C}_{j,i} \setminus (\tilde{C}_{j,i})_{1/M^{2p+2}}^0
}(\x) = 1-\sum_{j \in \{1, \dots, M^d\}} \mathds{1}_{(\tilde{C}_{j,i})_{1/M^{2p+2}}^0}(\x)
\]
by
\begin{align*}
\hat{f}_2(\x) =1-\sum_{j \in \{1, \dots, M^d\}} f_{test}\left(\bm{\hat{\phi}}_{1,1}, \bm{\hat{\phi}}_{2,1} +\bold{v}_j +\frac{5}{4M^{2p+2}}\mathbf{1}, \bm{\hat{\phi}}_{2,1} + \bold{v}_j+\frac{2a}{M^2}\mathbf{1}-\frac{5}{4M^{2p+2}} \mathbf{1}, 1\right), 
\end{align*}
where $f_{test}$ is the network of \autoref{le4} b) satisfying
\begin{align*}
&\left|f_{test}\left(\bm{\hat{\phi}}_{1,1}, \bm{\hat{\phi}}_{2,1} +\bold{v}_j +\frac{5}{4M^{2p+2}}\mathbf{1}, \bm{\hat{\phi}}_{2,1} + \bold{v}_j+\frac{2a}{M^2}\mathbf{1}-\frac{5}{4M^{2p+2}} \mathbf{1}, 1\right)\right. \\
& \left. \hspace*{4cm} - \mathds{1}_{\left[\bm{\hat{\phi}}_{2,1} +\bold{v}_j +\frac{5}{4M^{2p+2}}\mathbf{1}, \bm{\hat{\phi}}_{2,1} + \bold{v}_j+\frac{2a}{M^2}\mathbf{1}-\frac{5}{4M^{2p+2}} \mathbf{1}\right]}(\bm{\hat{\phi}}_{1,1})\right| \leq \frac{1}{B_{M, \epsilon}}
\end{align*}
for 
\begin{align*}
\bm{\hat{\phi}}_{1,1} \in &\left[\bm{\hat{\phi}}_{2,1} +\bold{v}_j +\frac{1}{M^{2p+2}}\mathbf{1}, \bm{\hat{\phi}}_{2,1} +\bold{v}_j +\frac{6}{4M^{2p+2}}\mathbf{1}\right]\\
 &\cup \left[\bm{\hat{\phi}}_{2,1} + \bold{v}_j+\frac{2a}{M^2}\mathbf{1}-\frac{1}{M^{2p+2}} \mathbf{1}, \bm{\hat{\phi}}_{2,1} + \bold{v}_j+\frac{2a}{M^2}\mathbf{1}+\frac{6}{4M^{2p+2}} \mathbf{1}\right]
\end{align*}
where we also choose $\delta = 1/B_M=1/(4M^{2p+2})$.
The final network is then of the form 
\begin{align*}
&f_{check, \mathcal{P}_{2}}(\x) = \sigma\Bigg(-B_1\Bigg(\frac{1}{2} - 
\hat{f}_2(\x) - B_2f_{id}^3\left(f_1(\bold{x})\right)\Bigg)\Bigg),
\end{align*}
with $B_1=4\ln(M^{2p+2}-1)$ and $B_2=M^d$, where $f_{id}$ is the network of \autoref{le1} satisfying
\begin{align}
\label{le10eq3}
| f^3_{id}(\x)-\x|
\leq
\frac{1}{2M^{2p+2+d}}, 
\end{align}
for $\x \in [-2,2]$ (where we choose $R \geq 6
\| \sigma^{\prime \prime}\|_{\infty}/|\sigma^\prime(t_{\sigma,id})|)M^{2p+2+d}$ in \autoref{le1}).
Since each network $f_{test}$ is of width $(2d+2)$ and $f_{id}$ needs $1$ neuron per layer, the final network $f_{check, \P_2}$
has width
\begin{align*}
r=\max\{(2d+2)dM^d+d, M^d(2d+2)+1\} = (2d+2)dM^d+d
\end{align*}
 and depth $6$. The weights of $f_{test}$ are bounded by 
\begin{align*}
\alpha_{test} = c_{10} \max\left\{B_{M, \epsilon}^2, \frac{1}{B_M}\right\} = c_{10} \max\left\{B_{M, \epsilon}^2, 4M^{2p+2}\right\}
\end{align*}
according to \autoref{le4} b). Thus the weights of the network $f_{check, \P_2}$ are bounded by
\begin{align*}
\max\{B_2\alpha_{ind}, B_1\alpha_{id}, B_1 \alpha_{test}\} &=B_1\alpha_{test} \\
&= 4\ln(M^{2p+2}-1)c_{10}\left(4\max\left\{a, \|f\|_{C^q([-a,a]^d)}\right\}M^{2p+2+d}\right)^2\\
& \leq c_{32} \max\left\{a, \|f\|_{C^q([-a,a]^d)}\right\}^2 M^{4(p+d)+5}.
\end{align*}

% Analogous to \eqref{Aj} we can describe the cubes $(\tilde{C}_{j, i})_{1/M^{2p+2}}^0$ $(j \in \{1, \dots, M^d\})$ that are contained in the cube $C_{i,1}$, by 
%\begin{align*}
%&(\mathcal{A}^{(j)})_{1/M^{2p+2}}^0 = \left\{x \in \Rd: -x^{(k)} + (C_{\mathcal{P}_{1}}(x))_{left}^{(k)} + v_j^{(k)} +\frac{1}{M^{2p+2}}\leq 0 \right.\\
% & \left. \mbox{and} \ x^{(k)} - (C_{\mathcal{P}_{1}}(x))_{left}^{(k)} - v_j^{(k)} - \frac{2a}{M^2} +\frac{1}{M^{2p+2}} \leq 0 \ \mbox{for all} \ k \in \{1, \dots, d\}\right\}.
%\end{align*}
\textit{Step 2: Approximation error of the network:} We analyze \textit{three} different cases for the approximation error of our network. In the first case we assume that
\begin{equation}
\label{le10eq2}
\x \notin \bigcup_{k \in \{1, \dots, M^d\}} (C_{k,1})^0_{1/M^{2p+2}}.
\end{equation}
%which also implies
%\begin{equation*}
%\x \notin \bigcup_{k \in \{1, \dots, M^{2d}\}} (C_{k,2})^0_{1/M^{2p+2}}.
%\end{equation*}
%In the second and third case we then consider 
%\begin{equation}
%\label{c21}
%\x \in \bigcup_{k \in \{1, \dots, M^d\}} (C_{k,1})^0_{1/M^{2p+2}}, 
%\end{equation}
%and 
%\begin{align}
%\label{c22}
%x \in \bigcup_{k \in \{1, \dots, M^{2d}\}} (C_{k,2})^0_{2/M^{2p+2}}
%\end{align}
%as a second case and 
%\begin{align*}
%x \notin \bigcup_{k \in \{1, \dots, M^{2d}\}} (C_{k,2})^0_{1/M^{2p+2}}
%\end{align*}
%as a third case.
%\\
%\\
Then it follows by \autoref{le4} a) (with the choice of $B_M$) that
\begin{align*}
%neur_3^{(1)} = 1-\sum_{i \in \{1, \dots, M^d\}} f_{ind, (C_{i,1})_{5/(4M^{2p+2})}^0}(x) \leq 1+ M^d \frac{1}{B_{M, \epsilon}} \leq 1+\frac{1}{4M^{2p+2+d}}
\hat{f}_1(\x) = 1-\sum_{i \in \{1, \dots, M^d\}} f_{ind, (C_{i,1})_{5/(4M^{2p+2})}^0}(\x) \geq 1-M^d \frac{1}{B_{M, \epsilon}} \geq 1-\frac{1}{4M^{2p+2+d}}
\end{align*}
from which we can conclude by \eqref{le10eq3} (since $\hat{f}_1(\x)\in [0,1]$)
\begin{align*}
%B_3f_{id}^3(neur_3^{(1)}) \leq B_3\left(\left|f_{id}^3(neur_3^{(1)})  -neur_3^{(1)}\right| + \left|neur_3^{(1)}\right|\right) \leq B_3\left(\frac{1}{2M^{2p+2}} + 1+\frac{1}{4M^{2p+2}}\right) \leq B_3\left(1+\frac{3}{4M^{2p+2}}\right)
f_{id}^3(\hat{f}_1(\x)) \geq f_{id}^3\left(1-\frac{1}{4M^{2p+2}}\right) \geq 1-\frac{1}{4M^{2p+2+d}}-\frac{1}{2M^{2p+2+d}} =1-\frac{3}{4M^{2p+2+d}}.
\end{align*}
Furthermore we have
\begin{align*}
\hat{f}_2(\x) &= 1-\sum_{j \in \{1, \dots, M^d\}} f_{test}\left(\hat{\phi}_{1,1}, \hat{\phi}_{2,1} +\bold{v}_j +\frac{5}{4M^{2p+2}}\mathbf{1},  \hat{\phi}_{2,1} + \bold{v}_j+\frac{2a}{M^2}\mathbf{1}-\frac{5}{4M^{2p+2}} \mathbf{1}, 1\right)\\
& \geq -M^d
\end{align*}
where we have used that $f_{test}$ is bounded by $1$ (since we choose $s=1$ in \autoref{le4} b)). With $B_1=4\ln(M^{2p+2}-1)$ and $B_2=M^d$ it follows 
\begin{align*}
f_{check, \P_2}(\x) &= \sigma\left(-B_1\left(\frac{1}{2} - \hat{f}_2(\x) - B_2f_{id}^3\left(\hat{f}_1(\x)\right)\right)\right)\\
&\geq \sigma\left(-B_1\left(\frac{1}{2}+M^d-B_2\left(1-\frac{3}{4M^{2p+2+d}}\right)\right)\right)\\
&\geq \sigma\left(\frac{1}{2}B_1\right) \geq 1-\frac{1}{M^{2p+2}},
\end{align*}
where we have used that for $\kappa > 0$
\begin{align*}
\sigma(x) \geq 1-\kappa \ \mbox{for} \ x \geq \ln\left(\frac{1}{\kappa}-1\right).
\end{align*}
In our second case we assume that
\begin{align}
\label{check1}
\x \in \bigcup_{k \in \{1, \dots, M^d\}} (C_{k,1})_{1/M^{2p+2}}^0
\end{align}
and 
\begin{align*}
\x \notin \bigcup_{k \in \{1, \dots, M^{2d}\}} (C_{k,2})_{1/M^{2p+2}}^0.
\end{align*}
Then we have $|\hat{\phi}_{2,1}^{(s)} - (C_{\P_1}(\x))_{left}^{(s)}| = |\hat{\phi}_{2,1}^{(s)}-(C_{i,1})_{left}^{(s)}| \leq 1/(4M^{2p+2})$ for $s \in \{1, \dots, d\}$ (see \eqref{le7eq5} in \autoref{le7})  and $|\hat{\phi}_{1,1}^{(s)}- x^{(s)}| \leq 1/6M^{2p+2}$ for $s \in \{1, \dots, d\}$ (see \eqref{le7eq11} in \autoref{le7}). 
Furthermore it follows by \autoref{le4} a)
 \begin{align*}
%-\frac{1}{4M^{2p+2+d}} \leq 1-1-\frac{M^d-1}{B_{M, \epsilon}} \leq 
-M^d \leq \hat{f}_1(\x) = 1-\sum_{i \in \{1, \dots, M^d\}} f_{ind, (C_{i,1})_{5/(4M^{2p+2})}^0}(\x) \leq 1
%-\left(1-\frac{1}{B_{M, \epsilon}}\right)= \frac{1}{B_{M, \epsilon}} \leq \frac{1}{4M^{2p+2+d}}, 
 \end{align*}
 where we have used that $f_{ind, (C_{i,1})_{5/(4M^{2p+2})}^0} \in [0,1]$. With the error bound \eqref{le10eq3} we have
 \begin{align*}
%-\frac{3}{4M^{2p+2+d}} \leq -\frac{1}{4M^{2p+2+d}}-\frac{1}{2M^{2p+2+d}} \leq 
-M^d-\frac{1}{2M^{2p+2+d}} \leq f_{id}^3\left(\hat{f}_1(\x)\right) \leq 1+\frac{1}{2M^{2p+2+d}}.
 \end{align*}
Since 
\begin{align*}
\x \notin \bigcup_{k \in \{1, \dots, M^{2d}\}} (C_{k,2})_{1/M^{2p+2}}^0.
\end{align*}
there is at least one $k \in \{1, \dots, M^{2d}\}$ satisfying
\begin{align*}
\left|x^{(s)} - (C_{k,2})^{(s)}_{left}\right| \leq \frac{1}{M^{2p+2}} \ \mbox{or} \ \left|x^{(s)} - (C_{k,2})_{left}^{(s)} - \frac{2a}{M^2}\right| \leq \frac{1}{M^{2p+2}}
\end{align*}
for some $s \in \{1, \dots, d\}$. This means that for some $k \in \{1, \dots, M^{d}\}$ we have
\begin{align*}
\left|\hat{\phi}_{1,1}^{(s)} - \hat{\phi}_{2,1}^{(s)} - v_k^{(s)}\right| &\leq \left|\hat{\phi}_{1,1}^{(s)} - x^{(s)}\right|+\left|x^{(s)} - \phi_{2,1}^{(s)} - v_k^{(s)}\right|+\left|\phi_{2,1}^{(s)} - v_k^{(s)} - \hat{\phi}_{2,1}^{(s)} + v_k^{(s)}\right|\\
& \leq \frac{1}{6M^{2p+2}} + \frac{1}{M^{2p+2}} + \frac{1}{4M^{2p+2}} \leq \frac{6}{4M^{2p+2}}
\end{align*}
or
\begin{align*}
\left|\hat{\phi}_{1,1}^{(s)} - \hat{\phi}_{2,1}^{(s)} - v_k^{(s)}-\frac{2a}{M^2}\right| \leq \frac{6}{4M^{2p+2}}.
\end{align*}
We set
% Analogous to \eqref{Aj} we can describe the cubes $(\tilde{C}_{j, i})_{1/M^{2p+2}}^0$ $(j \in \{1, \dots, M^d\})$ that are contained in the cube $C_{i,1}$, by 
\begin{align*}
&\bar{\mathcal{A}}^{(j)}= \bigg\{\x \in \Rd: -x^{(k)} + \hat{\phi}_{2,1}^{(k)} + v_j^{(k)} \leq 0 \ \mbox{and} \ x^{(k)} -\hat{\phi}_{2,1}^{(k)} - v_j^{(k)} - \frac{2a}{M^2}  \leq 0 \\
& \hspace*{10cm}  \mbox{for all} \ k \in \{1, \dots, d\}\bigg\}.
\end{align*}
With the results above we see that
\begin{align*}
\mathds{1}_{\bigcup_{k \in \{1, \dots, M^{2d}\}} C_{k,2} \textbackslash (C_{k,2})_{1/M^{2p+2}}^0}(\x) = 1
\end{align*}
implies
\begin{align*}
\mathds{1}_{\bigcup_{k \in \{1, \dots, M^{d}\}} \bar{\mathcal{A}}^{(k)} \textbackslash (\bar{\mathcal{A}}^{(k)})_{6/(4M^{2p+2})}^0}(\bm{\hat{\phi}}_{1,1})=1-\sum_{k \in \{1, \dots, M^d\}} \mathds{1}_{(\bar{\mathcal{A}}^{(k)})_{6/(4M^{2p+2})}^0}(\bm{\hat{\phi}}_{1,1})=1.
\end{align*}
The second indicator function can then be approximated by our network, since
\begin{align*}
&1-\sum_{j \in \{1, \dots, M^d\}} f_{test}\left(\bm{\hat{\phi}}_{1,1}, \bm{\hat{\phi}}_{2,1} +\bold{v}_j +\frac{5}{4M^{2p+2}}\mathbf{1}, \bm{\hat{\phi}}_{2,1} + \bold{v}_j+\frac{2a}{M^2} \mathbf{1}-\frac{5}{4M^{2p+2}} \mathbf{1}, 1\right)\\
&\geq  1-\frac{M^d}{B_{M, \epsilon}} \geq 1- \frac{1}{4M^{2p+2+d}}
\end{align*}
and 
\begin{align*}
&\sigma\left(-B_1\left(\frac{1}{2} -\left(1-\sum_{j \in \{1, \dots, M^d\}} f_{test}\left(\hat{\phi}_{1,1}, \hat{\phi}_{2,1} +\bold{v}_j +\frac{5}{4M^{2p+2}}\mathbf{1}, \right.\right.\right.\right.\\
& \hspace*{5cm} \left. \left. \left. \left. \hat{\phi}_{2,1} + \bold{v}_j+\frac{2a}{M^2}\mathbf{1}-\frac{5}{4M^{2p+2}}\mathbf{1}, 1\right)\right)- B_2f_{id}^3\left(f_1(\x)\right)\right)\right)\\
&\geq \sigma\left(-B_1\left(\frac{1}{2} - \left(1-\frac{1}{4M^{2p+2+d}}\right)-B_2\left(1+\frac{1}{2M^{2p+2+d}}\right)\right)\right) 
\\& \geq \sigma\left(\frac{1}{4}B_1\right)  \geq 1-\frac{1}{M^{2p+2}}.
\end{align*}
In the third case we assume \eqref{check1}, but
\begin{align*}
\x \in \bigcup_{k \in \{1, \dots, M^{2d}\}} (C_{k,2})_{2/M^{2p+2}}^0.
\end{align*}
Then we have for all $k \in \{1, \dots, M^{2d}\}$
\begin{align*}
\left|x^{(s)} - (C_{k,2})_{left}^{(s)}\right| \geq \frac{2}{M^{2p+2}}
\end{align*}
and 
\begin{align*}
\left|x^{(s)} - (C_{k,2})_{left}^{(s)}-\frac{2a}{M^2}\right| \geq \frac{2}{M^{2p+2}}
\end{align*}
for $s \in \{1, \dots, d\}$. Together with the error bounds for $\bm{\hat{\phi}}_{1,1}$ and $\bm{\hat{\phi}}_{2,1}$ this leads to
\begin{align*}
\left|\hat{\phi}_{1,1}^{(s)} - \hat{\phi}_{2,1}^{(s)}-v_k^{(s)}\right| \geq \frac{2}{M^{2p+2}} - \frac{1}{4M^{2p+2}} -\frac{1}{6M^{2p+2}} \geq \frac{6}{4M^{2p+2}}
\end{align*}
and
\begin{align*}
\left|\hat{\phi}_{1,1}^{(s)} - \hat{\phi}_{2,1}^{(s)}-v_k^{(s)}-\frac{2a}{M^2}\right| \geq \frac{6}{4M^{2p+2}}
\end{align*}
for $k \in \{1, \dots, M^d\}$ and $s \in \{1, \dots, d\}$. Thus 
\begin{align*}
\mathds{1}_{\bigcup_{k \in \{1, \dots, M^{2d}\}} C_{k,2} \setminus (C_{k,2})_{2/(M^{2p+2})}^0}(\x) = 0
\end{align*}
implies 
\begin{align*}
\mathds{1}_{\bigcup_{k \in \{1, \dots, M^{d}\}} (\bar{\mathcal{A}}^{(k)})\setminus \bar{\mathcal{A}}^{(k)})_{6/(4M^{2p+2})}^0}(\bm{\hat{\phi}}_{1,1}) = 1-\sum_{k \in \{1, \dots, M^d\}} \mathds{1}_{(\bar{\mathcal{A}}^{(k)})_{6/(4M^{2p+2})}^0}(\bm{\hat{\phi}}_{1,1})=0
\end{align*}
and our network approximates the second indicator function since
\begin{align*}
&1-\sum_{j \in \{1, \dots, M^d\}} f_{test}\left(\bm{\hat{\phi}}_{1,1}, \bm{\hat{\phi}}_{2,1} +\bold{v}_j +\frac{5}{4M^{2p+2}}\mathbf{1}, \hat{\phi}_{2,1} + \bold{v}_j+\frac{2a}{M^2}\mathbf{1}-\frac{5}{4M^{2p+2}} \mathbf{1}, 1\right)\\
&\leq 1-\left(1-\frac{1}{B_{M, \epsilon}}\right) \leq \frac{1}{4M^{2p+2}}.
\end{align*}
and
\begin{align*}
&\sigma\left(-B_1\left(\frac{1}{2} - \left(1-\sum_{j \in \{1, \dots, M^d\}} f_{test}\left(\bm{\hat{\phi}}_{1,1}, \bm{\hat{\phi}}_{2,1} +\bold{v}_j +\frac{5}{4M^{2p+2}} \mathbf{1}, \right.\right.\right.\right.\\
& \left.\left.\left.\left. \hspace*{5.5cm} \bm{\hat{\phi}}_{2,1} + \bold{v}_j+\frac{2a}{M^2}  \mathbf{1}-\frac{5}{4M^{2p+2}}\mathbf{1}, 1\right)\right) - B_2f_{id}^3\left(f_1(\x)\right)\right)\right)\\
&\leq \sigma\left(-B_1\left(\frac{1}{2} -\frac{1}{4M^{2p+2}}+M^{2d} + \frac{1}{2M^{2p+2}}\right)\right) \leq \sigma\left(-\frac{1}{4}B_1\right) \leq \frac{1}{M^{2p+2}}.
\end{align*}
By construction of the network 
\begin{align*}
f_{check, \mathcal{P}_{2}}(\x) \in [0,1]
\end{align*}
holds for $\x \in [-a,a)^d$. 
\end{proof}

The following lemma approximates the function in \eqref{eq1000} by a DNN. Here we use the network $f_{check, \P_2}$ of \autoref{le11} for the indicator function and the network $f_{ReLU}$ of \autoref{le8} to approximate $g(z)=z_+$ $(z \in \R)$. The hidden layers of the networks are synchronized by using the network $f_{id}$ of \autoref{le1}. This network 
%presents a neural network which
 approximately equals the output of $f(\x)$ in case that $\x \in \bigcup_{k \in \{1, \dots, M^{2d}\}} (C_{k,2})_{2/M^{2p+2}}^0$
and  approximately vanishes for $\x \in \bigcup_{k \in \{1, \dots, M^{2d}\}} C_{k,2}\textbackslash (C_{k,2})_{1/M^{2p+2}}^0$. 
%The main idea in the construction of the network is, that we combine $f_{net, \P_2}$ of \autoref{le7} with $f_{check, \P_2}$ of \autoref{le10} in a function 
%\begin{align}
%\label{le11eq100}
%\left(f_{net, \mathcal{P}_{2}}(\x) - 2B_{true} f_{check, \mathcal{P}_{2}}(\x)\right)_+-\left(-f_{net, \mathcal{P}_{2}}(\x) - 2B_{true} f_{check, \mathcal{P}_{2}}(\x)\right)_+, 
%\end{align}
%where $B_{true} := 1+ 2^{2(d+1)}e^{2^{2d+3}ad} \max\{\|f\|_{C^q([-a,a]^d)}, 1\}$ is the bound of $|f_{net, \P_2}|$ given in \autoref{le7}. If \linebreak $\x \in \bigcup_{k \in \{1, \dots, M^{2d}\}} (C_{k,2})^0_{2/M^{2p+2}}$, the value of $f_{net, \P_2}(\x)$ approximately equals $f(\x)$ and $f_{check, \P_2}(\x)$ is close to zero. Thus the value of $(f_{net, \mathcal{P}_{2}}(\x) - B_{true} f_{check, \mathcal{P}_{2}}(\x))_+$ (in case that $f_{net, \mathcal{P}_{2}}(\x) \geq 0$) also approximates $f(\x)$. Otherwise, if $\x \in \bigcup_{k \in \{1, \dots, M^{2d}\}} C_{k,2} \setminus (C_{k,2})^0_{1/M^{2p+2}}$,  we have $|f_{net, \P_2}(\x)| \leq B_{true}$ and $f_{check, \P_2}(\x)$ lies close to $1$, such that \linebreak $\left(f_{net, \mathcal{P}_{2}}(\x) - 2B_{true} f_{check, \mathcal{P}_{2}}(\x)\right)_+=0$ for $f_{net, \P_2} \geq 0$. The subtrahend in \eqref{le11eq100} covers the case that $f_{net, \P_2} \leq 0$. In the construction of the network we use \autoref{le8} to approximate $g(z)=z_+$ $(z \in \R)$ and $f_{id}$ of \autoref{le1} to synchronize the number of hidden layers of the networks. 
\begin{lemma}
\label{le11}
Let $\sigma: \R \to [0,1]$ be the sigmoid activation function $\sigma(x) = 1/(1+\exp(-x))$. Let $1 \leq a < \infty$ and $M \in \N_0$ sufficiently large (at least $M \geq 3$). Let $p=q+s$ for some $q \in \N_0$, $s \in (0,1]$ and let $C>0$.
    Let $f: \Rd \to \R$ be a $(p,C)$-smooth function and let $f_{net, \P_2}$ be the network of \autoref{le7}. Then there exists a neural network 
\begin{align*}
f_{net, \P_2, true} \in \mathcal{F}(7+\lceil \log_2(q+1)\rceil, r, \alpha_{net, \P_2, true})
\end{align*}
with
\begin{align*}
r=\max\left\{\left(\binom{d+q}{d} + d\right) M^d(2+2d)+d, 4(q+1)\binom{d+q}{d}\right\} +M^d(2d+2)
\end{align*}
and 
\begin{align*}
\alpha_{net, \P_2, true}=c_{104} \left(\max\left\{a, \|f\|_{C^q([-a,a]^d)} \right\}\right)^{12} e^{6 \times 2^{2(d+1)+1}ad} M^{10p+2d+10}
\end{align*}
such that
\begin{alignat*}{3}
&|f_{net, \P_2, true}(\x)| \leq \frac{1}{M^{2p+2}} \quad &&\hspace{-2cm} \mbox{for} \ \x \in \bigcup_{k \in \{1, \dots, M^{2d}\}} C_{k,2}\textbackslash (C_{k,2})^0_{1/M^{2p+2}}\\
&|f_{net, \P_2,true}(\x) - f(\x)| \leq \frac{2c_{2}\left(\max\left\{a, \|f\|_{C^q([-a,a]^d)} \right\}\right)^{5q+3}}{M^{2p}}&&\\
\quad &&& \hspace{-2cm}\mbox{for} \ \x \in \bigcup_{k \in \{1, \dots, M^{2d}\}} (C_{k,2})_{2/M^{2p+2}}^0\\
&|f_{net, \P_2, true}(\x)| \leq |f_{net, \P_2}(\x)|+1 \quad &&\hspace{-2cm} \mbox{for} \ \x \in [-a,a]^d.
\end{alignat*}
\end{lemma}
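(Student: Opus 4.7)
The plan is to realise $f_{net, \P_2, true}$ through the clipping construction sketched before the statement. Writing $B_{true} = 1+2^{2(d+1)}e^{2^{2d+3}ad}\max\{\|f\|_{C^q([-a,a]^d)},1\}$ for the uniform bound on $f_{net, \P_2}$ from Lemma 7, I would set
\[
f_{net, \P_2, true}(\x) \;=\; f_{ReLU}\bigl(f_{net, \P_2}(\x) - B_{true}\, f_{check, \P_2}(\x)\bigr) \;-\; f_{ReLU}\bigl(-f_{net, \P_2}(\x) - B_{true}\, f_{check, \P_2}(\x)\bigr),
\]
where $f_{ReLU}$ is the network of Lemma 8 chosen with parameter $R$ large enough (of order $B_{true}^{3}M^{2p+2}$) that it approximates $z\mapsto z_+$ on $[-2B_{true},2B_{true}]$ with uniform error at most $1/(4M^{2p+2})$, and $f_{check, \P_2}$ is the network of Lemma 10.

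Assembling this into a network of the stated architecture happens in three pieces. First, I would form the parallelised block computing $(f_{net,\P_2}(\x), f_{check, \P_2}(\x))$; since both subnetworks share the initial block $(\bm{\hat\phi}_{1,1},\bm{\hat\phi}_{2,1})$ from Lemma 7, those neurons can be reused and only the additional $M^d(2d+2)$ neurons needed for the $f_{test}$-summands of $f_{check, \P_2}$ have to be added, which explains the additive $M^d(2d+2)$ in the claimed width. Second, I would synchronise the outputs of the two subnetworks by padding the shorter one with copies of $f_{id}$ from Lemma 1 and then form the two affine combinations $\pm f_{net,\P_2}-B_{true}\, f_{check,\P_2}$, feeding each into a parallel copy of $f_{ReLU}$; this costs two additional hidden layers and a constant extra width. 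Finally the output layer subtracts the two ReLU outputs. The depth accounting gives $7+\lceil\log_2(q+1)\rceil$, and the weight bound $\alpha_{net,\P_2,true}$ follows from the multiplicative composition rule, with the dominant factor being $\alpha_{net,\P_2}\cdot B_{true}\cdot c_{17}R^{2}$.

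The three bounds would then be verified by matching the three cases of Lemma 10. In the boundary region $\x\in C_{k,2}\setminus (C_{k,2})^0_{1/M^{2p+2}}$, $f_{check,\P_2}(\x)\in[1-1/M^{2p+2},1]$, so both arguments of $f_{ReLU}$ are at most $B_{true}/M^{2p+2}$ and the difference of the two ReLU outputs is $O(1/M^{2p+2})$. In the deep interior $\x\in (C_{k,2})^0_{2/M^{2p+2}}$, $f_{check,\P_2}(\x)\le 1/M^{2p+2}$, so each argument of $f_{ReLU}$ differs from $\pm f_{net,\P_2}(\x)$ by at most $B_{true}/M^{2p+2}$; using $u_+ - (-u)_+ = u$ together with the ReLU-approximation error shows $|f_{net,\P_2,true}(\x)-f_{net,\P_2}(\x)|\le c/M^{2p+2}$, after which Lemma 7 gives the claimed $O(M^{-2p})$ bound on $|f_{net,\P_2,true}-f|$. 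For arbitrary $\x\in[-a,a]^d$ the pointwise estimate $|f_{ReLU}(z)|\le z_{+}+1/(4M^{2p+2})$ combined with $f_{check,\P_2}(\x)\in[0,1]$ gives the third bound $|f_{net,\P_2,true}(\x)|\le |f_{net,\P_2}(\x)|+1$. The main obstacle I anticipate is the bookkeeping of the weight bound: $R$ must be chosen large enough to overcome the $B_{true}^3$ that appears in Lemma 8's error estimate on $[-2B_{true},2B_{true}]$, and multiplying the resulting $R^{2}\sim M^{6p+6}$ through the composition with $\alpha_{net,\P_2}\sim M^{6p+2d+6}$ and the factor $B_{true}$ produces exactly the $M^{10p+2d+10}$, the twelfth power of $\max\{a,\|f\|_{C^q([-a,a]^d)}\}$, and the $e^{6\times 2^{2(d+1)+1}ad}$ factor appearing in the statement.
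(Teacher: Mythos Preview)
Your proposal follows the same construction as the paper: realise the clipping formula $(u)_+-(-u)_+$ with $u=f_{net,\P_2}-\text{(multiple of)}\,f_{check,\P_2}$, implement $(\cdot)_+$ by $f_{ReLU}$ from Lemma~8, parallelise the two subnetworks, synchronise their depths with copies of $f_{id}$, and then verify the three regimes coming from Lemma~10. Your sign between the two $f_{ReLU}$ terms is the correct one; the displayed ``$+$'' in the paper's proof is a typo.

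There is one genuine gap in your boundary estimate. With coefficient $B_{true}$ (instead of the $2B_{true}$ used in the paper) and Lemma~10 invoked only at its stated precision $1/M^{2p+2}$, the arguments of $f_{ReLU}$ are bounded above by $B_{true}/M^{2p+2}$, not by something independent of $B_{true}$. Hence the positive parts need not vanish and you only get $|f_{net,\P_2,true}(\x)|\lesssim B_{true}/M^{2p+2}$ near the boundary; since $B_{true}$ grows like $e^{2^{2(d+1)+1}ad}$, this does \emph{not} give the claimed bound $1/M^{2p+2}$. The same issue reappears in your interior estimate, where the perturbation $B_{true}\,f_{check,\P_2}(\x)\le B_{true}/M^{2p+2}$ again carries an unwanted $B_{true}$ factor. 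The paper avoids this by doing two things: it multiplies $f_{check,\P_2}$ by $2B_{true}$, so that in the boundary region the arguments of $f_{ReLU}$ are \emph{strictly negative} and their positive parts are exactly zero; and it applies Lemma~10 with the sharper target accuracy $1/(3B_{true}M^{2p+2})$, so that after the $f_{id}$--padding the residual errors are still $O(1/M^{2p+2})$. Once you make these adjustments your outline goes through, and your weight--bound bookkeeping (the dominant contribution $\alpha_{net,\P_2}\cdot\alpha_{ReLU}$ with $R\sim B_{true}^{3}M^{2p+2}$) then produces exactly the stated $\alpha_{net,\P_2,true}$.
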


\begin{proof} The proof is divided into \textit{two} steps.\\
\textit{Step 1: Network architecture:} Let $f_{id}$ be the network of \autoref{le1}, $f_{net, \mathcal{P}_{2}}$ be the network of \autoref{le7}, $f_{ReLU}$ be the network of \autoref{le8}
  and $f_{check, \mathcal{P}_{2}}$ be the network of \autoref{le10}. To combine the networks $f_{net, \mathcal{P}_{2}}$
  and $f_{check, \mathcal{P}_{2}}$ in a parallelized network we have to synchronize their number of hidden layers. In case that
  $q=0$ this can be done by applying $f_{id}$ to the network $f_{net, \mathcal{P}_{2}}$, otherwise we apply $5+\lceil \log_2(q+1) \rceil -6$-times the network
  $f_{id}$ to the network output of $f_{check, \mathcal{P}_{2}}$. In the following we assume w.l.o.g. $q > 0$ and set
\begin{eqnarray*}
  f_{net, \mathcal{P}_{2}, true}(\x) &=& f_{ReLU}\left(f_{net, \mathcal{P}_{2}}(\x) - 2B_{true}f_{id}^{5+\lceil \log_2(q+1) \rceil -6} \left(f_{check, \mathcal{P}_{2}}(\x)\right)\right)\\
 && + f_{ReLU}\left(-f_{net, \mathcal{P}_{2}}(\x) - 2B_{true}f_{id}^{5+\lceil \log_2(q+1) \rceil -6}\left(f_{check, \mathcal{P}_{2}}(\x)\right)\right),
\end{eqnarray*}
where 
\begin{align*}
B_{true} = 2^{2(d+1)}e^{2^{2(d+1)+1}ad} \max\{\|f\|_{C^q([-a,a]^d)},1\}+1.
\end{align*}
According to \autoref{le8} $f_{ReLU}$ satisfies
\begin{align}
\label{le12eq1}
|f_{ReLU}(x) - \max\{x,0\}| \leq \frac{1}{3M^{2p+2}}
\end{align}
for $x \in [-4B_{true}, 4B_{true}]$ (here we choose 
\begin{align*}
R\geq 208
\max \left\{
\| \sigma^{\prime \prime}\|_{\infty},
\| \sigma^{\prime \prime \prime}\|_{\infty},1
\right\}
 (4B_{true})^3 3M^{2p+2}
\end{align*}
 in \autoref{le8}). Furthermore we have 
\begin{align}
\label{le11eq2}
| f^t_{id}(x)-x|
\leq
\frac{1}{3M^{2p+2}B_{true}}, 
\end{align}
for $x \in [-1,1]$ and $t=5+\lceil \log_2(q+1) \rceil -6$ (where we choose $t_{\sigma, id} = 1$ and \linebreak $R \geq 2(5+\lceil \log_2(q+1) \rceil -7) \| \sigma^{\prime \prime}\|_{\infty}/(3M^{2p+2}B_{true}))$ in \autoref{le1}). In case that 
\begin{align}
\label{noset100}
\x \in \bigcup_{k \in \{1, \dots, M^{2d}\}}
    C_{k,2} \setminus (C_{k,2})_{1/M^{2p+2}}^0,
\end{align} 
we can bound the value of $f_{check, \mathcal{P}_{2}}$ by
\begin{align*}
1-\frac{1}{3M^{2p+2} B_{true}} \leq f_{check, \mathcal{P}_{2}}(\x) \leq 1
\end{align*}
according to \autoref{le10} (where we substitute $M^{2p+2}$ by $3M^{2p+2}B_{true}$). Since $f_{check, \mathcal{P}_{2}}(\x)$ is contained in the interval where \eqref{le11eq2} holds we then have
\begin{align*}
1-\frac{1}{3M^{2p+2}B_{true}}-\frac{1}{3M^{2p+2}B_{true}} \leq f_{id}^{5+\lceil \log_2(q+1) \rceil -6}(f_{check, \mathcal{P}_{2}}(\x)) \leq 1+\frac{1}{3M^{2p+2}B_{true}}.
\end{align*}
Since $f_{net, \P_2}(\x)$ is bounded in absolute values by $B_{true}$ according to \autoref{le7}, we have
\begin{align*}
-4B_{true} \leq f_{net, \mathcal{P}_{2}}(\x) - 2B_{true}f_{id}^{5+\lceil \log_2(q+1) \rceil -6} \left(f_{check, \mathcal{P}_{2}}(\x)\right) \leq 0
\end{align*}
and 
\begin{align*}
-4B_{true} \leq f_{net, \mathcal{P}_{2}}(\x) - 2B_{true}f_{id}^{5+\lceil \log_2(q+1) \rceil -6} \left(f_{check, \mathcal{P}_{2}}(\x)\right) \leq 0.
\end{align*}
Thus both networks are contained in the interval, where \eqref{le12eq1} holds. We can conclude that
\begin{align*}
\left|f_{net, \P_2, true}(\x)\right| &= \left|f_{ReLU}\left(f_{net, \mathcal{P}_{2}}(\x) - 2B_{true}f_{id}^{5+\lceil \log_2(q+1) \rceil -6} \left(f_{check, \mathcal{P}_{2}}(\x)\right)\right)\right.\\
& \left. \quad + f_{ReLU}\left(-f_{net, \mathcal{P}_{2}}(\x) - 2B_{true}f_{id}^{5+\lceil \log_2(q+1) \rceil -6}\left(f_{check, \mathcal{P}_{2}}(\x)\right)\right)\right|\\
& \leq  \left|f_{ReLU}\left(f_{net, \mathcal{P}_{2}}(\x) - 2B_{true}f_{id}^{5+\lceil \log_2(q+1) \rceil -6} \left(f_{check, \mathcal{P}_{2}}(\x)\right)\right) \right.\\
& \left. \quad \quad - \left(f_{net, \mathcal{P}_{2}}(\x)  - 2B_{true}f_{id}^{5+\lceil \log_2(q+1) \rceil -6} \left(f_{check, \mathcal{P}_{2}}(\x)\right)\right)_+\right|\\
& \quad + \left|f_{ReLU}\left(-f_{net, \mathcal{P}_{2}}(\x) - 2B_{true}f_{id}^{5+\lceil \log_2(q+1) \rceil -6} \left(f_{check, \mathcal{P}_{2}}(\x)\right)\right) \right.\\
& \left. \quad \quad - \left(-f_{net, \mathcal{P}_{2}}(\x)  - 2B_{true}f_{id}^{5+\lceil \log_2(q+1) \rceil -6} \left(f_{check, \mathcal{P}_{2}}(\x)\right)\right)_+\right|\\
&\quad +\left|\left(f_{net, \mathcal{P}_{2}}(\x)  - 2B_{true}f_{id}^{5+\lceil \log_2(q+1) \rceil -6} \left(f_{check, \mathcal{P}_{2}}(\x)\right)\right)_+\right|\\
&\quad +\left|\left(-f_{net, \mathcal{P}_{2}}(\x)  - 2B_{true}f_{id}^{5+\lceil \log_2(q+1) \rceil -6} \left(f_{check, \mathcal{P}_{2}}(\x)\right)\right)_+\right|\\
& \leq \frac{1}{3M^{2p+2}}+\frac{1}{3M^{2p+2}} + 0 +0 \leq \frac{1}{M^{2p+2}}.
\end{align*}
For 
\begin{align*}
\x \in \bigcup_{k \in \{1, \dots, M^{2d}\}} (C_{k,2})_{2/M^{2p+2}}^0
\end{align*}
the value of $f_{check, \P_2}(\x)$ is contained in $[0,1/(3M^{2p+2}B_{true})]$. Thus $f_{check, \P_2}(\x)$ is contained in the interval, where \eqref{le11eq2} holds. Assume w.l.o.g. $f_{net, \mathcal{P}_{2}}(\x) \geq 0$. This implies
\begin{align*}
-\frac{1}{3M^{2p+2}B_{true}} \leq f_{id}^{5+\lceil \log_2(q+1) \rceil -6}(f_{check, \mathcal{P}_{2}}(\x)) &\leq \frac{1}{3M^{2p+2B_{true}}}+\frac{1}{3M^{2p+2}B_{true}}\\
&= \frac{2}{3M^{2p+2}B_{true}}
\end{align*}
and 
\begin{align*}
0 \leq f_{net, \mathcal{P}_{2}}(\x) - 2B_{true} f_{id}^{5+\lceil \log_2(q+1) \rceil -6} \left(f_{check, \mathcal{P}_{2}}(\x)\right) \leq B_{true} +\frac{2}{3M^{2p+2}}
\end{align*}
and
\begin{align*}
 -f_{net, \mathcal{P}_{2}}(\x) - 2B_{true} f_{id}^{5+\lceil \log_2(q+1) \rceil -6} \left(f_{check, \mathcal{P}_{2}}(\x)\right) \leq -B_{true}-1.
\end{align*}
Again we can conclude that all inputs of $f_{ReLU}$ in the definition of $f_{net, \P_2, true}$ are contained in the interval, where \eqref{le11eq2} holds. Furthermore we have
\begin{align*}
\left|f_{net, \P_2}(\x) - f(\x)\right| \leq c_{2} \frac{ \left(\max\left\{a, \|f\|_{C^q([-a,a]^d)}\right\}\right)^{5q+3}}{M^{2p}}
\end{align*}
according to \autoref{le7}.
Thus we have
\begin{align*}
&\left|f_{net, \P_2, true}(\x) - f_{net, \P_2}(\x)\right|\\  
& \leq  \left|f_{ReLU}\left(f_{net, \mathcal{P}_{2}}(\x) - 2B_{true} f_{id}^{5+\lceil \log_2(q+1) \rceil -6} \left(f_{check, \mathcal{P}_{2}}(\x)\right)\right) \right.\\
& \left. \quad \quad - \left(f_{net, \mathcal{P}_{2}}(\x)  - 2B_{true} f_{id}^{5+\lceil \log_2(q+1) \rceil -6} \left(f_{check, \mathcal{P}_{2}}(\x)\right)\right)_+\right|\\
& \quad + \left|f_{ReLU}\left(-f_{net, \mathcal{P}_{2}}(\x) - 2B_{true} f_{id}^{5+\lceil \log_2(q+1) \rceil -6} \left(f_{check, \mathcal{P}_{2}}(\x)\right)\right) \right.\\
& \left. \quad \quad - \left(-f_{net, \mathcal{P}_{2}}(\x)  - 2B_{true} f_{id}^{5+\lceil \log_2(q+1) \rceil -6} \left(f_{check, \mathcal{P}_{2}}(\x)\right)\right)_+\right|\\
&\quad +\left|\left(f_{net, \mathcal{P}_{2}}(\x)  - 2B_{true} f_{id}^{5+\lceil \log_2(q+1) \rceil -6} \left(f_{check, \mathcal{P}_{2}}(\x)\right)\right)_+-f_{net, \P_2}(\x)\right|\\
& \quad + \left|f_{net, \P_2}(\x) - f(\x)\right|\\
&\quad +\left|\left(-f_{net, \mathcal{P}_{2}}(\x)  - 2B_{true} f_{id}^{5+\lceil \log_2(q+1) \rceil -6} \left(f_{check, \mathcal{P}_{2}}(\x)\right)\right)_+\right|\\
& \leq \frac{1}{3M^{2p+2}} + \frac{1}{3M^{2p+2}} + \frac{1}{3M^{2p+2}} +c_{2}\frac{ \left(\max\left\{a, \|f\|_{C^q([-a,a]^d)}\right\}\right)^{5q+3}}{M^{2p}} +0\\
& \leq 2c_{2} \frac{\left(\max\left\{a, \|f\|_{C^q([-a,a]^d)} \right\}\right)^{5q+3}}{M^{2p}}
\end{align*}
With the same argumentation we can conclude that 
\begin{align*}
\left|f_{net, \P_2, true}(\x) - f_{net, \P_2}(\x)\right| \leq 2c_{2} \frac{\left(\max\left\{a, \|f\|_{C^q([-a,a]^d)}\right\}\right)^{5q+3}}{M^{2p}}
\end{align*}
for $f_{net, \P_2}(\x) <0$. Since $f_{check, \P_2}(\x) \in [0,1]$ for $\x \in [-a,a]^d$ and by \eqref{le11eq2} 
\begin{align*}
f_{id}^{5+\lceil \log_2(q+1) \rceil -6}(f_{check, \P_2}(\x)) \in \left[-1/(3B_{true}M^{2p+2}), 1+1/(3B_{true}M^{2p+2})\right]
\end{align*}
 and $f_{net, \P_2}(\x) \in [-B_{true}, B_{true}]$ the inputs of $f_{ReLU}$ in the definition of $f_{net, \P_2, true}$ are contained in the interval where \eqref{le12eq1} holds. Using triangle inequality this leads to the bound
\begin{align*}
|f_{net, \mathcal{P}_2, true}(\x)| &\leq |f_{ReLU}\left(f_{net, \mathcal{P}_{2}}(\x) - 2B_{true} f_{id}^{5+\lceil \log_2(q+1) \rceil -6} \left(f_{check, \mathcal{P}_{2}}(\x)\right)\right) \\
& \hspace*{3cm} - \left(f_{net, \mathcal{P}_{2}}(\x) - 2B_{true} f_{id}^{5+\lceil \log_2(q+1) \rceil -6} \left(f_{check, \mathcal{P}_{2}}(\x)\right)\right)_+|\\
& \quad + |f_{ReLU}\left(-f_{net, \mathcal{P}_{2}}(\x) - 2B_{true}f_{id}^{5+\lceil \log_2(q+1) \rceil -6} \left(f_{check, \mathcal{P}_{2}}(\x)\right)\right) \\
& \hspace*{3cm} - \left(-f_{net, \mathcal{P}_{2}}(\x) - 2B_{true} f_{id}^{5+\lceil \log_2(q+1) \rceil -6} \left(f_{check, \mathcal{P}_{2}}(\x)\right)\right)_+|\\
&\quad + |\left(f_{net, \mathcal{P}_{2}}(\x) - 2B_{true} f_{id}^{5+\lceil \log_2(q+1) \rceil -6} \left(f_{check, \mathcal{P}_{2}}(\x)\right)\right)_+|\\
& \quad + |\left(-f_{net, \mathcal{P}_{2}}(\x) - 2B_{true}  f_{id}^{5+\lceil \log_2(q+1) \rceil -6} \left(f_{check, \mathcal{P}_{2}}(\x)\right)\right)_+|\\
& \leq \frac{1}{3M^{2p+2}} + \frac{1}{3M^{2p+2}} + 0 + f_{net, \P_2}(\x)+0 \leq |f_{net, \P_2}(\x)|+1
\end{align*}
for $\x \in [-a,a]^d$. 
\\
\\
The weights of the network $f_{net, \P_2, true}$ are bounded by
\begin{align*}
\alpha_{net, \P_2, true} &= \max\{\alpha_{net, \P_2} \alpha_{ReLU}, 2B_{true}\alpha_{ReLU}\alpha_{id}\}\\
&= c_{42} \left(\max\left\{a, \|f\|_{C^q([-a,a]^d)} \right\}\right)^{12} e^{6 \times 2^{2(d+1)+1} ad} M^{10p+2d+10},
\end{align*}
where we choose $R$ in the definition of $\alpha_{ReLU}$ and $\alpha_{id}$ as mentioned above.
\end{proof}

In the following lemma we combine the networks of \autoref{le9} and \autoref{le11} to approximate $w_{\P_2}(\x) f(\x)$ in supremum norm.
\begin{lemma}
\label{le12}
Let $\sigma: \R \to [0,1]$ be the sigmoid activation function $\sigma(x) = 1/(1+\exp(-x))$. Let $1 \leq a < \infty$ and $M \in \N_0$ sufficiently large (independent of the size of $a$, but 
    \begin{align*}
      M^{2p} \geq
     2c_{2} \left(\max\left\{a, \|f\|_{C^q([-a,a]^d)} \right\}\right)^{5q+3} \ \mbox{and} \ M^{2p} \geq  \max\left\{c_{3}, 2^d-1, \frac{4d}{a}\right\} 
    \end{align*}
    must hold).
Let $p=q+s$ for some $q \in \N_0$, $s \in (0,1]$ and let $C>0$.
    Let $f: \Rd \to \R$ be a $(p,C)$-smooth function and let $w_{\P_2}$ be defined as in
\eqref{w_vb}. Then there exists a network
\begin{align*}
f_{net} \in \mathcal{F}\left(L, r, \alpha_{net} \right)
\end{align*}
with
\begin{itemize}
\item[(i)] $L=8+\lceil \log_2(\max\{d, q+1\})\rceil$
\item[(ii)] $r=\max\left\{\left(\binom{d+q}{d} + d\right) M^d (2+2d)+d, 4 (q+1) \binom{d+q}{d}\right\} +M^d(2d+2)+12d$
\item[(iii)] $\alpha_{net}=c_{43} \left(\max\left\{a, \|f\|_{C^q([-a,a]^d)}\right\} \right)^{12} e^{6 \times 2^{2(d+1)+1} ad} M^{10p+2d+10}$
\end{itemize}
%\begin{align*}
%L=8+\lceil \log_2(\max\{d, q+1\})\rceil,
%\end{align*}
%\begin{align*}
%  r=\max\left\{\left(\binom{d+q}{d} + d\right) M^d (2+2d)+d, 4 (q+1) \binom{d+q}{d}\right\} +M^d(2d+2)+12d
%\end{align*}
%and
%\begin{align*}
%\alpha_{net}=c_{43} \left(\max\left\{a, \|f\|_{C^q([-a,a]^d)}\right\} \right)^{12} e^{6 \times 2^{2(d+1)+1} ad} M^{10p+2d+10}
%\end{align*}
such that
\begin{align*}
&\left|f_{net}(\x) - w_{\P_2}(\x) f(\x)\right| \leq \frac{c_{44} \left(\max\left\{a, \ \|f\|_{C^q([-a,a]^d)}\right\}\right)^{5q+3}}{M^{2p}}
\end{align*}
for $\x \in [-a,a)^d$. 
\end{lemma}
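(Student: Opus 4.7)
The plan is to construct $f_{net}$ by multiplying the network $f_{w_{\mathcal{P}_2}}$ of \autoref{le9} with the network $f_{net,\mathcal{P}_2,true}$ of \autoref{le11} using $f_{mult}$ of \autoref{le2}. Concretely, I would first pad the depths of the two subnetworks so that they have equal depth, by composing several copies of $f_{id}$ from \autoref{le1} onto the shallower one. The resulting parallelized network $(f_{w_{\mathcal{P}_2}},f_{net,\mathcal{P}_2,true})$ then feeds into one additional layer computing
\[
f_{net}(\x) \;=\; f_{mult}\bigl(f_{w_{\mathcal{P}_2}}(\x),\, f_{net,\mathcal{P}_2,true}(\x)\bigr).
\]
The depth is one more than the deeper of \autoref{le9} ($7+\lceil\log_2 d\rceil$) and \autoref{le11} ($7+\lceil\log_2(q+1)\rceil$), which matches the claimed $8+\lceil\log_2(\max\{d,q+1\})\rceil$. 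The width is the sum of the two parallel widths plus what $f_{mult}$ needs, agreeing with (ii). The weight bound (iii) is, up to a constant, the product of the bounds from \autoref{le9}, \autoref{le11} and $f_{mult}$ (with $R$ chosen polynomially in $M$ so that $f_{mult}$ has error at most $1/M^{2p}$ on the range of its inputs), and $f_{id}$ adds only a polynomial factor.

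The error analysis splits $[-a,a)^d$ into two regimes, governed by the buffer layer design of \autoref{le11}. For $\x\in\bigcup_k (C_{k,2})^0_{2/M^{2p+2}}$ (well inside some cube of $\mathcal{P}_2$), \autoref{le9} gives $|f_{w_{\mathcal{P}_2}}(\x)-w_{\mathcal{P}_2}(\x)|\le \max\{c_3,2^d,12d\}/M^{2p}$ and \autoref{le11} gives $|f_{net,\mathcal{P}_2,true}(\x)-f(\x)|\le 2c_2(\max\{a,\|f\|_{C^q}\})^{5q+3}/M^{2p}$. Combining these via the standard product-of-approximations identity
\[
f_{w_{\mathcal{P}_2}}\cdot f_{net,\mathcal{P}_2,true} - w_{\mathcal{P}_2}\cdot f = (f_{w_{\mathcal{P}_2}}-w_{\mathcal{P}_2})\,f_{net,\mathcal{P}_2,true} + w_{\mathcal{P}_2}(f_{net,\mathcal{P}_2,true}-f),
\]
and using the uniform bounds $|f_{w_{\mathcal{P}_2}}|\le 2^{d+1}$, $|f_{net,\mathcal{P}_2,true}|\le B_{true}+1$, $|w_{\mathcal{P}_2}|\le 1$ together with $f_{mult}$'s error, yields the claimed rate. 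For $\x\in\bigcup_k C_{k,2}\setminus (C_{k,2})^0_{1/M^{2p+2}}$, I would instead use that $|f_{net,\mathcal{P}_2,true}(\x)|\le 1/M^{2p+2}$ by \autoref{le11}, so that $|f_{mult}(f_{w_{\mathcal{P}_2}},f_{net,\mathcal{P}_2,true})|$ is $O(1/M^{2p+2})$ up to the multiplier error, while $|w_{\mathcal{P}_2}(\x)f(\x)|\le 1/M^{2p}\cdot\|f\|_\infty$ because $w_{\mathcal{P}_2}$ itself is at most $1/M^{2p}$ in that annular region (a direct consequence of its linear tensor-product B-spline form and the definition of the buffer distance $1/M^{2p+2}$).

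The main obstacle is the intermediate annulus $\bigcup_k (C_{k,2})^0_{1/M^{2p+2}}\setminus (C_{k,2})^0_{2/M^{2p+2}}$, where neither $f_{net,\mathcal{P}_2,true}$ nor $f_{w_{\mathcal{P}_2}}$ carry individual tight bounds. The key observation needed here is that \autoref{le11} still gives $|f_{net,\mathcal{P}_2,true}(\x)|\le |f_{net,\mathcal{P}_2}(\x)|+1$ on all of $[-a,a]^d$ (in particular, a uniform $O(B_{true})$ bound), while $w_{\mathcal{P}_2}(\x)$ itself is bounded by $2/M^{2p}$ throughout this annulus since one of the factors $(1-M^2|(\cdot)|/a)_+$ is at most $2/M^{2p}$; symmetrically, since $f_{net,\mathcal{P}_2,true}$ is designed so that its boundary value decays together with $w_{\mathcal{P}_2}$, the product $f_{w_{\mathcal{P}_2}}\cdot f_{net,\mathcal{P}_2,true}$ and the target $w_{\mathcal{P}_2}\cdot f$ are both individually $O(B_{true}/M^{2p})$, giving the required uniform rate by triangle inequality. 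Finally, the side conditions on $M$ in (i)--(iii) are exactly what is needed so that the hypotheses of \autoref{le9} and \autoref{le11} are met simultaneously and the $f_{mult}$ error is subsumed into the stated constant $c_{44}$.
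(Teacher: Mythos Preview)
Your construction and overall approach match the paper's proof exactly: build $f_{net}=f_{mult}\bigl(f_{id}^{t_1}(f_{w_{\mathcal{P}_2}}),\,f_{id}^{t_2}(f_{net,\mathcal{P}_2,true})\bigr)$, read off $L$, $r$, $\alpha_{net}$ from the combined/parallelized rules, and split the error analysis into the three regions (deep interior, outer boundary strip, intermediate annulus). The first two cases are handled correctly.

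There is one muddled step in your treatment of the intermediate annulus $\bigcup_k (C_{k,2})^0_{1/M^{2p+2}}\setminus(C_{k,2})^0_{2/M^{2p+2}}$. You write that ``$f_{net,\mathcal{P}_2,true}$ is designed so that its boundary value decays together with $w_{\mathcal{P}_2}$'' and use this to conclude $f_{w_{\mathcal{P}_2}}\cdot f_{net,\mathcal{P}_2,true}=O(B_{true}/M^{2p})$. That is not what \autoref{le11} gives you: on this annulus $f_{net,\mathcal{P}_2,true}$ is only controlled by the crude bound $|f_{net,\mathcal{P}_2,true}|\le|f_{net,\mathcal{P}_2}|+1=O(B_{true})$; it need not be small. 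The correct reason the product is small is on the \emph{other} factor: since the intermediate annulus is still contained in $\bigcup_k(C_{k,2})^0_{1/M^{2p+2}}$, \autoref{le9} continues to apply and gives $|f_{w_{\mathcal{P}_2}}(\x)-w_{\mathcal{P}_2}(\x)|\le \max\{c_3,2^d,12d\}/M^{2p}$. Combined with $w_{\mathcal{P}_2}(\x)\le 2/M^{2p}$ this yields $|f_{w_{\mathcal{P}_2}}(\x)|=O(1/M^{2p})$, and then the $O(B_{true})$ bound on $f_{net,\mathcal{P}_2,true}$ suffices. This is exactly how the paper argues. Once you replace the ``symmetrically'' sentence by an invocation of \autoref{le9} on the intermediate annulus, your proof is complete and coincides with the paper's.
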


\begin{proof}[\rm{\textbf{Proof of \autoref{le12}}}]
Using \autoref{le9}, \autoref{le11} and \autoref{le2} this proof follows as a straightforward modification from the proof of Lemma 12 in the Supplement of \cite{KL20}. A complete proof can be found in the Supplement. 
\end{proof}

%
%such that it approximates $w_{\P_2}(x) \cdot f(x)$ well
%in supremum norm.
\subsection{Key step 4: Applying $f_{net}$ to slightly shifted partitions}
%Finally we will use a finite sum of those networks constructed to
%$2^d$ slightly shifted versions of $\P_2$ in order to approximate
%$f(x)$. This completes the proof of \autoref{th1}.
In our last step we apply the networks of \autoref{le12} to $2^d$ slightly shifted versions of $\P_2$ and combine those
networks in a finite sum. The proof of \autoref{th1} follows in a straightforward way from the proof of Theorem 2 in \cite{KL20}, where we use the networks $f_{net,1}, \dots, f_{net, 2^d}$ of \autoref{le12}. A complete proof is given in the Supplement. 
%
%
%
%
%In the proof of our main result we use the finite sum of the networks of \autoref{le12} constructed to
%$2^d$ slightly shifted versions of $\P_2$ in order to approximate
%$f(\x)$. Since the proof of \autoref{th1} follows in a straightforwar 
%
%follows in a straightforward way from the proof of Theorem 2 in \cite{KL20}, where we use the networks $f_{net,1}, \dots, f_{net, 2^d}$ of \autoref{le12}. A complete proof is given in the Supplement. 
%
%
%
%
%The proof is the following.
%\begin{proof}[\rm{\textbf{Proof of \autoref{th1}}}]
%The final proof follows in a straightforward way from the proof of Theorem 2 in \cite{KL20}, where we use the networks $f_{net,1}, \dots, f_{net, 2^d}$ of \autoref{le12}. A complete proof is given in the Supplement. 
%\end{proof}
%\section*{Acknowledgments}
%The authors would like to thank the Editor--in--Chief and an anonymous referee for the valuable comments improving an early version of this manuscript. 

%\section*{References}
\bibliographystyle{acm}
\bibliography{Literatur}

\begin{thebibliography}{10}

\bibitem{BK17}
{\sc Bauer, B., and Kohler, M.}
\newblock On deep learning as a remedy for the curse of dimensionality in
  nonparametric regression.
\newblock {\em Annals of Statistics 47\/} (2019), 2261--2285.

\bibitem{C89}
{\sc Cybenko, G.}
\newblock Approximation by superpositions of a sigmoidal function.
\newblock {\em Mathematics of Control, Signals, and Systems 2}, 4 (1989),
  303--314.

\bibitem{ES15}
{\sc Eldan, R., and Shamir, O.}
\newblock The power of depth for feedforward neural networks.
\newblock In {\em 29th Annual Conference on Learning Theory\/} (2016),
  V.~Feldman, A.~Rakhlin, and O.~Shamir, Eds., vol.~49 of {\em Proceedings of
  Machine Learning Research}, PMLR, pp.~907--940.

\bibitem{F89}
{\sc Funahashi, K.-I.}
\newblock On the approximate realization of continuous mappings by neural
  networks.
\newblock {\em Neural Networks 2}, 3 (1989), 183 -- 192.

\bibitem{H12}
{\sc Hinton, G., Deng, L., Yu, D., Dahl, G., Mohamed, A.-r., Jaitly, N.,
  Senior, A., Vanhoucke, V., Nguyen, P., Sainath, T., and et~al.}
\newblock Deep neural networks for acoustic modeling in speech recognition: The
  shared views of four research groups.
\newblock {\em IEEE Signal Processing Magazine 29}, 6 (2012), 82--97.

\bibitem{HSW89}
{\sc Hornik, K., Stinchcombe, M., and White, H.}
\newblock Multilayer feedforward networks are universal approximators.
\newblock {\em Neural Networks 2}, 5 (1989), 359 -- 366.

\bibitem{Ko14}
{\sc Kohler, M.}
\newblock Optimal global rates of convergence for noiseless regression
  estimation problems with adaptively chosen design.
\newblock {\em J. Multivariate Anal. 132\/} (2014), 197 -- 208.

\bibitem{KK20}
{\sc Kohler, M., and Krzy{\.z}ak, A.}
\newblock Over-parametrized deep neural networks do not generalize well.
\newblock {\em ArXiv preprint: arXiv:1912.03925\/} (2020).

\bibitem{KL20}
{\sc Kohler, M., and Langer, S.}
\newblock On the rate of convergence of fully connected deep neural network
  regression estimates.
\newblock {\em ArXiv preprint: arXiv:1908.11133\/} (2020).

\bibitem{KL20a}
{\sc Kohler, M., Langer, S., and Krzy{\.z}ak, A.}
\newblock Estimation of a function of low local dimensionality by deep neural
  networks.
\newblock {\em ArXiv preprint: arXiv:1908.11140\/} (2020).

\bibitem{LBH15}
{\sc LeCun, Y., Bengio, Y., and Hinton, G.}
\newblock Deep learning.
\newblock {\em Nature 521}, 7553 (2015), 436--444.

\bibitem{LS20}
{\sc Lu, J., Shen, Z., Yang, H., and Zhang, S.}
\newblock Deep network approximation for smooth functions.
\newblock {\em ArXiv preprint arxiv:2001.03040\/} (2020).

\bibitem{MLP16}
{\sc Mhaskar, H., Liao, Q., and Poggio, T.~A.}
\newblock Learning real and boolean functions: When is deep better than
  shallow.
\newblock {\em ArXiv preprint: arxiv: 1603.00988\/} (2016).

\bibitem{M13}
{\sc Montufar, G.}
\newblock Universal approximation depth and errors of narrow belief networks
  with discrete units.
\newblock {\em Neural Computation 26\/} (2014), 1386 -- 1407.

\bibitem{N99}
{\sc Nguyen-Thien, T., and Tran-Cong, T.}
\newblock Approximation of functions and their derivatives: A neural network
  implementation with applications.
\newblock {\em Applied Mathematical Modelling 23}, 9 (1999), 687 -- 704.

\bibitem{ScTs98}
{\sc Scarselli, F., and Tsoi, A.~C.}
\newblock Universal approximation using feedforward neural networks: A survey
  of some existing methods, and some new results.
\newblock {\em Neural Networks 11}, 1 (1998), 15 -- 37.

\bibitem{Sch15}
{\sc Schmidhuber, J.}
\newblock Deep learning in neural networks: An overview.
\newblock {\em Neural Networks 61\/} (2015), 85--117.

\bibitem{Sch17}
{\sc Schmidt{-}Hieber, J.}
\newblock Nonparametric regression using deep neural networks with relu
  activation function.
\newblock {\em Ann. Statist. 48}, 4 (2020), 1875--1897.

\bibitem{SHMG16}
{\sc Silver, D., Huang, A., Maddison, C.~J., Guez, A., Sifre, L., van~den
  Driessche, G., Schrittwieser, J., Antonoglou, I., Panneershelvam, V.,
  Lanctot, M., and et~al.}
\newblock Mastering the game of go with deep neural networks and tree search.
\newblock {\em Nature 529}, 7587 (2016), 484--489.

\bibitem{T15}
{\sc Telgarsky, M.}
\newblock Representation benefits of deep feedforward networks.
\newblock {\em ArXiv preprint: arxiv: 1602.04485\/} (2015).

\bibitem{YZ19}
{\sc Yarotsky, D., and Zhevnerchuk, A.}
\newblock The phase diagram of approximation rates for deep neural networks.
\newblock {\em ArXiv preprint: arxiv: 1906.09477\/} (2019).

\end{thebibliography}
\newpage
\section*{Supplement}
\begin{proof}[\rm{\textbf{Proof of \autoref{le3}}}]
The proof can be divided into \textit{two} steps. \\
\textit{Step 1: Approximation of a monomial:} We will construct a neural network $f_m$, that approximates a monomial
\begin{equation*}
y m(\x) = y \prod_{k=1}^d (x^{(k)})^{r_k}, \quad \x \in [-a, a]^d, y \in [-a,a], 
\end{equation*}
where $m \in \mathcal{P}_N$ and $r_1, \dots, r_d \in \N_0$ satisfy $r_1+ \dots + r_d \leq N$.
To do this, we will use the idea from the proof of Lemma 5 in the Supplement of \cite{KL20}.
Set $q= \left\lceil \log_2\left( N+1\right)\right\rceil$. The feedforward neural network $f_m$  with $L=q$ hidden layers and $r=4(N+1)$ neurons in each layer is constructed as follows:
Set
\begin{equation*}
(z_1, \dots, z_{2^q})=
  \left(y, \underbrace{x^{(1)}, \dots, x^{(1)}}_{r_1}, \underbrace{x^{(2)}, \dots, x^{(2)}}_{r_2}, \dots, \underbrace{x^{(d)}, \dots, x^{(d)}}_{r_d}, \underbrace{1, \dots,1}_{2^q-\sum_{i=1}^d r_i-1} \right)
\end{equation*}
and let $f_{mult}$ be the network of \autoref{le2}.
%\begin{eqnarray*}
%  f_{mult}(x,y) &=&
%\frac{R^2}{4 \cdot \sigma^{\prime \prime}(t_\sigma)}
%\cdot
%\Bigg(
%\sigma \left(
%\frac{2 \cdot (x+y)}{R} + t_\sigma
%\right)
%-
%2 \cdot
%\sigma \left(
%\frac{x+y}{R} + t_\sigma
%\right)
%\\
%&&
%\hspace*{2cm}
%-
%\sigma \left(
%\frac{2 \cdot (x-y)}{R} + t_\sigma
%\right)
%+
%2 \cdot
%\sigma \left(
%\frac{x-y}{R} + t_\sigma
%\right)
%\Bigg)
%\end{eqnarray*}
%
%\[
%f_{mult}(x,y)
%= \sum_{k=1}^4 \gamma_k \sigma(\beta_k x + \tilde{\beta}_k y + t_{\sigma})
%\]
%be the network of
%\autoref{le3}, 
In the first layer we compute
\[
f_{mult}(z_1,z_2), 
f_{mult}(z_3,z_4), 
\dots,
f_{mult}(z_{2^q-1},z_{2^q}), 
\]
which can be done by one layer of $4\times 2^{q-1} \leq 4(N+1)$
neurons. E.g., in case
$z_l=z_{l+1}=x^{(1)}$ we have
\[
f_{mult}(z_l,z_{l+1})
=f_{mult}(x^{(1)},x^{(1)})
\]
or in case $z_l=x^{(d)}$ and $z_{l+1}=1$ we have 
\[
f_{mult}(z_l,z_{l+1})
=f_{mult}(x^{(d)},1).
\]
As a result of this first layer we get a vector of outputs
which has length $2^{q-1}$. Next we pair these outputs and apply $f_{mult}$ again. This procedure is continued until there is only one output left.
Therefore we need $L =q$ hidden layers and
at most $4(N+1)$
neurons in each layer. As the resulting network is a combined and parallelized network, where the output of $f_{mult}$ is the input of $f_{mult}$ in the next hidden layer, we ``melt''  several times the weights of the previous output and the next input layer of the networks. Thus we have 
\[
  |c_{ij}^{(\ell)}| \leq 9R^4
%  \frac{2}{R} \frac{R^2}{4 \sigma''(t_{\sigma})}= \frac{R}{2\sigma''(t_{\sigma})} \leq \frac{R^2}{2\sigma''(t_{\sigma})}
\]
for $\ell \in \{0, \dots, q\}$ and $j\geq 0$ according to \autoref{le2}.
%The last bound also holds for $\ell=q$, $\ell=0$ or $j=0$.
\\
\\
According to \autoref{le2} $f_{mult}$ satisfies
\begin{equation}
  \label{ple4eq1}
|f_{mult}(x,y) - x y| \leq 75\|
\sigma^{\prime \prime \prime}
\|_{\infty}
 \frac{(4^{N+1} a^{N+1})^3}{R} 
 %\leq c_{8} \frac{ 4^{3(N+1)} a^{3(N+1)}}{R}
\end{equation}
for $x,y \in [-4^{N+1} a^{N+1},4^{N+1} a^{N+1}]$.
By  (\ref{ple4eq1}) 
and $ R \geq 75\|
\sigma^{\prime \prime \prime}
\|_{\infty}4^{3(N+1)} a^{3(N+1)}$
we get for any $\ell \in \{1,\dots,N\}$ and any
$z_1,z_2 \in [-(4^{\ell}-1) a^{\ell},(4^{\ell}-1) a^{\ell}]$
\[
|f_{mult}(z_1,z_2)| \leq
|z_1 z_2| + |f_{mult}(z_1,z_2)-z_1 z_2|
\leq
(4^{\ell}-1)^2 a^{2\ell} + 1
\leq
(4^{2\ell}-1) a^{2\ell}.
\]
From this we get successively that all outputs
of 
layer $\ell \in \{1,\dots,q-1\}$
  are contained in the interval
$[-(4^{2^{\ell}}-1) a^{2^{\ell}},(4^{2^{\ell}}-1) a^{2^{\ell}}]$, hence in particular they
  are contained in the interval 
$[-4^{N+1} a^{N+1},4^{N+1} a^{N+1}]$
where inequality  (\ref{ple4eq1}) does hold.
\\
\\
Define $f_{2^q}$ recursively by
\[
f_{2^q}(z_1,\dots,z_{2^q})
=
f_{mult}(f_{2^{q-1}}(z_1,\dots,z_{2^{q-1}}),f_{2^{q-1}}(z_{2^{q-1}+1},\dots,z_{2^q}))
\]
and
\[
f_2(z_1,z_{2})= f_{mult}(z_1,z_{2}),
\]
and set
\[
\Delta_{\ell}=\sup_{z_1,\dots,z_{2^{\ell}}
  \in [-a,a]}
\left|f_{2^{\ell}}(z_1,\dots,z_{2^{\ell}})-  \prod_{i=1}^{2^{\ell}} z_i\right|.
\]
Then
\[
|f_m(\x,y)-
m(\x,y)|
=
|f_m(\x,y)-
y\prod_{k=1}^d (x^{(k)})^{r_k}
|
\leq
\Delta_q
\]
and from
\[
\Delta_1 \leq
75\|
\sigma^{\prime \prime \prime}
\|_{\infty} \frac{4^{3(N+1)} a^{3(N+1)}}{R},
\]
which follows from (\ref{ple4eq1}) and
\begin{eqnarray*}
  && \Delta_q\\
  &&
  \leq
  \sup_{z_1,\dots,z_{2^q}
  \in [-a,a]}
  \left|f_{mult}(f_{2^{q-1}}(z_1,\dots,z_{2^{q-1}}),f_{2^{q-1}}(z_{2^{q-1}+1},\dots,z_{2^q}))\right.
  \\
  &&\left. \hspace*{5cm}
    -
    f_{2^{q-1}}(z_1,\dots,z_{2^{q-1}}) f_{2^{q-1}}(z_{2^{q-1}+1},\dots,z_{2^q})\right|
      \\
      &&
      \quad
      +
  \sup_{z_1,\dots,z_{2^q}
  \in [-a,a]}
      \left|f_{2^{q-1}}(z_1,\dots,z_{2^{q-1}}) f_{2^{q-1}}(z_{2^{q-1}+1},\dots,z_{2^q})\right.
      \\
      && \left. \hspace*{6.5cm}
      -
       \left( \prod_{i=1}^{2^{q-1}} z_i \right) f_{2^{q-1}}(z_{2^{q-1}+1},\dots,z_{2^q})\right|
      \\
      &&
      \quad
      +
  \sup_{z_1,\dots,z_{2^q}
  \in [-a,a]}
      \left|
        \left( \prod_{i=1}^{2^{q-1}} z_i \right)
        f_{2^{q-1}}(z_{2^{q-1}+1},\dots,z_{2^q})
          -
          \left( \prod_{i=1}^{2^{q-1}} z_i \right)
          \prod_{i=2^{q-1}+1}^{2^{q}} z_i
          \right|
          \\
          &&
          \leq
          75\|
\sigma^{\prime \prime \prime}
\|_{\infty} \frac{4^{3(N+1)} a^{3(N+1)}}{R}
          +
          2 \times
          4^{2^{q-1}} a^{2^{q-1}}
          \Delta_{q-1},
  \end{eqnarray*}
where  the last inequality follows from
(\ref{ple4eq1})
and the fact that all outputs of 
layer \linebreak $\ell \in \{1,\dots,q-1\}$
  are contained in the interval
$[-4^{2^{\ell}} a^{2^{\ell}},4^{2^{\ell}} a^{2^{\ell}}]$,
we get
for $\x \in [-a,a]^d$
\begin{align}
\label{le3eq2}
  |f_m(\x,y) - m(\x,y)|
  \leq \Delta_q
 & \leq
75\|
\sigma^{\prime \prime \prime}
\|_{\infty} \frac{4^{3(N+1)} a^{3(N+1)}}{R} 4^{2N} a^{2N} 2N \notag\\
&  \leq 
  150\|
\sigma^{\prime \prime \prime}
\|_{\infty} N 4^{5N+3} a^{5N+3} \frac{1}{R}.
  \end{align}
\textit{Step 2: Approximation of a polynomial:} By using the representation \eqref{le4p}, we can conclude 
\begin{align*}
&\left|p(\x, y_1, \dots, y_{\binom{d+N}{d}}) - \sum_{i=1}^{\binom{d+N}{d}} r_i f_{m_i}(\x, y_1, \dots, y_{\binom{d+N}{d}})\right|\\
&\leq \sum_{i=1}^{\binom{d+N}{d}} |r_i| \left|m_i(\x, y) - f_{m_i}(\x, y) \right|\\
&\leq
\binom{d+N}{d} \bar{r}(p)
 150\|
\sigma^{\prime \prime \prime}
\|_{\infty} N 4^{5N+3} a^{5N+3} \frac{1}{R}.
%& \leq c_{18} \bar{r}(p) \frac{a^{5N+3}}{R}.
\end{align*}
Thus we have a neural network
\begin{align*}
f_{p}\left(\x, y_1, \dots, y_{\binom{d+N}{d}}\right) = \sum_{i=1}^{\binom{d+N}{d}} r_i f_{m_i}\left(\x, y_1, \dots, y_{\binom{d+N}{d}}\right) \in \mathcal{F}\left(L, 4(N+1) \binom{d+N}{d}, \alpha \right)
\end{align*}
with $\alpha = 9\max\{\bar{r}(p),1\}R^4$.
%\begin{align*}
%\end{align*}
\end{proof}

\begin{proof}[\rm{\textbf{Proof of \autoref{le4}}}]
a)
Let $\kappa > 0$ be arbitrary. It is easy to see that the sigmoid activation function satisfies
\begin{align}
\label{indeq1}
\sigma(x) \geq 1 - \kappa \ \mbox{if} \ x \geq \ln\left(\frac{1}{\kappa} -1\right) \ \mbox{and} \ \sigma(x) \leq \kappa \ \mbox{if} \ x \leq -\ln\left(\frac{1}{\kappa} -1\right).
\end{align}
Using this together with the definition for $B_2$ we get for any 
\begin{align*}
\x \in [a^{(1)} + \delta, b^{(1)} - \delta] \times \dots \times [a^{(d)} + \delta, b^{(d)} - \delta]
\end{align*}
and $k \in \{1, \dots, d\}$ 
\begin{align*}
\sigma\left(B_2 (a^{(k)} - x^{(k)})\right) &= \sigma\left(\ln(4-1) \frac{a^{(k)} - x^{(k)}}{\delta}\right) \leq \sigma\left(-\ln\left(\frac{1}{\frac{1}{4}} -1\right)\right)\\
&= \frac{1}{1+e^{\ln(4-1)}} = \frac{1}{1+(4-1)} = \frac{1}{4}
\end{align*}
and
\begin{align*}
\sigma\left(B_2(x^{(k)} - b^{(k)})\right) = \sigma\left(\ln(4-1) \frac{x^{(k)} - b^{(k)}}{\delta}\right) \leq \sigma\left(-\ln\left(\frac{1}{\frac{1}{4}} -1\right)\right) = \frac{1}{4}.
\end{align*}
This implies
\begin{align*}
-B_1 \left(\sum_{k=1}^d\left(\sigma\left(B_2(a^{(k)}-x^{(k)})\right)+\sigma\left(B_2(x^{(k)}-b^{(k)})\right)\right)- \frac{5}{8} d\right) \geq \frac{1}{8}dB_1.
\end{align*}
Using the definition of $B_1$ and \eqref{indeq1} we get
\begin{align*}
f_{ind, [\bold{a}, \bold{b})}(\x) \geq \sigma\left(\frac{1}{8}dB_1\right) = \sigma\left(\ln\left(\frac{1}{\epsilon}-1\right)\right) \geq 1-\epsilon
\end{align*}
and therefore
\begin{align*}
|\mathds{1}_{[\bold{a}, \bold{b})}(\x) - f_{ind, [\bold{a}, \bold{b})}(\x)| = |1-f_{ind, [\bold{a}, \bold{b})}(\x)| \leq \epsilon.
\end{align*}
For $\x \notin [a^{(1)} - \delta, b^{(1)} + \delta] \times \dots \times [a^{(d)} - \delta, b^{(d)} + \delta]$ we know w.l.o.g. that there is a $k \in \{1, \dots, d\}$ which satisfies
\begin{align*}
x^{(k)} < a^{(k)} - \delta.
\end{align*}
Using \eqref{indeq1} and the definition of $B_2$ we can argue similar as above, that
\begin{align*}
\sigma\left(B_2(a^{(k)} - x^{(k)})\right) > \sigma\left(B_2\delta\right) = \sigma\left(\ln(3)\frac{1}{\delta}\delta\right) = \sigma(\ln(4-1)) \geq 1-\frac{1}{4} = \frac{3}{4}
\end{align*}
and therefore (since $\sigma(x) > 0$ for $x \in \R$) 
\begin{align*}
\sum_{k=1}^d \left(\sigma(B_2(a^{(k)} - x^{(k)})) + \sigma(B_2(x^{(k)} - b^{(k)}))\right) \geq \frac{3}{4}d. 
\end{align*}
This implies 
\begin{align*}
-B_1\left(\sum_{k=1}^d\left(\sigma\left(B_2(a^{(k)} - x^{(k)})\right)+\sigma\left(B_2(x^{(k)} - b^{(k)}\right)\right) - \frac{5}{8} d\right) \leq -\frac{1}{8} dB_1.
\end{align*}
Using the definition of $B_1$ and \eqref{indeq1} we get
\begin{align*}
f_{ind, [\bold{a}, \bold{b})}(\x) \leq \sigma\left(-\frac{1}{8} dB_1\right) = \sigma\left(-\ln\left(\frac{1}{\epsilon}-1\right)\right) = \frac{1}{1+\frac{1}{\epsilon}-1} = \epsilon, 
\end{align*}
which leads to 
\begin{align*}
|\mathds{1}_{[\bold{a}, \bold{b})}(\x) - f_{ind, [\bold{a}, \bold{b})}(\x)| = |-f_{ind, [\bold{a}, \bold{b})}(\x)| \leq \epsilon.
\end{align*}
For $\x \in \Rd$ it follows by the definition of the network
%\begin{align*}
%\mathds{1}_{[a^{(1)}, b^{(1)}) \times \dots \times [a^{(d)}, b^{(d)})}(x) \in \{0,1\}
%\end{align*}
%and
\begin{align*}
0 \leq f_{ind, [\bold{a}, \bold{b})}(\x) \leq 1.
\end{align*}
b) Let $f_{id}$ be the network of \autoref{le1} which satisfies
  \[
\left|f_{id}(x^{(i)})-x^{(i)}\right| \leq \frac{\epsilon}{3} \quad \mbox{for } x^{(i)} \in [-2R,2R], i \in \{1, \dots, d\}
\]
(so we choose $R_{id}= 3|\sigma'(t_{\sigma,id})|/ (\Vert \sigma''\Vert_{\infty} 2R^2 \epsilon)$ in \autoref{le1})
  and $f_{ind, [\bold{a}, \bold{b})}$ be the network of \autoref{le4} a), which satisfies
  \begin{align*}
  \left|f_{ind, [\bold{a}, \bold{b})}(\x)-
  \mathds{1}_{[\bold{a}, \bold{b})}(\x)\right| \leq \frac{\epsilon}{3R}
\end{align*}
for $\x \in K_{\delta}$ (so we choose $\epsilon = \epsilon/3R$ in the definition of $B_1$). Furthermore let
  $f_{mult}$ be the network of \autoref{le2} which satisfies
  \[
  |f_{mult}(x,y) - x y|
  \leq
  \frac{\epsilon}{3}
  \quad \mbox{for } x,y \in [-2R,2R]
  \]
  (so we choose $R_{mult} = 1800\Vert \sigma'''\Vert_{\infty} R^3/\epsilon$ in \autoref{le2}). By \autoref{le4} a) we have for $\x \in \R^d$
\begin{align*}
|f_{ind, [\bold{a}, \bold{b})}(\x)| \leq 1 \leq R.
\end{align*}
Furthermore it follows for $s \in [-R,R]$ 
\begin{align*}
|f_{id}(s)| \leq |f_{id}(s)-s|+|s| \leq 2R
\end{align*}
holds. Using this together with the above inequalities we can conclude
  \begin{eqnarray*}
    |f_{test}(\x, \mathbf{a}, \mathbf{b}, s) -
  s \mathds{1}_{[\bold{a}, \bold{b})}(\x)|
    &
    =
    &
    \left|f_{mult}\left(f_{id}(s),f_{ind, [\bold{a}, \bold{b})}(\x)\right) - s \mathds{1}_{[\bold{a}, \bold{b})}(\x)\right|
      \\
      &
      \leq
      &
      \left|  f_{mult}\left(f_{id}(s),f_{ind, [\bold{a}, \bold{b})}(\x)\right) 
-
f_{id}(s) f_{ind, [\bold{a}, \bold{b})}(\x)
\right|
\\
&& +
\left|
f_{id}(s) f_{ind, [\bold{a}, \bold{b})}(\x)
      - s  f_{ind, [\bold{a}, \bold{b})}(\x)\right|
      \\
&& +
\left|
s f_{ind, [\bold{a}, \bold{b})}(\x)
- s  \mathds{1}_{[\bold{a}, \bold{b})}(\x)\right|
     \\
      &
     \leq
     &
     \frac{\epsilon}{3} + 1\frac{\epsilon}{3} + s\frac{\epsilon}{3R}
     = \epsilon
    \end{eqnarray*}
    for $\x \in K_{\delta}$ and $s \in [-R,R]$. With the same argumentation we can conclude that 
    \begin{align*}
     |f_{test}(\x, \mathbf{a}, \mathbf{b}, s) -
  s\mathds{1}_{[\bold{a}, \bold{b})}(\x)| \leq \frac{\epsilon}{3} + 1\frac{\epsilon}{3} + |s| \times 1 \leq 2|s|
    \end{align*}
    for $\x \in \Rd$. It is easy to see that the network is contained in the class $\mathcal{F}(3, 2+2d, \alpha)$ with the weights of the network bounded by 
    \begin{align*}
    \alpha = \max\left\{\alpha_{ind, [\bold{a}, \bold{b})}\frac{2}{R_{mult}}, c_{10}R_{id} \frac{2}{R_{mult}}, c_{20} R_{mult}^2, c_{10} R_{id}, \alpha_{ind, [\bold{a}, \bold{b})}\right\} = c_{14} \max\left\{\frac{R^6}{\epsilon^2}, \frac{1}{\delta}\right\}.
%    
%     \max\left\{c_{20}\frac{1}{\epsilon^2}R^5, \frac{8}{d} \ln\left(\frac{3R}{\epsilon}-1\right), \frac{\ln(3)}{\delta}\right\} \leq c_{10} \max\left\{\frac{R^5}{\epsilon^2}, \frac{1}{\delta}\right\}.
    \end{align*}
\end{proof}

\begin{proof}[\rm{\textbf{Proof of \autoref{le9}}}] The proof is divided into \textit{two} steps.\\ 
\textit{Step 1: Network architecture:} The first five hidden layers of $f_{w_{\P_2}}$ approximate the value of $(C_{\mathcal{P}_{2}}(\x))_{left}$
and shift the value of $\x$ in the next hidden layer, respectively.  This can be done as described in $\bm{\hat{\phi}}_{1, 2}$ and $\bm{\hat{\phi}}_{2,2}$ in the proof of \autoref{le7} with $d+M^d d(2+2d)$ neurons per layer, where all weights in the network are bounded by $\alpha_{test} \max\{\alpha_{id}, a\}$ (as defined in \autoref{le7}). The sixth and seventh hidden layer then compute the functions
\begin{eqnarray*}
w_{\P_2,j}(\x) &&=\left(1-\frac{M^2}{a}\left|(C_{\mathcal{P}_{2}}(\x))_{left}^{(j)} + \frac{a}{M^2} -x^{(j)}\right| \right)_+\\
  &&=
  \left(
  \frac{M^2}{a}
  \left(
x^{(j)} - (C_{\mathcal{P}_{2}}(\x))_{left}^{(j)} 
  \right)
  \right)_+
  \\
  &&
  \quad
  -
  2\left(
  \frac{M^2}{a}
  \left(
  x^{(j)} - (C_{\mathcal{P}_{2}}(\x))_{left}^{(j)}
  - \frac{a}{M^2}
  \right)
  \right)_+
 \\
  &&
  \quad
  +
  \left(
  \frac{M^2}{a}
  \left(
  x^{(j)} - (C_{\mathcal{P}_{2}}(\x))_{left}^{(j)}
  - \frac{2a}{M^2}
  \right)
  \right)_+, \quad j \in \{1, \dots, d\}.
\end{eqnarray*}
This can be done using the network
\begin{align*}
f_{ReLU} \in \mathcal{F}(2, 4, \alpha_{ReLU})
\end{align*}
with
\begin{align*}
\alpha_{ReLU} = c_{21}a^6M^{4p}
\end{align*}
from \autoref{le8}, 
which satisfies
\begin{align}
\label{le9eq1}
\left|f_{ReLU}(z) - \max\{z,0\}\right| \leq \frac{1}{M^{2p}}
\end{align}
for all $z \in [-2^{2(d+1)+1}aM^2,2^{2(d+1)+1}aM^2]$ (here we choose 

\begin{align*}
R=2^{6(d+1)}a^3M^{2p+6}(208
\max \left\{
\| \sigma^{\prime \prime}\|_{\infty},
\| \sigma^{\prime \prime \prime}\|_{\infty},1
\right\})
\end{align*}
 in \autoref{le8}). We set
\begin{align}
\label{eq300}
  f_{w_{{\P_2},j}}(\x) &= f_{ReLU}\left(
  \frac{M^2}{a}
  \left(
\hat{\phi}_{1,2}^{(j)} - \hat{\phi}_{2, 2}^{(j)} 
  \right)
  \right) -2f_{ReLU}\left(
\frac{M^2}{a}
  \left(
  \hat{\phi}_{1, 2}^{(j)} - \hat{\phi}_{2, 2}^{(j)}
  - \frac{a}{M^2}
  \right) 
  \right)\notag \\
& \quad + f_{ReLU}\left(
  \frac{M^2}{a}
  \left(
  \hat{\phi}_{1, 2}^{(j)} - \hat{\phi}_{2, 2}^{(j)}
  - \frac{2a}{M^2}
  \right)
  \right).
\end{align}
The product of $w_{\P_2,j}(\x)$ $(j \in \{1, \dots, d\})$
can then be computed by a network $f_m$ described in the first step of the proof of \autoref{le3}, where we choose $y=1$, $r_1 = \dots = r_d =1$ and $N+1=d$, $x^{(j)} = f_{w_{{\P_2},j}}(\x)$ and  $R=M^{2p}$ with weights bounded by $\alpha_m = c_{22}M^{4p}$. Finally we set
\begin{align*}
f_{w_{\P_2}}(\x) = f_m\left(f_{w_{{\P_2},1}}(\x), \dots, f_{w_{{\P_2},d}}(\x)\right).
\end{align*}
This network lies in the class
\begin{align*}
\mathcal{F}\left(5+2+\lceil\log_2(d)\rceil, \max\{12d, d+M^dd(2+2d)\}, \alpha_{w_{\P_2}}\right)
\end{align*}
with
\begin{align*}
\alpha_{w_{\P_2}} = \max\{\alpha_{test} \alpha_{id}, \alpha_{test}\alpha_{ReLU}, \alpha_{ReLU}\alpha_m\} = c_{3}\left(\max\left\{a, \|f\|_{C^q([-a,a]^d)}\right\}\right)^7 M^{6p+6+2d}.
\end{align*}
\\
\\
\textit{Step 2: Approximation error:}
In case that
\begin{align*}
\x \in \bigcup_{k \in \{1, \dots, M^{2d}\}} \left(C_{k,2}\right)_{1/M^{2p+2}}^0
\end{align*}
we can bound the value of $|\hat{\phi}_{1, 2}^{(j)}-\hat{\phi}_{2,2}^{(j)}|$ $(j \in \{1, \dots, d\})$ using \eqref{fs1} by $3a$, such that the inputs of $f_{ReLU}$ in \eqref{eq300} are contained in the interval where \eqref{le9eq1} holds. Thus we have
\begin{align*}
&\left|f_{w_{\P_2, j}}(\x) - w_{\P_2, j}(\x)\right|\\
 &\leq \left|f_{ReLU}\left(
  \frac{M^2}{a}
  \left(
\hat{\phi}_{1,2}^{(j)} - \hat{\phi}_{2, 2}^{(j)} 
  \right)
  \right) - \left(
  \frac{M^2}{a}
  \left(
\hat{\phi}_{1,2}^{(j)} - \hat{\phi}_{2, 2}^{(j)} 
  \right)
  \right)_+\right|\\
  & \quad + 2 \left|f_{ReLU}\left(
  \frac{M^2}{a}
  \left(
\hat{\phi}_{1,2}^{(j)} - \hat{\phi}_{2, 2}^{(j)} 
 -\frac{a}{M^2}\right)
 \right) - \left(
  \frac{M^2}{a}
  \left(
\hat{\phi}_{1,2}^{(j)} - \hat{\phi}_{2, 2}^{(j)} 
  - \frac{a}{M^2}\right) 
  \right)_+\right|\\
  & \quad + \left|f_{ReLU}\left(
  \frac{M^2}{a}
  \left(
\hat{\phi}_{1,2}^{(j)} - \hat{\phi}_{2, 2}^{(j)} 
  -\frac{2a}{M^2}\right)
  \right) - \left(
  \frac{M^2}{a}
  \left(
\hat{\phi}_{1,2}^{(j)} - \hat{\phi}_{2, 2}^{(j)} 
  -\frac{2a}{M^2}\right)
  \right)_+\right|\\
  &\quad +\left|\left(
  \frac{M^2}{a}
  \left(
\hat{\phi}_{1,2}^{(j)} - \hat{\phi}_{2, 2}^{(j)} 
  \right)
  \right)_+ - \left(
  \frac{M^2}{a}
  \left(
\phi_{1,2}^{(j)} - \phi_{2, 2}^{(j)} 
  \right)
  \right)_+\right|\\
  & \quad + 2 \left|\left(
  \frac{M^2}{a}
  \left(
\hat{\phi}_{1,2}^{(j)} - \hat{\phi}_{2, 2}^{(j)} 
 -\frac{a}{M^2}\right)
 \right)_+ - \left(
  \frac{M^2}{a}
  \left(
\phi_{1,2}^{(j)} - \phi_{2, 2}^{(j)} 
  - \frac{a}{M^2}\right) 
  \right)_+\right|\\
  & \quad + \left|\left(
  \frac{M^2}{a}
  \left(
\hat{\phi}_{1,2}^{(j)} - \hat{\phi}_{2, 2}^{(j)} 
  -\frac{2a}{M^2}\right)_+
  \right) - \left(
  \frac{M^2}{a}
  \left(
\phi_{1,2}^{(j)} - \phi_{2, 2}^{(j)} 
  -\frac{2a}{M^2}\right)
  \right)_+\right|\\
  & \leq \frac{4}{M^{2p}} + \frac{8}{M^{2p} a} \leq \frac{12}{M^{2p}}, 
\end{align*}
where we used for the last inequality that $\max\{x,0\}$ is Lipschitz continuous, that 
\begin{align*}
|\hat{\phi}^{(j)}_{1,2} - \phi_{1,2}^{(j)}|\leq \frac{1}{4M^{2p+2}}, \quad (j \in \{1, \dots, d\})
\end{align*}
according to \eqref{le7eq5} and that
\begin{align*}
|\hat{\phi}^{(j)}_{2,2} - \phi_{2,2}^{(j)}| \leq \frac{1}{2M^{2p+2}}, \quad (j \in \{1, \dots, d\})
\end{align*}
according to \eqref{le7eq16}. Since $M^{2p} \geq 12$ we can bound the value of each network $f_{w_{{\P_2},j}}$ by
\begin{align*}
\left|f_{w_{{\P_2},j}}(\x) - w_{{\P_2},j}(\x)\right|+\left|w_{{\P_2},j}(\x)\right| \leq 2,
\end{align*}
where we used that $\left|w_{{\P_2},j}(\x)\right| \leq 1$.
According to \eqref{le3eq2} (where we set $a=2$ and $N+1=d$) $f_m$ approximates the product of its input components with an error of size
\begin{align}
\label{w10}
\frac{c_{23} (d-1) 4^{5d-2} 2^{5d-2} }{M^{2p}} \leq \frac{c_{3}}{M^{2p}}.
\end{align}
We set
\begin{align*}
  \bar{f}_{w_{{\P_2},j}}(\x) &= \left(
  \frac{M^2}{a}
  \left(
\hat{\phi}_{1, 2}^{(j)} - \hat{\phi}_{2, 2}^{(j)} 
  \right)
  \right)_+ -2\left(
\frac{M^2}{a}
  \left(
  \hat{\phi}_{1, 2}^{(j)} - \hat{\phi}_{2, 2}^{(j)}
  - \frac{a}{M^2}
  \right) 
  \right)_+ \\
 & \quad  + \left(
  \frac{M^2}{a}
  \left(
  \hat{\phi}_{1, 2}^{(j)} - \hat{\phi}_{2, 2}^{(j)}
  - \frac{2a}{M^2}
  \right)
  \right)_+
\end{align*}
for $j \in \{1, \dots, d\}$. Using this and the above mentioned results we can bound the error of our network $f_{w_{\P_2}}$ by
\begin{align*}
\left|f_{w_{\P_2}}(\x) - w_{\P_2}(\x)\right| &\leq \left|f_m\left(f_{w_{{\P_2},1}}(\x), \dots, f_{w_{{\P_2},d}}(\x)\right) - \prod_{j=1}^d f_{w_{{\P_2},j}}(\x)\right|\\
& \quad + \left|\prod_{j=1}^d f_{w_{{\P_2},j}}(\x) - \bar{f}_{w_{\P,1}}(\x) \prod_{j=2}^d f_{w_{{\P_2},j}}(\x)\right|\\
& \quad + \dots \\
& \quad +\left|\prod_{j=1}^{d-1} \bar{f}_{w_{\P_2,j}}(\x) f_{w_{{\P_2},d}}(\x) - \prod_{j=1}^{d} \bar{f}_{w_{\P,j}}(\x)\right|\\
& \quad + \left|\prod_{j=1}^d \bar{f}_{w_{{\P_2},j}}(\x) - w_{\P_2,1}(\x) \prod_{j=2}^d \bar{f}_{w_{{\P_2},j}}(\x)\right|\\
& \quad + \dots \\
& \quad +\left|\prod_{j=1}^{d-1} w_{\P_2,j}(x) \bar{f}_{w_{{\P_2},d}}(\x) - \prod_{j=1}^{d} w_{\P_2,j}(\x)\right|\\
& \leq \frac{c_{3}}{M^{2p}}+\frac{2^{d-1}}{M^{2p}} + \dots + \frac{2}{M^{2p}}+d\frac{12}{M^{2p}}\\
& \leq \frac{\max\left\{c_{3}, 2^d, 12d\right\}}{M^{2p}}
\end{align*}
where we have used that $|\bar{f}_{w_{{\P_2},j}}(\x)| \leq 1$ and $|w_{\P_2,j}(\x)| \leq 1$ $(j \in \{1, \dots, d\})$. 
\\
\\
According to \autoref{le7} the values of $\bm{\hat{\phi}}_{1, 2}$ and $\bm{\hat{\phi}}_{2,2}$ are bounded by 
\begin{align*}
|\hat{\phi}_{1, 2}^{(s)}| \leq 2a \ \mbox{and} \ |\hat{\phi}_{2,2}^{(s)}| \leq 2^{2(d+1)} a
\end{align*}
for $s \in \{1, \dots, d\}$
in case that 
\begin{align*}
\x \notin \bigcup_{k \in \{1, \dots, M^{2d}\}} \left(C_{k,2}\right)_{1/M^{2p+2}}^0.
\end{align*}
Thus we have
\begin{align*}
|\hat{\phi}_{1, 2}^{(s)} - \hat{\phi}_{2,2}^{(s)}| \leq 2^{2(d+1)+1} a
\end{align*}
and the input of $f_{ReLU}$ in the definition of $f_{w_{\P_2}, j}$ lies in the interval, where \eqref{le9eq1} holds. Using triangle inequality, we can then bound 
\begin{align*}
|f_{w_{\P_2,j}}(\x)|  \leq |f_{w_{\P_2,j}}(\x) - \bar{f}_{w_{\P_2,j}}(\x)| + |\bar{f}_{w_{\P_2,j}}(\x)| \leq 2,
\end{align*}
where we use that $|\bar{f}_{w_{\P_2,j}}(\x)| \leq 1$ for $j \in \{1, \dots, d\}$. Using \eqref{w10} and triangle inequality again, this leads to 
\begin{align*}
|f_{w_{\P_2}}(\x)| \leq \left|f_{w_{\P_2}}(\x) - \prod_{j=1}^d f_{w_{{\P_2},j}}(\x)\right| + \left|\prod_{j=1}^d f_{w_{{\P_2,j}}}(\x)\right| \leq 2^{d+1}
\end{align*}
for $\x \in [-a,a)^d$, where we have used that
\begin{align*}
M^{2p} \geq c_{23} (d-1) 4^{5d-2} 2^{5d-2}.
\end{align*}
\end{proof}
\begin{proof}[\rm{\textbf{Proof of \autoref{le12}}}] The proof is divided into \textit{two} steps.\\
\textit{Step 1: Network architecture:} Let $f_{w_{\P_2}}$ be the network of \autoref{le8} and $f_{net, \mathcal{P}_{2},true}$ be the network of \autoref{le11}. To multiply the network $f_{net, \mathcal{P}_{2},true}$ by $f_{w_{\P_2}}$ we use the network $f_{mult}$ from \autoref{le2}. Furthermore we use the network $f_{id}^t$ with $t \in \N$ from \autoref{le1} to synchronize the number of hidden layers of the networks $f_{net, \mathcal{P}_{2},true}$ and $f_{w_{\P_2}}$.
\\
\\
The final network is given by
\begin{align*}
f_{net}(\x) = f_{mult}\left(f_{id}^{t_1}(f_{w_{\P_2}}(\x)), f_{id}^{t_2}(f_{net, \mathcal{P}_{2}, true}(\x))\right)
\end{align*}
with
\begin{align*}
t_1 = \max\{\lceil \log_2(q+1)\rceil - \lceil\log_2(d)\rceil,0\} \ \mbox{and} \ t_2 = \max\{\lceil \log_2(d)\rceil - \lceil\log_2(q+1)\rceil,0\}.
\end{align*}
Now it is easy to see that this combined and parallelized network consists of 
\begin{align*}
L=8+\lceil \log_2(\max\{d, q+1\})\rceil
\end{align*}
and 
\begin{align*}
r=\max\left\{\left(\binom{d+q}{d} + d\right)M^d(2+2d)+d, 4(q+1)\binom{d+q}{d}\right\} +M^d(2d+2)+12d
\end{align*}
neurons per layer, where all weights in the network are bounded by
\begin{align*}
\max\{\alpha_{w_{\P_2}}, \alpha_{net, \P_2, true}, \alpha_{mult}\} = \alpha_{net, \P_2, true}.
\end{align*}
\textit{Step 2: Approximation error:} To analyze the approximation error of the network we use that
\begin{align}
\label{appfmult}
\left|f_{mult}(x,y) - xy\right| \leq \frac{1}{M^{2p}}
\end{align}
for all $x,y$ contained in
\begin{align*}
&\left[-2^{d+1}\max\left\{\|f\|_{\infty, [-a,a]^d},1\right\},  2^{d+1}\max\left\{\|f\|_{\infty, [-a,a]^d} ,1\right\}\right].
\end{align*}
Here we have chosen $R=75\|\sigma'''\|_{\infty}2^{3(d+1)}\left(\max\left\{\|f\|_{\infty, [-a,a]^d}, 1\right\}\right)^3 M^{2p}$ in \autoref{le2}. Furthermore we have
\begin{align}
\label{appfid}
\left|f_{id}^t(x) -x\right| \leq \frac{1}{M^{2p}}
\end{align}
for $x \in [-2 \max\{\max_{\x \in [-a,a]^d} f(\x), 1\}, 2 \max\{\max_{\x \in [-a,a]^d} f(\x), 1\}]$. Here we choose $t_{\sigma, id} = 0$ and 
\begin{align*}
R_{id} = (t-1)\|\sigma''\|_{\infty} 8 \max\{ \|f\|_{\infty, [-a,a]^d}, 1\}^2
\end{align*} in \autoref{le1}.
\\
\\
In case that
\begin{align*}
\x \in \bigcup_{k \in \{1, \dots, M^{2d}\}} \left(C_{k,2}\right)_{2/M^{2p+2}}^0,
\end{align*}
the value of $\x$ is neither contained in
\begin{align}
\label{noset1}
\bigcup_{k \in \{1, \dots, M^{2d}\}}
C_{k,2} \setminus (C_{k,2})^0_{1/M^{2p+2}}
\end{align}
nor contained in
\begin{align}
\label{noset2}
\bigcup_{k \in \{1, \dots, M^{2d}\}}
(C_{k,2})^0_{1/M^{2p+2}} \setminus (C_{k,2})^0_{2/M^{2p+2}}
.
\end{align}
Thus the network $f_{w_{\P_2}}(\x)$ approximates $w_{\P_2}(\x)$ with an error of size
\begin{align}
\label{appwv}
\frac{\max\left\{c_{3}, 2^d, 12d\right\}}{M^{2p}}
\end{align}
and $f_{net, \mathcal{P}_{2},true}(\x)$ approximates $f(\x)$ with an error of size
\begin{align}
\label{200}
\frac{2c_{2}\left(\max\left\{a, \|f\|_{C^q([-a,a]^d)} \right\}\right)^{5q+3}}{M^{2p}}.
\end{align}
Thus both networks are contained in the interval, where \eqref{appfid} holds. Here we use that
\begin{align*}
M^{2p} \geq \max\left\{c_{3}, 2^d, 12d\right\} 
\end{align*}
and 
\begin{align*}
M^{2p} \geq 2c_{2}\left(\max\left\{a, \|f\|_{C^{q}([-a,a]^d)}\right\}\right)^{5q+3}
\end{align*}
by assumption. Thus the value of $f_{id}^{t_1}(f_{w_{\P_2}}(\x))$ and $f_{id}^{t_2}(f_{net, \mathcal{P}_{2},true}(\x))$ are contained in the interval where \eqref{appfmult} holds. Using triangle inequality, this implies
\begin{align*}
&\left|f_{mult}\left(f_{id}^{t_1}(f_{w_{\P_2}}(\x)), f_{id}^{t_2}(f_{net, \mathcal{P}_{2}, true}(\x))\right) - w_{\P_2}(\x)f(\x)\right|\\
& \leq \left|f_{mult}\left(f_{id}^{t_1}(f_{w_{\P_2}}(\x)), f_{id}^{t_2}(f_{net, \mathcal{P}_{2}, true}(\x))\right) - f_{id}^{t_1}(f_{w_{\P_2}}(\x))f_{id}^{t_2}(f_{net, \mathcal{P}_{2}}(\x))\right|\\
& \quad +\left|f_{id}^{t_1}(f_{w_{\P_2}}(\x))f_{id}^{t_2}(f_{net, \mathcal{P}_{2}}(\x))-f_{w_{\P_2}}(\x)f_{id}^{t_2}(f_{net, \mathcal{P}_{2}}(\x))\right|\\
& \quad + \left|f_{w_{\P_2}}(\x)f_{id}^{t_2}(f_{net, \mathcal{P}_{2}}(\x)) - f_{w_{\P_2}}(\x)f_{net, \mathcal{P}_{2}}(\x)\right|\\
& \quad + \left|f_{w_{\P_2}}(\x)f_{net, \mathcal{P}_{2}}(\x) - w_{\mathcal{P}_{2}}(\x) f_{net, \mathcal{P}_{2}}(\x)\right|\\
  & \quad +
  \left|
  w_{\P_2}(\x)f_{net, \mathcal{P}_{2}}(\x) - w_{\P_2}(\x)f(\x)
  \right|\\
& \leq \frac{c_{24}\left(\max\left\{a, \|f\|_{C^q([-a,a]^d)} \right\}\right)^{5q+3}}{M^{2p}}.
\end{align*}
In case that $\x$ is contained in \eqref{noset1} the value of $f_{net, \mathcal{P}_{2}, true}$ is of size $1/M^{2p}$ according to \autoref{le11}. Furthermore we have 
\begin{align*}
\left|f_{w_{\P_2}}(\x)\right| &\leq 2^{d+1}.
\end{align*}
 Thus $f_{w_{\P_2}}(\x)$ and $f_{net, \mathcal{P}_{2}, true}$ are contained in the interval, where \eqref{appfid} holds and $f_{id}^{t_1}(f_{w_{\P_2}}(\x))$ and $f_{id}^{t_2}(f_{net, \mathcal{P}_{2}, true}(\x))$ are contained in the interval, where \eqref{appfmult} holds. Together with 
\begin{align*}
w_{\P_2}(\x) \leq \frac{1}{aM^{2p}}
\end{align*}
and the triangle inequality it follows 
\begin{align*}
&\left|f_{mult}\left(f_{w_{\P_2}}(\x), f_{net, \mathcal{P}_{2}, true}(\x)\right) - w_{\P_2}(\x)f(\x)\right|  \leq \frac{c_{25}\left(\max\{\|f\|_{\infty, [-a,a]^d} , 1\}\right)^2}{M^{2p}}.
\end{align*}
In case that $\x$ is in \eqref{noset2} but not in \eqref{noset1} the network $f_{net, \mathcal{P}_{2}}(\x)$ approximates $f(\x)$ with an error as in \eqref{200}. Furthermore, $f_{w_{\P_2}}(\x) \in [-2,2]$
approximates $w_{\P_2}(\x)$ with an error as in \eqref{appwv}.
The value of $f_{net, \P_2, true}$ is bounded by $|f_{net, \P_2}(\x)| +1$.
Hence $f_{w_{\P_2}}(\x)$ and $f_{net, \mathcal{P}_{2}, true}(\x)$ are contained in the interval, where \eqref{appfid} holds and $f_{id}^{t_1}(f_{w_{\P_2}}(\x))$ and $f_{id}^{t_2}(f_{net, \mathcal{P}_{2}, true}(\x))$ are contained in \eqref{appfmult}. Together with 
\begin{align*}
w_{\P_2}(\x) \leq \frac{2}{aM^{2p}}
\end{align*}
and the triangle inequality it follows again
\begin{align*}
&\left|f_{mult}\left(f_{w_{\P_2}}(\x), f_{net, \mathcal{P}_{2}, true}(\x)\right) - w_{\P_2}(\x)f(\x)\right| \leq \frac{c_{26} \left(\max\{\|f\|_{\infty, [-a,a]^d} , 1\}\right)^2}{M^{2p}}.
\end{align*} 
\end{proof}

\begin{proof}[\rm{\textbf{Proof of Theorem 1}}]
By increasing $a$, if necessary, it suffices to show that there exists a network $f_{net}$ satisfying
\begin{align*}
&\sup_{\x \in [-a/2,a/2]^d} \left|f(\x) - f_{net}(\x)\right| \leq \frac{c_{27}\left(\max\left\{a, \|f\|_{C^q([-a,a]^d)} \right\}\right)^{5q+3}}{M^{2p}}.
\end{align*}

Let $\mathcal{P}_1$ and $\mathcal{P}_2$ be the partitions defined as in \eqref{partition}. We set
\begin{align*}
\mathcal{P}_{1,1} = \mathcal{P}_1 \ \mbox{and} \ \mathcal{P}_{2,1} = \mathcal{P}_2
\end{align*}
and define for each $v \in \{2, \dots, 2^d\}$ partitions $\mathcal{P}_{1,v}$ and $\mathcal{P}_{2,v}$, which are modifications of $\mathcal{P}_{1,1}$ and $\mathcal{P}_{2,1}$ where at least one of the components it shifted by $a/M^2$. 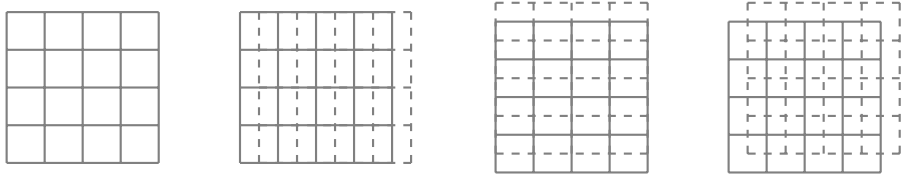
\begin{figure}[h!]
\centering
\begin{minipage}{0.20\textwidth}
\begin{tikzpicture}
\draw[step=0.5cm,color=gray, thick] (-1,-1) grid (1,1);
\end{tikzpicture}
\end{minipage}
\begin{minipage}{0.22\textwidth}
%\begin{figure}[h]
%\begin{center}
\begin{tikzpicture}
\draw[step=0.5cm,color=gray, thick] (-1,-1) grid (1,1);
\draw[step=0.5cm,color=gray, thick, xshift=0.25cm, dashed] (-1,-1) grid (1,1);
%Partition 1
%vertikal
%\draw[dotted] (-1.5, -2) -- (-1.5, 2);
%\draw[dotted] (-0.5, -2) -- (-0.5, 2);
%\draw[dotted] (0.5, -2) -- (0.5, 2);
%\draw[dotted] (1.5, -2) -- (1.5, 2);
%\draw[dotted] (2.5, -2) -- (2.5, 2);
%
%%horizontal
%\draw[dotted] (-1.5, -2) -- (2.5, -2);
%\draw[dotted] (-1.5, -1) -- (2.5, -1);
%\draw[dotted] (-1.5, 0) -- (2.5, 0);
%\draw[dotted] (-1.5, 1) -- (2.5, 1);
%\draw[dotted] (-1.5, 2) -- (2.5, 2);
\end{tikzpicture}
%\end{center}
%\caption{Illustration in case of $d=2$ and $M=4$}
%\label{fig8}
%\end{figure}
\end{minipage}
\begin{minipage}[c]{0.20\textwidth}
%\begin{figure}[h]
%\begin{center}
%\begin{scope}[yshift=0.5cm]
\begin{tikzpicture}
\draw[step=0.5cm,color=gray, thick, yshift=0.5cm] (-1,-1) grid (1,1);
\draw[step=0.5cm,color=gray, thick, yshift=0.75cm, dashed] (-1,-1) grid (1,1);
%Partition 2
%vertikal
%\draw[dashed] (-2, -1.5) -- (-2, 2.5);
%\draw[dashed] (-1, -1.5) -- (-1, 2.5);
%\draw[dashed] (0, -1.5) -- (0, 2.5);
%\draw[dashed] (1, -1.5) -- (1, 2.5);
%\draw[dashed] (2, -1.5) -- (2, 2.5);
%\draw[dashed] (-2, -1.5) -- (-2, 2.5);
%
%\draw[dashed] (-2, -1.5) -- (2, -1.5);
%\draw[dashed] (-2, -0.5) -- (2, -0.5);
%\draw[dashed] (-2, 0.5) -- (2, 0.5);
%\draw[dashed] (-2, 1.5) -- (2, 1.5);
%\draw[dashed] (-2, 2.5) -- (2, 2.5);
%\end{scope}
%\draw[step=1cm,color=red, dotted] (-2,-2) grid (2,2) + (2,2);
\end{tikzpicture}
%\end{center}
%\caption{Illustration in case of $d=2$ and $M=4$}
%\label{fig9}
%\end{figure}
\end{minipage}
\begin{minipage}{0.20\textwidth}
\begin{tikzpicture}
\draw[step=0.5cm,color=gray, thick, yshift=0.25cm] (-1,-1) grid (1,1);
\draw[step=0.5cm,color=gray, thick, yshift=0.5cm, xshift=0.25cm, dashed] (-1,-1) grid (1,1);
\end{tikzpicture}
\end{minipage}
\caption{Shifted partitions for the case $d=2$}
\label{fig8}
\end{figure}
The idea is illustrated for the case $d=2$ in \hyperref[fig8]{Fig.\ref*{fig8} }. Here one sees, that
for $d=2$
there exist $2^2=4$ different partitions, if we shift our partition along at least one component by the same additional summand. We denote by $C_{k, 2,v}$ the corresponding cubes of the partition $\mathcal{P}_{2,v}$ $(k \in \{1, \dots, M^{2d}\})$.
\\
\\
The idea of the proof of \autoref{th1} is to compute a linear combination of networks $f_{net, \mathcal{P}_{2,1}}, \dots, f_{net, \mathcal{P}_{2,2^d}}$ of \autoref{le7} (where the $\mathcal{P}_{2,v}$ are treated as $\mathcal{P}_2$ in \autoref{le7}, respectively). 
%Since $a/M^2 > 2/M^{2p+2}$ (for some $M$ properly chosen) there exists at least one partition $\mathcal{P}_{2,v}$, where $x$ does not lie close to the borders of a cube, i.e. there is at least one $v \in \{1, \dots, 2^d\}$ where 
%\begin{align*}
%x \in \bigcup_{\bk \in \{1, \dots, M^2\}^d} \left(C_{\bk, 2,v}\right)^0_{1/M^{2p+2}}.
%\end{align*}
%Here $C_{\bk, 2,v}$ denote the cubes of partition $\mathcal{P}_{2,v}$.
To avoid that the approximation error of the networks increases close to the border of some cube of the partitions, we multiply each value of $f_{net, \mathcal{P}_{2,v}}$ with a weight 
\begin{align}
\label{w_v}
w_v(\x) = \prod_{j=1}^d \left(1- \frac{M^2}{a}\left|(C_{\mathcal{P}_{2,v}}(x))_{left}^{(j)} + \frac{a}{M^2} - x^{(j)}\right|\right)_+.
\end{align}
It is easy to see that $w_v(\x)$ is a linear tensorproduct B-spline
which takes its maximum value at the center of $C_{\P_{2,v}}(\x)$, which
is nonzero in the inner part of $C_{\P_{2,v}}(\x)$ and which
vanishes
outside of $C_{\P_{2,v}}(\x)$. Consequently
we have \linebreak $w_1(\x)+ \dots + w_{2^d}(\x) = 1$
for $\x \in [-a/2,a/2]^d$.
Let $f_{net,1}, \dots, f_{net, 2^d}$ be the networks of \autoref{le12}
corresponding to the partitions
$\mathcal{P}_{1,v}$ and $\mathcal{P}_{2,v}$ $(v \in \{1, \dots, 2^d\})$, respectively. 
Since $[-a/2, a/2]^d \subset [-a+a/M^2, a+a/M^2)^d$ each $\P_{1,v}$ and $\P_{2,v}$ form a partition of a set which contains $[-a/2,a/2]^d$ and the error bounds of \autoref{le12} hold for each network $f_{net, v}$ on $[-a/2,a/2]^d$.
We set
\begin{align*}
f_{net}(\x) = \sum_{v=1}^{2^d} f_{net,v}(\x).
\end{align*}
Since
\begin{align*}
f(\x) = \sum_{v=1}^{2^d} w_v(\x)f(\x)
\end{align*}
it follows directly by \autoref{le12}
\begin{align*}
  \left|f_{net}(\x) - f(\x)\right| &= \left|\sum_{v=1}^{2^d} f_{mult}\left(f_{id}^{t_1}(f_{w_v}(\x)), f_{id}^{t_2}(f_{net, \mathcal{P}_{2,v}, true}(\x))\right) - \sum_{v=1}^{2^d} w_v(\x)f(\x)\right|\\
& \leq \sum_{v=1}^{2^d} \left|f_{mult}\left(f_{id}^{t_1}(f_{w_v}(\x)), f_{id}^{t_2}(f_{net, \mathcal{P}_{2,v}, true}(\x))\right) -  w_v(\x)f(\x)\right|\\
& \leq \frac{c_{28}\left(\max\left\{a, \|f\|_{C^q([-a,a]^d)} \right\}\right)^{5q+3}}{M^{2p}},
\end{align*}
where $t_1$ and $t_2$ are chosen as in the proof of \autoref{le12}.
\end{proof}
 \end{document}